\begin{document}

\title{CINDES: Classification induced neural density estimator and simulator}
\author{Dehao Dai, Jianqing Fan, Yihong Gu, and Debarghya Mukherjee}
\date{}

\maketitle

\begin{abstract}
Neural network–based methods for (un)conditional density estimation have recently gained substantial attention, as various neural density estimators have outperformed classical approaches in real-data experiments. Despite these empirical successes, implementation can be challenging due to the need to ensure non-negativity and unit-mass constraints, and theoretical understanding remains limited. In particular, it is unclear whether such estimators can adaptively achieve faster convergence rates when the underlying density exhibits a low-dimensional structure. This paper addresses these gaps by proposing a structure-agnostic neural density estimator that is (i) straightforward to implement and (ii) provably adaptive, attaining faster rates when the true density possesses a low-dimensional structure. Another key contribution of our work is to show that the proposed estimator integrates naturally into sampling pipelines, most notably score-based diffusion models, where it achieves provably faster convergence when the underlying density is structured. We validate its performance through extensive simulations and a real-data application.
\end{abstract} 
\noindent{\bf Keywords}: Conditional density estimation, Neural networks, Score-based generative modeling

\section{Introduction}
Density estimation is a fundamental and now classical problem in Statistics and Machine Learning (ML), which has been widely applied in astronomy, climatology, economics, medicine, genetics, physiology, and other fields.  
Starting from the middle of the 20th century, numerous methods have been developed (e.g., Kernel-based \citep{parzen1962estimation,nadaraya1964estimating,watson1963estimation}, series-based \citep{efroimovich1982estimation}, etc., see \cite{silverman2018density} for a comprehensive discussion).  
While traditional methods for density estimation are well-analyzed and straightforward to implement, they frequently encounter significant challenges due to the curse of dimensionality in high-dimensional observations. Therefore, to achieve faster convergence rates, it is crucial to employ methods that can effectively exploit any underlying low-dimensional structure in the target density.   This is where the neural network techniques come into play. 
Recently, in the context of nonparametric regression, several researchers \citep{kohler2021rate, schmidt2020nonparametric,fan2024noise,fan2024factor} have demonstrated that deep neural networks (DNNs) can efficiently exploit low-dimensional or compositional structures in the underlying regression function, resulting in estimators that achieve faster convergence rates depending on the nature of the structure. This paper demonstrates that similar properties hold in the context of density estimation; if a high-dimensional (conditional or unconditional) density function exhibits an underlying low-dimensional or compositional structure, then DNN-based density estimators can indeed achieve faster rates of convergence by leveraging this structure.

Density estimation using DNNs has recently gained significant popularity, resulting in the development of various approaches. Broadly, these methods can be categorized into two classes: (i) \textit{explicit density estimation}, and (ii) \textit{implicit density estimation} as in generative AI. 
Explicit density estimation directly parameterizes the conditional or unconditional density function using a neural network, and the model parameters are learned from the data. 
Notable examples include minimizing a squared error loss for density estimation using DNNs \citep{bos2023supervised}, and learning conditional densities through variational autoencoders (VAEs) \citep{kingma2013auto, rezende2014stochastic, higgins2017beta, tolstikhin2017wasserstein}. 
On the other hand, implicit modeling focuses on generating samples from the target distribution without explicitly specifying its density function. A variety of such methods have been proposed in the literature, including generative adversarial networks (GANs) \citep{goodfellow2014generative, arjovsky2017wasserstein, mescheder2018training, liang2021well, singer2018mathematics, tang2023minimax, stephanovitch2023wasserstein}, score-based generative models \citep{song2019generative, song2020score, benton2023nearly,chen2024overview, huang2024denoising}, and normalizing flows. The common goal of these methods is to learn a transformation from a known noise distribution (e.g., Gaussian or Uniform) to the target distribution, enabling sample generation by passing noise through the learned map.  These methods do not output the estimated density, but generate data from the estimated density.

While numerous density estimation methods based on deep neural networks have been proposed, it remains theoretically unclear whether such approaches can adapt to the unknown structural properties of the target density and achieve faster convergence rates. For instance, consider a $d$-dimensional random variable $X = (X_1, \dots, X_d)$ with a Markov factorization: $f(x) = \prod_{j=1}^{d-1} f_j(x_{j+1}, x_j) = \exp(\sum_j \log f_j(x_{j+1}, x_j))$. Although this is a high-dimensional density, its structure is governed by $(d-1)$ two-dimensional components, suggesting the possibility of avoiding the curse of dimensionality. However, traditional methods (e.g., kernel-based/series-based methods) fail to exploit such structure without explicit prior knowledge.
In contrast, recent work in nonparametric regression has shown that deep neural networks can adapt to unknown low-dimensional structures, such as hierarchical compositions (e.g., \cite{kohler2021rate, schmidt2020nonparametric, fan2024factor, bhattacharya2024deep}). 
Building on this insight, we recast the density estimation problem as a classification task and demonstrate that neural networks can achieve faster convergence rates for conditional or unconditional density estimation, provided the underlying density exhibits inherent low-dimensional structure. Our method is broadly applicable and can be used in both explicit and implicit density estimation.

\subsection{Our contribution}

In this paper, we propose a structure-agnostic density estimation procedure using deep neural networks for both conditional and unconditional density estimation. Our methodology is inspired by the probabilistic classification approach \citep{qin1998inferences,bickel2007discriminative,cheng2004semiparametric}, which was primarily developed for estimating density ratios, which is also known as the \textit{likelihood ratio trick} in simulation-based inference \citep{cranmer2020frontier}, and thus we refer to our method as \emph{Classification induced neural density estimator and
simulator (CINDES)}. 

We now briefly outline our methodology for estimating the conditional density function. Suppose we observe a dataset $\cD_n = \{(X_1, Y_1), \dots, (X_n, Y_n)\}$, where $X \in \cX \subseteq \mathbb{R}^{d_x}$ and $Y \in \cY \subseteq \mathbb{R}^{d_y}$. Assume that $X \sim \mu_{0, x}$ and $Y \mid X \sim p_0(y \mid X)$ for some unknown $(\mu_{0, x}, p_0)$ and we aim to estimate the conditional density $p_0$. 
Our proposed procedure consists of two key steps: 
\begin{enumerate}
\setlength\itemsep{-.2em}
    \item Generate a set of ``fake responses" $\{\tilde Y_1, \dots, \tilde Y_n\}$ uniformly from (some superset of) $\cY$.

    \item Construct a synthetic dataset $\tilde \cD_n = \{(X_1, \tilde Y_1), \dots, (X_n, \tilde Y_n)\}$, and estimate the Bayes classifier distinguishing samples from the original dataset $\cD_n$ and the synthetic dataset $\tilde \cD_n$.\footnote{Although it is possible to generate more than $n$ samples from $\tilde Y$, this would not provide any additional information about the conditional density of $Y$ given $X$; it would only reveal information about the marginal density of $\tilde Y$, which is already known. Therefore, generating $n$ samples is sufficient, and producing more would not yield any further benefit.} 
\end{enumerate}
Since each $\tilde Y_i$ is sampled independently of $(X, Y)$ and uniformly over a superset of $\cY$, the joint density of $(X, \tilde Y)$ is equal to $C\mu_{0, x}$ where $C^{-1} = \mathrm{Leb}(\cY)$, the Lebesgue measure of $\cY$.  
Moreover, by construction, the support of $(X, \tilde Y)$ covers that of $(X, Y)$. 
Therefore, the density ratio between $(X, Y)$ and $(X, \tilde Y)$ is proportional to $p_0(y \mid x)$, and consequently, estimating $p_0(y \mid x)$ reduces to estimating this density ratio. We then estimate this density ratio by employing probabilistic classification methods -- specifically, by learning a classifier to distinguish between samples from the distributions of $(X, Y)$ and $(X, \tilde Y)$. Further details are provided in Algorithm~\ref{algo:ede}. Our proposed procedure naturally extends to unconditional density estimation by setting $X = \varnothing$, i.e., effectively ignoring covariates during estimation.

The key advantage of our \emph{reduction} from density estimation to classification is that we reformulate the problem of density estimation as an M-estimation task. This reformulation enables the use of a broad class of function spaces (including deep neural networks) to estimate the conditional density function. 
Moreover, since our estimation procedure essentially solves a classification task with a smooth cross-entropy loss function (likelihood function for logistic regression), it lends itself naturally to gradient descent-based optimization techniques. Consequently, the proposed method is computationally efficient and well-suited for practical implementation. 

Under many scenarios such as image generations in generative AI, it is required to generate the data from the estimated density, rather than estimating the density itself.  In this case, the density is implicitly estimated, that is, given the covariate $X=x$, generate $\hat{Y}$ such that the conditional distribution of $\hat{Y}$ given $X=x$ is close to $p_0(Y|X=x)$. 
In this paper, we show that one can further build a sample-efficient implicit density estimator on top of the explicit density estimator illustrated above by leveraging a score-based diffusion model \citep{song2019generative, ho2020denoising}. 
In particular, we establish that one can utilize our estimated explicit density estimator $\hat{p}$ and Monte Carlo sampling to obtain an accurate estimate of the diffused score function. Substituting this estimated score function into the backward process can yield the same error rate in implicit density estimation as the explicit counterpart. 
Furthermore, we rigorously prove that as long as the ground-truth density function can be estimated well 
our proposed reduction from classification (resp. sampling via discretized backward process) can yield explicit (resp. implicit) density estimates at the same error rate. 

Another key advantage of CINDES lies in its ability to use the representational power of deep neural networks, enabling it to effectively adapt to the unknown low-dimensional structure of the density function automatically. We discuss this by examples of the Markov random field and the hierarchical composition model in Section \ref{sec:MRF_HCM}. Briefly speaking, CINDES achieves accelerated convergence rates when the log-density function possesses certain structural properties, such as each variable depends on only a few other coordinates.
As an example, suppose that we observe $Y_1, \dots, Y_n \sim p_0$ and aim to estimate $p_0$, $Y_i \in \reals^{d_y}$. 
If the coordinates of $Y$ are independent and the marginal densities are $\beta$-H\"{o}lder smooth, CINDES can circumvent the curse of dimensionality and estimate $p_0$ at the rate $n^{-\beta/(2\beta + 1)}$.
We summarize our contributions below. 

\begin{enumerate}
    \item We propose a framework for estimating both conditional and unconditional density functions. For the explicit density estimation part, the key idea is to reformulate the density estimation task as a domain classification problem, where we estimate the Bayes classifier distinguishing between real and synthetically generated samples. For the implicit estimation part, the key idea is to show that the explicit estimate of density can further yield an accurate score function estimate. Methodologically, our method is structure-agnostic and computationally efficient, leveraging the efficient implementations of neural network classifications.
    
    \item Theoretically, we show that our proposed procedure can attain the same statistical rate of convergence in explicit and implicit density estimation as if running nonparametric regression when the regression function coincides with the ground-truth density function.
    
    \item As evidence supporting the above claims, we demonstrate that our CINDES estimator algorithmically learns and effectively adapts to low-dimensional structures that neural networks excel at in the (log-)density function, leading to faster convergence rates when such a structure is present, yet is blind to our method.
    
    \item Numerically, we demonstrate the efficacy of our method through extensive simulations and real data analysis.
\end{enumerate}

\noindent \textbf{Organization.} The rest of the paper is organized as follows. In Section \ref{sec:main_method}, we introduce the problem setup, provide a relevant background, and describe our proposed methodology. 
Theoretical properties of the estimator are established in Section \ref{sec:theory}. 
Section \ref{sec:MRF_HCM} presents several examples that demonstrate the effectiveness of our method in estimating structured (un)conditional density functions. 
We conduct extensive simulation studies in Section \ref{sec:simulation} to compare the performance of our approach with other state-of-the-art density estimation methods. In Section \ref{sec:real_data}, we illustrate the practical utility of our method through a real data application. All proofs and additional technical details are provided in the Appendix.

\noindent \textbf{Notation.} We use the upper case $(X, Y)$ to represent random variables/vectors and denote their instances as $(x,y)$. Define $[n]=\{1,\ldots, n\}$. For a vector $x = (x_1,\ldots, x_d)^\top\in \mathbb{R}^d$, we let $\|x\|_2=(\sum_{j=1}^d x_j^2)^{1/2}$. 
We let $a\lor b = \max \{a, b\}$ and $a\land b = \min\{a, b\}$. We use $a(n) \lesssim b(n)$, $b(n) \gtrsim a(n)$, or $a(n) = O(b(n))$ if there exists some constant $C>0$ such that $a(n) \le Cb(n)$ for any $n \ge 3$. Denote $a(n) \asymp b(n)$ if $a(n)\lesssim b(n)$ and $a(n) \gtrsim b(n)$.

\section{Explicit and Implicit Density Estimation}
\label{sec:main_method}
In this section, we present our methodology for both implicit and explicit density estimation using deep neural networks. For the reader’s convenience, the section is organized into three parts: the problem setup is described in Section~\ref{sec:setup}, relevant background is provided in Section~\ref{sec:background}, and our proposed methodology is detailed in Section~\ref{sec:methods}.

\subsection{Setup}
\label{sec:setup}
We consider a supervised learning framework where we observe $n$ i.i.d. pairs \\ $\cD_n = \{(X_1, Y_1), \ldots, (X_n, Y_n)\}$ sampled from the joint distribution of $(X, Y) \in \mathcal{X} \times \mathcal{Y} \subseteq \mathbb{R}^{d_x} \times \mathbb{R}^{d_y}$, where $X \sim \mu_{0,x}$ denotes the marginal distribution of the covariate, and $Y | X = x \sim p_0(y|x)$ denotes the conditional distribution of the response given the covariate.
Let $\mathsf{d}(p, q)$ denote a pre-specified divergence or distance measure between two probability distributions $p(\cdot)$ and $q(\cdot)$ on $\mathcal{Y}$ --- for instance, total variation distance, Hellinger distance, Kullback-Leibler divergence, or a more general $f$-divergence.
Given an estimator $\hat{p}$ of the conditional density $p_0$ and a divergence measure $\mathsf{d}$, we define the \emph{average risk} of the estimator by 
\begin{align}
\label{eq:avg_risk}
    \mathsf{R}_{\mathsf{d}}(p_0, \hat{p}) = \mathbb{E}_{X\sim \mu_{0, x}}\left[\mathsf{d}\big( p_0(\cdot|X), \hat{p}(\cdot|X)\big)\right] = \int \mathsf{d}\big( p_0(\cdot|x), \hat{p}(\cdot|x)\big) \mu_{0, x}(dx),
\end{align}
A standard divergence measure that we consider in this paper is the \emph{total variation} (TV) distance, $\mathsf{TV}(p, q) = \int |p(y) - q(y)| dy$. 
Other examples include several widely used divergence-based measures—such as the $\chi^2$-divergence and Kullback-Leibler divergence --- all of which are special cases of the broader class of $f$-divergences. For the convenience of the readers, we provide the definition of $f$-divergence below: 
\begin{definition}
\label{def:f-divergence}
Given an univariate convex function $f \in [0,\infty)\to (-\infty, \infty]$ satisfying 
\begin{enumerate}
    \item $|f(x)| < \infty$ for any $x>0$ and $f(1) = 0$,
    \item $f(x)$ has uniformly bounded second derivative in $(\epsilon, 1/\epsilon)$ for any $\epsilon>0$, 
\end{enumerate}
we define a $f$-divergence on the space of probability measures as: $\mathsf{D}_{f}(p,q) = \int_{\mathcal{Y}} f(p(y)/q(y)) q(y) dy$. The associated risk of $\hat p$ is defined as $\mathsf{R}_{\mathsf{D}_f}(p_0, \hat p) = \int \mathsf{D}_f(\hat p(\cdot \mid x), p_0(\cdot \mid x)) \ \mu_{0, x}(dx)$.  
\end{definition}
The choices of $f(t) = t \log t$, $f(t) = (t-1)^2$, $f(t) = |t-1|$, and $f(t) = (\sqrt{t} - 1)^2$ correspond, respectively, to Kullback–Leibler (KL) divergence, Pearson $\chi^2$-divergence, total variation distance, Hellinger distance.
In the explicit density estimation setting, our objective is to find $\hat{p}_0(y|x)$, an estimator of $p_0(y|x)$, based on the data set $\cD_n$ such that the averaged risk is small. 
For the implicit density estimation problem, our goal is to simulate the data such that its conditional distribution given $X$ is as close to that of $\hat Y$ given $X$ as in the applications of generative AI with guidance.  More precisely, we want to learn a transformation $\hat{h}$ that maps a known noise distribution $U \sim \mu_U$ (typically Gaussian or uniform) and the covariate $X$ to a synthetic output $\hat Y = \hat h(X, U)$, such that the conditional distribution of $\hat Y$ given $X$ closely approximates the true conditional distribution of $Y$ given $X$, based on the observed data $\cD_n$. We use the notation $p_{\hat{Y}(U)|X}(y|x)$ to denote this estimated conditional density of $Y$ given $X$ induced by $\hat h$. 
We use the subscript $\hat{Y}(U)|X$ to emphasize that the implicit density estimator does not yield an explicit form of the conditional density $p_0$, but instead approximates it through generated samples. The resulting distribution $p_{\hat{Y}(U)|X}(y|x)$ is determined by both the transformation $\hat{h}$ and the noise distribution $\mu_U$. Here we also use $U$ to emphasize that the randomness of $\hat{Y}$ given fixed $X$ comes from the noise $U$ it injects, and will abbreviate it when the defined noise $U$ is clear from context. 
Similar to the explicit density estimation setup, here also we evaluate the performance of the implicit conditional density estimator through its average risk, albeit the average is now taken over the distribution of $X \sim \mu_{0, x}$: 
\begin{align}
\mathsf{R}_{\mathsf{d}}(p_0, p_{\hat{Y}|X}) = \mathbb{E}_{X\sim \mu_{0,x}}\left[\mathsf{d}\big(p_0(\cdot|X), {p}_{\hat{Y}(U)|X}(\cdot|X)\big)\right]. \label{fan1}
\end{align}
\begin{remark}
We also note that the above setup naturally encompasses unconditional density estimation as a special case. In the unconditional setting, we observe i.i.d. samples $Y_1, \ldots, Y_n \sim p_0(y)$, where $p_0 : \mathbb{R}^{d_y} \to \mathbb{R}^+ \cup {0}$ is an unknown density function. The goal in this case is to estimate $p_0(y)$ based solely on these observations. As before, the performance of unconditional explicit and implicit density estimators is evaluated using a discrepancy measure, which simplifies to $\mathsf{d}(p_0, \hat{p})$ and $\mathsf{d}(p_0, p_{\hat{Y}})$, respectively.
The unconditional density estimation can be written as a special case of the conditional density estimation via setting $X = \phi$ (the null set) and thus $\mu_0(dx, dy) = \mu_0(dy) = p_0(y) dy$. 
\end{remark}

\subsection{Background}
\label{sec:background}
As outlined in the introduction, this paper employs deep neural networks for estimating conditional density functions using both implicit and explicit approaches. In particular, for implicit estimation, we obtain the transformation $\hat{h}$ using a score-based diffusion process.
Before presenting the details of our proposed method, we provide a brief overview of deep neural networks and diffusion models for the ease of the readers. 
In brief, we will utilize deep neural networks as scalable, non-parametric techniques for explicit density estimation, and leverage the concept of time-reversal stochastic differential equations in diffusion models to construct an implicit density estimator based on the explicit neural density estimator. 

\subsubsection{Deep neural networks}
In this paper, we adopt the fully connected deep neural network with ReLU activation function $\sigma(x) = \max\{0, x\}$, and we denote it as \textit{deep ReLU network}. Let $L$ be any positive integer and $ (d_0, \ldots , d_{L+1}) = (d, N, \ldots, N, 1) $ with any positive integer $N$. A \textit{deep ReLU network with width $N$ and depth $L$} is a function mapping from $\mathbb{R}^{d_0}$ to $\mathbb{R}^{d_{L+1}}$ with the form 
\begin{equation}
\label{eq: g composition}
g(x) = \mathcal{L}_{L+1} \circ \bar{\sigma} \circ  \mathcal{L}_{L+1} \circ \bar{\sigma} \circ \cdots \circ \mathcal{L}_{2} \circ \bar{\sigma} \circ \mathcal{L}_1 (x),
\end{equation}
where $\mathcal{L}_i(x) = W_i x + b_i$ is an affine transformation with the weight matrix $W_i \in \mathbb{R}^{d_i \times d_{i-1}}$ and bias vector $b_i \in \mathbb{R}^{d_i}$, and $\bar{\sigma}: \mathbb{R}^{d_i} \rightarrow \mathbb{R}^{d_i}$ applies the ReLU activation to each entry of a $\mathbb{R}^{d_i}$-valued vector. Here, the equal width is for presentation simplicity. 

\begin{definition}[Deep ReLU network class]
\label{def:nn}
    Define the family of deep ReLU network truncated by $R$ with depth $L$, width $N$ as $\mathcal{H}_{\mathtt{nn}}(d, L, N, R)=\left\{\widetilde{g}= T_R g: g \text{ of form } \eqref{eq: g composition}\right\} $
where $T_R$ is the truncation operator at level $R>0$ to each entry of a vector, defined as $T_R u = \mathrm{sgn}(u)(|u| \wedge R)$. 
\end{definition}

\subsubsection{Diffusion model}
Let $\nu$ be the target distribution on $\mathbb{R}^d$ from which we aim to generate samples. The core idea behind diffusion models is to define a forward process, typically governed by a stochastic differential equation (SDE), that gradually adds noise to samples from $\nu$, transforming them into a simple reference distribution (e.g., Uniform or Gaussian). A corresponding backward process, given by the time-reversed SDE, is then used to transform noise back into samples from $\nu$ (e.g., see \cite{bakry2013analysis} for details). To be specific, for the forward process, we consider the following Ornstein-Uhlenbeck (OU) process
\begin{align}
\label{eq:OU_def}
    dY_t = -\beta_t Y_t dt + \sqrt{2\beta_t} dB_t \qquad X_0 \sim \nu.
\end{align} 
where $\beta_t \in \mathbb{R}^+$ is a time-dependent weighting function to be specified later, and $(B_t)_{t \ge 0}$ denotes a standard Brownian motion in $\mathbb{R}^d$. We use $p_t$ to denote the marginal distribution of $Y_t$, and define the score function of $p_t$ as $\nabla_z \log p_t(z) \in \mathbb{R}^d$, where the $j$-th coordinate is given by $[\nabla_z \log p_t(z)]_j = \partial{z_j} \log p_t(z)$.
For a fixed timestep $T$, it is known from \cite{anderson1982reverse,haussmann1986time} that the backward process of Equation \eqref{eq:OU_def}, $(\breve{Y}_t)_{t\in [0, T]} = (Y_{T-t})_{t\in [0,T]}$, satisfies the following SDE
\begin{align}
\label{eq:backward}
    d\breve{Y}_t = \beta_{T-t} \left(\breve{Y}_t + 2\nabla_z \log p_{T-t}(\breve{Y}_t)\right)dt + \sqrt{2\beta_{T-t}} dW_t \qquad \breve{Y}_0 \sim p_T \,,
\end{align} 
where $(W_t)_{t\in [0,T]}$ is another Brownian motion. 
Given $p_T$ converges to a standard normal distribution exponentially fast when $T\to \infty$, we can generate samples through the SDE in \eqref{eq:backward} and initialization $\breve{Y}_0 \sim \mathcal{N}(0, I_d)$ if the ground truth diffused score function $\nabla_z \log p_{T-t}(\cdot)$ is known. 
In practice, one typically estimates the score function from the observed data and then discretizes the backward process \eqref{eq:backward} to transform a noise sample into a draw from the target distribution.  For an overview, see \cite{tang2025score}.
 

\subsection{Our Method}
\label{sec:methods}
Given the observations $\cD_n = {(X_i, Y_i)}_{i=1}^n$, the first step of our method involves generating a synthetic sample of ``fake'' responses $\tilde{Y}_1, \ldots, \tilde{Y}_n$ independently drawn from the uniform distribution over $\mathcal{Y}$. 
In the second step, we perform logistic regression on the combined dataset ${(Z_i, 1)}_{i=1}^n \cup {(Z_{i}, 0)}_{i=n+1}^{2n}$, where each $Z_i \in \mathbb{R}^{d_x + d_y}$ is defined as, $Z_i =  [Y_i, X_i]$ for $1 \le i \le n$ and $Z_i = [\tilde{Y}_i, X_i]$ for $i > n$.
Here, by $[y, x]$, we mean the concatenation of two vectors $y\in \mathbb{R}^{d_y}$ and $x\in \mathbb{R}^{d_x}$. 
Let $\sigma(t) = \frac{1}{1 + e^{-t}}$ denote the sigmoid function. The explicit conditional density estimator of $p_0(y\mid x)$ is defined as the exponential of the following empirical risk minimizer:
\begin{align} \label{fan2}
    \hat{p}(y|x) = \exp(\hat{f}(y,x)) \cdot \int_{\mathcal{Y}} dy ~~~ &\text{where} ~~~ \hat{f} \in \argmin_{f\in \mathcal{H}_{\mathtt{nn}}(d_y+d_x, L, N, R) } \hat{\mathsf{L}}(f),
\end{align} 
where the collection of neural networks $\mathcal{H}_{\mathtt{nn}}(d_y+d_x, L, N, R)$ is defined in Definition \ref{def:nn}, and 
the loss function $\hat{\mathsf{L}}(f)$ is defined as: 
\begin{align}
\label{eq:loss}
    \hat{\mathsf{L}}(f) = \frac{1}{n} \sum_{i=1}^n \left[-\log(\sigmoid(f(Y_i, X_i))) -\log(1-\sigmoid(f(\tilde{Y}_{i}, X_{i})))\right].
\end{align} 
To understand the intuition behind the loss function in  Equation \eqref{eq:loss}, consider the corresponding limiting population loss as $n \uparrow \infty$; it is immediate from the law of large numbers that: 
\begin{equation*}
\textstyle
    \hat{\mathsf{L}}(f) \overset{P}{\longrightarrow} \mathsf{L}(f) \triangleq \mathbb{E}_{X, Y, \tilde Y}\left[-\log \sigma(f(X,Y))  - \log (1- \sigma(f(X, \tilde{Y})))\right].
\end{equation*} 
Furthermore, one can also show that the logarithm of $p_0(y\mid x)$ minimizes the population loss over the space of all measurable functions: 
\begin{equation*}
\textstyle
f^\star = \log \left[p_0(y|x) / \int_{\mathcal{Y}} dy\right]  = \argmin_{f} \ \mathsf{L}(f) \,.
\end{equation*}
Therefore, under a suitable choice of model complexity hyperparameters for the function class $\mathcal{H}_{\mathtt{nn}}(d_y + d_x, L, N, M)$, standard arguments for the consistency of $M$-estimators suggest that $\hat{p}(y \mid x)$ consistently estimates the true conditional density $p_0(y \mid x)$ as $n \to \infty$. We summarize our procedure for constructing this explicit density estimator in \cref{algo:ede}.

We note that the use of the uniform distribution for generating $\tilde Y$ is not essential. In general, any reference distribution whose support contains $\mathcal{Y}$ and whose density on $\mathcal{Y}$ is bounded away from both $0$ and $\infty$ can be employed, as this condition ensures the stability of the density ratio. Thus, the normal or $t$-distributions are equally valid. Usually, we would like the covariance, denoted by $\Sigma_0$, of the reference distribution to be similar to that of the data distribution.  For the low-dimensional case, we can take $\Sigma_0$ as the sample covariance, and for the high-dimensional case, we can take a regularized covariance matrix, such as POET \citep[low-rank plus diagonal version to ensure positive definiteness]{fan2013large}, as $\Sigma_0$.  The resulting estimator \eqref{fan2} needs to be multiplied by the reference density.
In our paper, we mainly adopt the uniform distribution for simplicity in presenting the technical results.

\begin{algorithm}[!t]
\caption{Neural Explicit Density Estimator}
\begin{algorithmic}[1]
\State \textbf{Input:} Data $\mathcal{D} = \{(X,Y)\}_{i=1}^n$.
\State \textbf{Input:} Neural network hyper-parameters $L, N, R$ in \cref{def:nn}.
\State Draw $n$ i.i.d. fake responses $\tilde{Y}_1, \ldots, \tilde{Y}_n$ from $\mathrm{Uniform}(\mathcal{Y})$.
\State Run empirical risk minimization $\hat{f} \in \argmin_{f\in \mathcal{G}(d_y+d_x, L, N, R) } \hat{\mathsf{L}}(f)$ with loss $\hat{\mathsf{L}}$ defined in \eqref{eq:loss}.
\State \textbf{Output:} $\hat{p}(y|x) = \exp(\hat{f}(y, x))$ (with normalization (optional), see Remark~\ref{remark4}). 
\end{algorithmic}
\label{algo:ede}
\end{algorithm}
\vspace{1em}

\noindent \textbf{Implicit density estimation and sample generation. } Having outlined our approach to explicit conditional density estimation using neural networks, we now turn to the task of implicit conditional density estimation. This procedure begins with an estimate of the true conditional density $p_0(y \mid x)$—for example, the explicit estimator $\hat{p}(y \mid x)$ obtained via Algorithm~\ref{algo:ede}. Using this estimate, we approximate the score function of the diffused distribution, which is then plugged into the backward process described in Equation~\eqref{eq:backward} to generate new samples from the target distribution. 
Let us now elaborate on this step. 
Throughout this paper, we adopt a constant weighting function $\beta_t = 1$. 
We fix $x \in \cX$, and then start with the forward diffusion process (Equation \eqref{eq:OU_def}) $Y_0 \sim p_0(y \mid X = x)$: 
\begin{align*}
    dY_t = -Y_t dt + \sqrt{2} dB_t \qquad Y_0 \sim p_0(\cdot|X=x),
\end{align*} 
The conditional distribution of $Y_t$ given $Y_0$ can be written as: 
\begin{align*}
    Y_t|Y_0 \sim \mathcal{N}\left(m_t Y_0, \sigma_t^2 I_{d_y}\right) \qquad \text{with} \qquad m_t = e^{-t}, \ \sigma_t = \sqrt{1-e^{-2t}}.
\end{align*} 
Denote $p_t(\cdot|x)$ as the induced density of $Y_t$ given $X=x$. The change-of-variable formula yields: 
\begin{equation}
\textstyle
\label{eq:score_func_def}
    s^\star(y, t|x) := \nabla_y \log p_t(y|x) = \frac{1}{\sigma_t^2} \frac{\mathbb{E}_{U\sim N(0,I_{d_y})} \left[U \cdot  p_0\left(\frac{y-\sigma_t \cdot U}{m_t} \big| x\right)\right]}{\mathbb{E}_{U\sim N(0,I_{d_y})} \left[  p_0\left(\frac{y-\sigma_t \cdot U}{m_t} \big| x\right)\right]}.
\end{equation} 
Given $\hat{p}$, an estimator of $p_0$, we can adopt the following plug-in-based estimation
\begin{equation}
\textstyle
\label{eq:score-est}
    \hat{s}_K(y, t|x) = \frac{1}{\sigma_t^2} \frac{\frac{1}{K} \sum_{k=1}^K \left[U_{t,k} \cdot  \hat{p}\left(\frac{y-\sigma_t \cdot U_{t,k}}{m_t} \big| x\right)\right]}{\frac{1}{K} \sum_{k=1}^K \left[  \hat{p}\left(\frac{y-\sigma_t \cdot U_{t,k}}{m_t} \big| x\right)\right]} \qquad \text{with} \qquad U_{t,1}\ldots, U_{t,K} \overset{\mathrm{i.i.d.}}{\sim} \mathcal{N}(0, I_{d_y})
\end{equation} to estimate the score function.
Observe that here is an additional layer of randomness in $\hat{s}(\cdot,t|x)$ given observed data $\cD_n$, which comes from the simulated $U_{t,1},\ldots, U_{t,K}$. However, these random variables are independent of both $\cD_n$ and the choice of $(x, y)$. 

We are now ready to present our implicit density estimator. 
As mentioned earlier, this estimator is constructed by discretizing the backward diffusion process, and towards that, we follow the scheme presented in \cite{huang2024denoising}.  
Let $T > 0$ denote the terminal time of the forward diffusion process, and let $\delta \in (0,1)$ represent the early-stopping threshold for the backward process. Specifically, we run the backward process from time $T$ down to time $\delta$, where $\delta$ is chosen to be close to zero. 
Furthermore, let $M$ be the number of discretization steps for the backward process. We pick the discretization timesteps $t_0, t_1,\ldots, t_{M}$ as: 
\begin{align}
\label{eq:discrete-timestep}
    t_m = \begin{cases}
        \frac{(T-1)m}{M/2} & \qquad m \le M/2\\
        T-\delta^{(2m-M)/M} & \qquad m > M/2
    \end{cases}
\end{align} 
The first half of the timesteps (i.e., $1 \le m \le M/2$) is picked uniformly from $[0, T-1]$, and the second half of the timesteps (i.e., $M/2 < m \le M$) grows exponentially in $[T-1, T-\delta]$. 
The two time intervals correspond to $[1,T]$ and $[\delta, 1]$ respectively in the forward process $\{Y_t\}$. 
Given a fixed $x$ (recall that we aim to generate sample from $p_0(y \mid x)$), we use the following $M$-step discretized SDE to generate $\hat{Y} := \breve{Y}_{t_M}$:  
\begin{align}
\begin{split}
    \breve{Y}_0 &\gets W_0 \\
    \breve{Y}_{t_{m+1}} &\gets \frac{1}{\alpha_m} \left[\breve{Y}_{t_{m}} + (1-\alpha_m) \hat{s}(\breve{Y}_{t_m}, T-t_m|x) \right] + \sqrt{\frac{(1-\alpha_m)(1-\bar{\alpha}_m)}{1-\bar{\alpha}_{m+1}}} W_m
\end{split}
\end{align} where $W_0,\ldots, W_M$ are i.i.d. $\mathcal{N}(0, I_{d_y})$ distributed variables that are independent of both $\cD_n$ and the random variables $\{U_{t_m, k}\}_{m\in [M] \cup \{0\}}$, independent of $\cD_n$, used to construct $\hat s_k$ for $1 \le k \le K$, as proposed in Equation \eqref{eq:score-est}. 
The coefficients $(\alpha_m, \bar{\alpha}_{m})$ are defined as: 
\begin{align}
\label{eq:alpha-def}
\alpha_m = e^{-2(t_{m+1} - t_m)} \qquad \text{and} \qquad \bar{\alpha}_m = e^{-2(T - t_m)}.
\end{align} 
We expect that distribution of $\hat{Y} := \breve{Y}_{t_M}(\{W_m, U_{t_m, 1},\ldots, U_{t_m, K}\}_{m=1}^M)$ to be close to $p_\delta(\cdot|x)$, whose distribution is also close to $p_0(\cdot|x)$ when $\delta$ is small. 
See the entire procedure in \cref{algo:ide}.

\begin{algorithm}[!t]
\caption{Neural Implicit Density Estimator}
\begin{algorithmic}[1]
\State \textbf{Input:} Explicit density estimator $\hat{p}(\cdot|x)$ and fixed $X=x$. 
\State \textbf{Input:} Diffusion Hyper-parameters $T$, $\delta$, discretization hyper-parameter $M$, $K$.
\State Sample $W_0,\ldots, W_M$ from $\mathcal{N}(0, I_{d_y})$.
\State Initialize $\breve{Y}_0 \gets W_0$.
\For{$m \in \{0,\ldots, M-1\}$}
    \State Sample $\mathcal{U}_{t_m} = \{{U}_{t_m, 1},\ldots, {U}_{t_m, K}\} \sim \mathcal{N}(0, I_{d_y})$.
    \State Calculate $\hat{s}_{K,m} \gets \hat{s}_K(\breve{Y}_{t_m}, T-t_m|x)$ by \eqref{eq:score-est} using $\hat{p}$ and $\mathcal{U}_{t_m}$.
    \State $\breve{Y}_{t_{m+1}} \gets \frac{1}{\alpha_m} \left[\breve{Y}_{t_{m}} + (1-\alpha_m) \hat{s}_{K,m} \right] + \sqrt{\frac{(1-\alpha_m)(1-\bar{\alpha}_m)}{1-\bar{\alpha}_{m+1}}} W_m$ with $\alpha_m$ and $\bar{\alpha}_m$ in \eqref{eq:alpha-def}.
\EndFor
\State \textbf{Output:} $\hat{Y} = \breve{Y}_{t_M}$.
\end{algorithmic}
\label{algo:ide}
\end{algorithm}

\section{Theory}
\label{sec:theory}
In this section, we present our main theoretical results, which characterize the estimation errors of both the explicit and implicit conditional density estimators for $p_0(y \mid x)$.
We assume to have access to data $\mathcal{D}_n = \{(X_i, Y_i)\}_{i=1}^n \stackrel{i.i.d}{\sim} (X,Y)$, where $X \sim \mu_{0, x}$ and $Y|X=x\sim p_0(y|x=x)$. We further collect samples $\tilde{Y}_1, \dots, \tilde Y_n \stackrel{i.i.d}{\sim}$ Uniform$(\mathcal{Y})$ independent of $\mathcal{D}_n$. For simplicity, we assume $\mathcal{Y} = [0,1]^{d_y}$. 
We begin by stating two conditions required to establish our theoretical results. 

\begin{condition}[Distributions of the observations]
\label{cond:regularity}
We assume the covariates to have bounded support; in particular, we assume $\|X\|_\infty \le 1$. Furthermore, given any $X = x$, we assume that the conditional density $p_0(y|X = x)$ is supported on $[0,1]^{d_y}$ and bounded from away from zero and infinity, namely satisfing  $\sup_{y\in \mathcal{Y}} \max\{p_0(y|X), 1/p_0(y|X)\} \le c_1$  $X$-a.s., where $c_1$ is some universal constant.
\end{condition}
\begin{condition}[Neural network truncation hyper-parameter]
\label{cond:trunc}
We use $\mathcal{G} = \mathcal{H}_{\mathtt{nn}}(d_x+d_y, L, N, R)$ with $\log(c_1) \le R \le \log(c_2)$ for some constant $c_2$. This lower bound ensures that the collection of neural networks is large enough to learn $\log p_0$, while remaining bounded. 
\end{condition}

\begin{remark}
    Condition~\ref{cond:regularity} is a standard assumption in the nonparametric estimation literature. The boundedness of the domain of $(X, Y)$ is mainly adopted for notational and conceptual simplicity. Although we assume $\cX = [-1, 1]^{d_x}$ and $\cY = [0, 1]^{d_y}$ throughout the paper, our results and analysis extend naturally to the case where $\cX$ and $\cY$ are compact subsets of $\mathbb{R}^{d_x}$ and $\mathbb{R}^{d_y}$, respectively. Moreover, the analysis can be further generalized to unbounded domains using standard truncation arguments. Since such an extension does not introduce any new conceptual insights, we omit it for the sake of clarity of exposition. 
   The assumption that the conditional density $p_0(y \mid x)$ is bounded above and below is also standard in the literature, as it ensures that the log-density remains bounded. Nevertheless, this assumption can be relaxed using a similar truncation-based argument applied to the log density, without altering the key ideas.  
\end{remark}

Having stated the necessary conditions, we are now ready to present our main theorems, which provide non-asymptotic error bounds for both the implicit and explicit conditional density estimators. 
For any function $f(y, x)$, we define the $L_2$ norm as
\begin{align}
\label{eq:l2}
    \|f\|_2 = \sqrt{\int \left(\int \left|f(y, x)\right|^2 dy\right) \mu_{0,x}(dx)}
\end{align}
Note this is the same as the standard $L_2$ norm with respect to the product of Lebesgue measure on $\cY$ and $\mu_{0, x}$ on $\cX$.  
The following theorem provides an oracle-type inequality for our explicit density estimator in \cref{algo:ede} in a structure-agnostic manner: 
\begin{theorem}[Explicit density estimator]
\label{thm:main}
    Assume Conditions \ref{cond:regularity} and \ref{cond:trunc} hold. Then for any $n\ge 3$ and $t>0$, the following event
    \begin{align}
        \|\hat{p} - p_0\|_2^2 \le C\left\{\inf_{g\in \mathcal{G}} \|g - \log p_0\|_2^2 + \frac{(NL)^2 \log (n) + t}{n}\right\} =:{\delta_{{stat}}},
    \end{align} occurs with probability at least $1-2e^{-t}$, where $C$ is a constant depending polynomially on $c_2$.
\end{theorem}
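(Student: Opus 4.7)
The plan is to prove this oracle inequality in three standard steps: (i) identify the population risk minimizer and reduce the $L_2$ error on $\hat p$ to an excess-risk bound; (ii) establish a quadratic curvature lower bound for the logistic excess risk; (iii) run a localized empirical-process analysis for the ERM over $\mathcal{G}=\mathcal{H}_{\mathtt{nn}}(d_x+d_y,L,N,R)$.

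For step (i), substitute $\sigma(f)=e^f/(1+e^f)$ into the definition of the population loss to obtain
\[
\mathsf{L}(f)=\mathbb{E}_{X\sim\mu_{0,x}}\!\int_{\mathcal{Y}}\!\bigl[(p_0(y|X)+1)\log(1+e^{f(y,X)})-p_0(y|X)f(y,X)\bigr]dy,
\]
whose pointwise minimizer over measurable functions is $f^\star(y,x)=\log p_0(y|x)$ (using $\mathrm{Leb}(\mathcal{Y})=1$). Condition~\ref{cond:regularity} forces $|f^\star|\le\log c_1\le R$, so $f^\star$ lies inside the truncation envelope. Condition~\ref{cond:trunc} further guarantees $|f|\le R$ for every $f\in\mathcal{G}$, so a second-order Taylor expansion of $\log(1+e^{f})$ around $f^\star$, together with $1/c_1\le p_0\le c_1$, yields constants $c(c_2),C(c_2)>0$ with
\[
c(c_2)\|f-f^\star\|_2^2 \le \mathsf{L}(f)-\mathsf{L}(f^\star) \le C(c_2)\|f-f^\star\|_2^2.
\]
Since $|e^a-e^b|\asymp|a-b|$ uniformly on $[-R,R]$, we also have $\|\hat p-p_0\|_2^2\asymp\|\hat f-f^\star\|_2^2$, so the theorem reduces to a high-probability excess-risk bound on $\hat f$.

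For step (iii), pick a near-approximator $\bar f\in\mathcal{G}$ with $\|\bar f-f^\star\|_2^2\le\inf_{g\in\mathcal{G}}\|g-\log p_0\|_2^2+1/n$. The basic inequality $\hat{\mathsf L}(\hat f)\le\hat{\mathsf L}(\bar f)$ gives
\[
\mathsf{L}(\hat f)-\mathsf{L}(f^\star)\le \mathsf{L}(\bar f)-\mathsf{L}(f^\star)+(\mathsf{L}-\hat{\mathsf L})(\hat f)-(\mathsf{L}-\hat{\mathsf L})(\bar f),
\]
where the deterministic term is controlled by the curvature upper bound and hence by the approximation term in the theorem. For the stochastic term I would use a localized empirical-process (or offset Rademacher) argument on the shifted loss class $\{\ell_f-\ell_{f^\star}:f\in\mathcal{G}\}$, exploiting three facts: the logistic loss is $1$-Lipschitz in $f$; all functions are bounded by $R$; and the curvature lower bound gives the Bernstein-type variance control $\mathrm{Var}[\ell_f-\ell_{f^\star}]\lesssim\mathsf{L}(f)-\mathsf{L}(f^\star)$. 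Feeding into a Talagrand/peeling inequality the pseudo-dimension bound $\mathrm{Pdim}(\mathcal{G})\lesssim(NL)^2\log(NL)$ of Bartlett--Harvey--Liaw--Mehrabian for deep ReLU networks, one obtains with probability at least $1-2e^{-t}$
\[
\mathsf{L}(\hat f)-\mathsf{L}(f^\star)\lesssim\|\bar f-f^\star\|_2^2+\frac{(NL)^2\log n+t}{n},
\]
and the curvature bounds convert this into the announced inequality for $\|\hat p-p_0\|_2^2$, with constants depending polynomially on $c_2=e^R$.

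The main obstacle is the third step: obtaining the \emph{fast} $1/n$-type rate with the correct $(NL)^2\log n$ scaling, rather than a slow $\sqrt{(NL)^2\log n/n}$. This requires matching the variance proxy coming from curvature to the neural-net complexity inside a localization/peeling step, and it is the place where the sharp $\log n$ exponent (as opposed to $\log^2 n$) must be paid for by careful chaining. The rest of the argument -- the identification of $f^\star$, the Taylor-based curvature, and the approximation/stochastic decomposition -- is essentially routine once Conditions~\ref{cond:regularity} and~\ref{cond:trunc} are in force.
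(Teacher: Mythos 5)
Your proposal matches the paper's proof in all essentials: identification of $\log p_0$ as the population minimizer, a strong-convexity (curvature) bound for the logistic population loss, the ERM basic inequality against a near-approximator in $\mathcal{G}$, and a localized empirical-process bound whose complexity enters through the Bartlett--Harvey--Liaw--Mehrabian pseudo-dimension bound of order $(NL)^2\log(\cdot)$ for deep ReLU networks, converted back to $\|\hat p-p_0\|_2$ via boundedness of the exponential on $[-R,R]$. The only immaterial differences are that the paper centers both the curvature bound and the empirical process at the comparator $\tilde g\in\mathcal{G}$ (splitting the process over the real-sample and fake-sample distributions) and gets the fast rate from a ready-made instance-dependent bound $|\Delta(g,\tilde g)|\lesssim \delta_{n,t}\|g-\tilde g\|_2+\delta_{n,t}^2$ driven by the local Rademacher critical radius, rather than your centering at $f^\star$ with Talagrand-plus-peeling and Bernstein-type variance control.
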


Theorem~\ref{thm:main} establishes a high-probability bound on the deviation between the explicit conditional density estimator $\widehat{p}$ and the true conditional density $p_0$. As is evident from the result, the error bound $\delta_{\text{stat}}$ consists of two terms, mirroring the typical structure found in standard nonparametric regression: i) the neural network approximation error $\inf_{g \in \mathcal{G}}\|g -\log p_0\|_2^2$ to the underlying density $p_0$, and ii) the stochastic error with $n^{-1}\log n$ and $(NL)^2$ that relies on the Pseudo-dimension of the neural network class we used. 
Both of these components depend on the hyperparameters $(N, L)$ of the underlying neural network class. Increasing these parameters reduces the approximation error but simultaneously increases the stochastic error, reflecting the classic bias–variance trade-off. 
As a consequence, when the ground truth $p_0$ lies within some smooth function class (e.g., H\"{o}lder, Sobolev, etc.), an optimal rate can be achieved by choosing appropriate $N$ and $L$ to trade off both the approximation error and the stochastic error. 

\begin{remark}
    While our result is presented for the deep ReLU network class $\mathcal{G}$, the same result applies to other function classes, or generic machine learning models. To be specific, let $\mathcal{F}$ be any bounded function class whose critical radius of local Rademacher complexity is $\delta_{\mathtt{s}}$, that is, $\mathsf{Rade}(\delta; \partial \mathcal{F}) \le \delta \delta_{\mathtt{s}}$ for any $\delta \ge \delta_{\mathtt{s}}$, where
    \begin{align*}
        \mathsf{Rade}(\delta; \partial \mathcal{\mathcal{F}}) := \mathbb{E}_{\{(X_i,Y_i)\}_{i=1}^n, \{\varepsilon_i\}_{i=1}^n}\left[\sup_{\substack{f,\tilde{f}\in \mathcal{F} \\ \|f - \tilde{f}\|_2\le \delta}}\frac{1}{n} \sum_{i=1}^n \varepsilon_i (f-\tilde{f})([X_i,Y_i]) \right]
    \end{align*} and where $\{(X_i,Y_i)\}_{i=1}^n$ are i.i.d. drawn from $(X,Y)$ with $X\sim \mu_x$ and $Y|X=x\sim p_0(\cdot|x)$, and $\{\varepsilon_i\}_{i=1}^n$ are independent Rademacher random variables that is also independent of $\{(X_i,Y_i)\}_{i=1}^n$. Then we have $\hat{p}$ minimizing \eqref{eq:loss} within the class $\mathcal{F}$ satisfies
    \begin{align*}
        \|\hat{p} - p_0\|_2^2 \lesssim \inf_{f\in \mathcal{F}} \|f - \log p_0\|_2^2 + \delta_{\mathtt{s}}^2 + \frac{t + \log n}{n},
    \end{align*} with probability at least $1-2e^{-t}$; see a full statement in \cref{sec:proof-explicit-main}. This includes 
    a wide array of nonparametric function classes of interest, including but not limited to, spline methods, RKHS with bounded norm \citep{friedman1991multivariate, wahba1990spline}. 
\end{remark}

\begin{remark}\label{remark4}
    In general, the estimator $\hat p$ produced by Algorithm \ref{algo:ede} may not be perfectly normalized. To address this, one can normalize it (up to an arbitrarily small error) using a simple Riemann integration approach: sample $Y_1, \dots, Y_k \sim \mathrm{Unif}(\mathcal{Y})$ and then update
    $$
    \hat p_{\rm norm}(y \mid x) \leftarrow \frac{\hat p(y \mid x)}{\frac{\mathrm{Vol}(\cY)}{k} \sum_{j = 1}^k \hat p(Y_j \mid x)} \,.
    $$
    By choosing $k$ sufficiently large, the approximation error in the normalizing constant can be made arbitrarily small (it scales as $k^{-1/2}$).  
\end{remark}
We now present a direct corollary of Theorem \ref{thm:main}, which provides the upper bound on the estimation error of $\hat p$, but with respect to $\mathrm{TV}$ distance, and for general $f$-divergence: 

\begin{corollary}
\label{cor:main}
Recall the definition of $\mathsf{R}_{\mathsf{TV}}$ and $\mathsf{R}_{\mathsf{D}_f}$ in Definition \ref{def:f-divergence}.
Under the setting of \cref{thm:main}, we have
\begin{align*}
    \mathsf{R}_{\mathsf{TV}}(p_0, \hat{p}) \lesssim \sqrt{\delta_{\text{stat}}}, \quad \mathsf{R}_{\mathsf{D}_f}(p_0, \hat{p}) \lesssim \sqrt{\delta_{\text{stat}}}, \quad \text{and}  \quad  \mathsf{R}_{\mathsf{D}_f}(p_0, \hat{p}_{\rm norm}) \lesssim \delta_{\text{stat}} \,,
\end{align*}
under the same high probability event as in Theorem \ref{thm:main}. 
\end{corollary}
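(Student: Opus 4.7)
The plan is to deduce all three bounds from the $L^2$ guarantee $\|\hat p - p_0\|_2^2 \le \delta_{stat}$ of Theorem \ref{thm:main}, leveraging the fact that, by Conditions \ref{cond:regularity} and \ref{cond:trunc}, both $p_0$ and $\hat p = \exp(\hat f)$ take values in a compact interval $[1/C, C] \subset (0, \infty)$ uniformly in $(x, y)$. Consequently, the ratio $\hat p/p_0$ lies in a compact sub-interval $I \subset (0, \infty)$ on which $f''$ is uniformly bounded by some $M < \infty$, by Definition \ref{def:f-divergence}.

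For the TV bound, I would use that $\mathcal{Y} = [0,1]^{d_y}$ has unit Lebesgue measure, so Cauchy--Schwarz in $y$ gives $\int |\hat p - p_0|\,dy \le (\int |\hat p - p_0|^2\,dy)^{1/2}$; integrating over $X \sim \mu_{0,x}$ and applying Jensen yields $\mathsf{R}_{\mathsf{TV}}(p_0,\hat p) \le \|\hat p - p_0\|_2 \le \sqrt{\delta_{stat}}$. For the $f$-divergence of the unnormalized $\hat p$, I would second-order Taylor expand $f$ about $1$ using $f(1)=0$ and the $f''$-bound on $I$:
\[
f(t) \;\le\; f'(1)(t-1) + \tfrac{M}{2}(t-1)^2, \qquad t \in I,
\]
substitute $t = \hat p(y|x)/p_0(y|x)$, and integrate against $p_0\,dy$ to obtain
\[
\mathsf{D}_f(\hat p(\cdot|x), p_0(\cdot|x)) \;\le\; f'(1) \int (\hat p - p_0)\,dy + \tfrac{M c_1}{2} \int (\hat p - p_0)^2 \,dy.
\]
Bounding the linear term by Cauchy--Schwarz (unit volume once more), taking $\mathbb{E}_X$, and applying Jensen to the square root gives $\mathsf{R}_{\mathsf{D}_f}(p_0,\hat p) \lesssim \|\hat p - p_0\|_2 + \|\hat p - p_0\|_2^2 \lesssim \sqrt{\delta_{stat}}$.

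The improved $\delta_{stat}$ rate for $\hat p_{\rm norm}$ rests on the observation that the linear term \emph{cancels} after normalization: one has $\int \hat p_{\rm norm}(y|x)\,dy = 1 = \int p_0(y|x)\,dy$ (taking $k \to \infty$ in Remark \ref{remark4} to avoid Monte Carlo error, or absorbing it as an additive $O(k^{-1/2})$ term). I would then check that $\hat p_{\rm norm}$ still satisfies $\|\hat p_{\rm norm} - p_0\|_2 \lesssim \|\hat p - p_0\|_2$ via a short algebraic manipulation: the normalizer $Z(x) = \int \hat p(y|x)\,dy$ is uniformly bounded below by $e^{-R}$, and $|Z(x) - 1| = |\int (\hat p - p_0)\,dy| \le (\int(\hat p - p_0)^2\,dy)^{1/2}$ by Cauchy--Schwarz; writing $\hat p_{\rm norm} - p_0 = (\hat p - p_0)/Z - p_0(1 - 1/Z)$ and combining these yields the claim. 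The same Taylor expansion then produces only the quadratic remainder, giving $\mathsf{R}_{\mathsf{D}_f}(p_0, \hat p_{\rm norm}) \lesssim \|\hat p_{\rm norm} - p_0\|_2^2 \lesssim \delta_{stat}$.

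The only genuinely substantive step is the cancellation of the first-order Taylor term under normalization, which is what accounts for squaring the exponent; the remaining work is routine Cauchy--Schwarz/Jensen bookkeeping on quantities that are uniformly bounded thanks to Conditions \ref{cond:regularity} and \ref{cond:trunc}.
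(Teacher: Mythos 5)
Your proposal is correct and follows essentially the same route as the paper: Cauchy--Schwarz (unit volume of $\mathcal{Y}$) plus Jensen for the TV bound, and a second-order Taylor expansion of $f$ about $1$ with the quadratic remainder controlled via the uniform bounds from Conditions \ref{cond:regularity} and \ref{cond:trunc}, the linear term costing $\sqrt{\delta_{\text{stat}}}$ for the unnormalized $\hat p$ and cancelling for $\hat p_{\rm norm}$. Your explicit verification that $\|\hat p_{\rm norm} - p_0\|_2 \lesssim \|\hat p - p_0\|_2$ (via the normalizer $Z(x)$ being bounded below and $|Z(x)-1|$ controlled by Cauchy--Schwarz) is a small step the paper leaves implicit, so if anything your write-up is slightly more complete.
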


We now present our theoretical results for implicit density estimation. Recall that in our implicit density estimation procedure (Algorithm \ref{algo:ide}), we rely on an explicit density estimator $\hat p_0$, assumed to satisfy the error bound in Theorem \ref{thm:main}, to estimate the score function. Consequently, this procedure involves three primary sources of error:
\begin{enumerate}
    \item {\bf Score estimation: }The first source of error arises from the estimation of the score function. This, in turn, has two contributing factors: i) the estimation error of $\hat p_0$, and ii) the finite-sample Monte Carlo approximation error—namely, the discrepancy introduced by replacing the Gaussian expectation in Equation \eqref{eq:score-est} with its Monte Carlo average. 

    \item {\bf Discretization error: }The second source of error stems from the discretization of the continuous stochastic differential equations. This error is unavoidable, as continuous SDEs cannot be simulated exactly on a machine with finite precision.
    
    \item {\bf Time truncation error: }The third source of error arises from early stopping and truncating the time horizon. Ideally, running the forward (resp. backward) process until time $T = \infty$ would yield the standard Gaussian distribution (resp. the true data-generating distribution). In practice, however, the process is terminated at a finite (albeit large) time $T$, introducing a truncation error.

\end{enumerate}
As evident from the above discussion, the second and third sources of error are inherent to the diffusion-based generative process and are not specific to our methodology. 
These types of errors have been studied in prior works (e.g., \cite{benton2023nearly,huang2013oracle}), where various error bounds have been established; see also \cite{tang2025score}. 
However, the error arising from the estimation of the score function is specific to our methodology and thus requires a new analysis. 
The following proposition provides a non-asymptotic error bound for the score function estimator in Equation \eqref{eq:score-est}:
\begin{proposition}
\label{prop:score_error_bound}
    Recall the plug-in diffused score estimator in \eqref{eq:score-est} with $\|\hat{p} - p_0\|_2^2 \le \delta_{stat}$, we have for any $t>0$, with probability $\ge 1 - K^{-100}$,  
    \label{thm:score_est}
    \begin{align}
    \label{eq:score_bound}
      & \mathbb{E}_X \left[\int (\hat{s}_K(y, t|X) - s^\star(y, t|X) )^2 \ p_t(y|X) \ dy\right] \notag \\
      & \qquad \qquad \le \Upsilon \left\{\frac{d_y \|p_0^{-1}\|_\infty \left(1 + \|\hat{p}\|_\infty\right)}{\sigma_t^2}\delta_{\rm stat} + \frac{d_y^2 (\log{K})^2\|\hat p_0\|_\infty^4\|\hat p^{-1}_0\|_\infty^2}{K\sigma^2_t}\right\}  \triangleq \delta_{\rm score}(t) \,.
    \end{align}
    for some universal constant $\Upsilon > 0$.
\end{proposition}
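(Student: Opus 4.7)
The plan is to split the score estimation error into a plug-in piece (error from $\hat p_0$ replacing $p_0$ inside an exact Gaussian expectation) and a Monte Carlo piece (error from replacing the $U$-expectation by its $K$-sample average). Specifically, introduce the intermediate quantity
\begin{equation*}
\textstyle
\tilde s(y,t\mid x) = \frac{1}{\sigma_t^2}\,\frac{\mathbb{E}_U[U\,\hat p((y-\sigma_t U)/m_t\mid x)]}{\mathbb{E}_U[\hat p((y-\sigma_t U)/m_t\mid x)]},
\end{equation*}
and write $\hat s_K - s^\star = (\tilde s - s^\star) + (\hat s_K - \tilde s)$; then $(a+b)^2 \le 2a^2 + 2b^2$ reduces the problem to bounding the two summands separately after integrating against $p_t(y\mid X)\,\mu_{0,x}(dx)$.

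For the plug-in piece, abbreviate $A^\star = \mathbb{E}_U[U p_0(\cdot)]$, $B^\star = \mathbb{E}_U[p_0(\cdot)] = m_t^{d_y}\,p_t(y\mid x)$ and analogously $\tilde A,\tilde B$ with $\hat p$ in place of $p_0$. Using the ratio identity $\tilde A/\tilde B - A^\star/B^\star = (\tilde A - A^\star)/\tilde B - (A^\star/B^\star)(\tilde B - B^\star)/\tilde B$, Cauchy--Schwarz gives $|\tilde A - A^\star|^2 \le d_y\,\mathbb{E}_U[(\hat p - p_0)^2(\cdot)]$ (via $\mathbb{E}\|U\|^2 = d_y$) and $|\tilde B - B^\star|^2 \le \mathbb{E}_U[(\hat p - p_0)^2(\cdot)]$, while the denominator is controlled via $\tilde B \ge 1/\|\hat p^{-1}\|_\infty$ and $B^\star \ge 1/\|p_0^{-1}\|_\infty$. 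The key manoeuvre is then a change of variables $z = (y-\sigma_t u)/m_t$ inside $\int \mathbb{E}_U[(\hat p - p_0)^2(\cdot)]\,p_t(y\mid x)\,dy$: this turns the Gaussian integral into an integral over $z$-space, with the factor $m_t^{d_y}$ from the Jacobian cancelling against the $1/m_t^{d_y}$ coming from the uniform pointwise bound $p_t(y\mid x) \le \|p_0\|_\infty/m_t^{d_y}$. What survives is $\mathbb{E}_X\int(\hat p - p_0)^2(z\mid X)\,dz = \|\hat p - p_0\|_2^2 \le \delta_{\rm stat}$, and the $1/\sigma_t^2$ prefactor in the stated bound emerges from $s \propto \sigma_t^{-1}$ being squared.

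For the Monte Carlo piece, condition on the data event (so $\hat p$ is deterministic). The summands in $\hat A$ and $\hat B$ are i.i.d.\ in $k$, but Gaussian weights $U_{t,k}$ are unbounded, so first truncate to the event $\max_k \|U_{t,k}\|_\infty \le c\sqrt{\log K}$, which holds with probability $\ge 1 - K^{-101}$. On this event, each coordinate of $U_{t,k}\hat p(\cdot)$ is bounded by $\sqrt{\log K}\,\|\hat p\|_\infty$, so Hoeffding's inequality yields $|\hat A - \tilde A|^2 \lesssim d_y(\log K)\|\hat p\|_\infty^2/K$ and $|\hat B - \tilde B|^2 \lesssim (\log K)\|\hat p\|_\infty^2/K$ pointwise with exponentially small failure probability. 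To promote this to a uniform-in-$(y,x)$ bound, use a polynomial-size covering net built from the Lipschitzness of $\hat p = \exp(\hat f)$ (inherited from the ReLU network class) --- this absorbs into another $\log K$ factor. Choosing $K$ large enough forces $\hat B \ge \tilde B/2 \ge 1/(2\|\hat p^{-1}\|_\infty)$ on the good event, after which the same ratio perturbation identity as above gives $(\hat s_K - \tilde s)^2 \lesssim \sigma_t^{-2}\|\hat p^{-1}\|_\infty^2\|\hat p\|_\infty^2 (|\hat A - \tilde A|^2 + |\hat B - \tilde B|^2)$. Integrating against $p_t(y\mid x)\mu_{0,x}(dx)$ and collecting the $\|\hat p\|_\infty$ and $\|\hat p^{-1}\|_\infty$ factors produces the second, $(\log K)^2/K$ term in $\delta_{\rm score}(t)$.

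The main obstacle is the Monte Carlo step, because the score estimator is a ratio whose denominator is random and whose convergence must hold uniformly over the entire $(y,x)$ domain before we integrate against $p_t$. Handling this requires simultaneously truncating the Gaussian magnitudes $\|U_{t,k}\|$ (to apply a bounded-differences inequality), exploiting the built-in Lipschitz regularity of the truncated ReLU network to discretize $(y,x)$ (so that a union bound costs only $\log K$), and verifying that the denominator $\hat B$ stays uniformly bounded away from zero. By contrast, the plug-in step is essentially a clean change-of-variables calculation once the Cauchy--Schwarz and ratio-perturbation identities are in place.
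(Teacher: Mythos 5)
Your decomposition of the error into a plug-in piece (via the intermediate ratio $\tilde s=\hat N/\hat D$ built from $\hat p$ with exact Gaussian expectations) and a Monte Carlo piece is exactly the paper's decomposition, and your plug-in analysis is essentially the paper's: ratio-perturbation identity, Cauchy--Schwarz with $\mathbb{E}\|U\|^2=d_y$, and a change of variables that converts the $p_t$-weighted integral of $\mathbb{E}_U[(\hat p-p_0)^2]$ into $\|\hat p-p_0\|_2^2\le\delta_{\rm stat}$. (The paper cancels $p_t=D$ by Fubini rather than using the pointwise bound $p_t\le\|p_0\|_\infty m_t^{-d_y}$, so your version picks up an extra $\|p_0\|_\infty$ and an $\|\hat p^{-1}\|_\infty$ in place of $\|p_0^{-1}\|_\infty$; under Condition~\ref{cond:regularity} and the output truncation these are bounded constants, so this only loosens $\Upsilon$.)

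The Monte Carlo piece is where your route diverges from the paper and where it has a genuine gap. Your plan is pointwise Hoeffding (after truncating $\|U_{t,k}\|$ at $\sqrt{\log K}$) followed by a uniform-in-$(y,x)$ covering argument whose cardinality you control through ``the built-in Lipschitz regularity of $\hat p=\exp(\hat f)$ inherited from the ReLU network class.'' But the class $\mathcal{H}_{\mathtt{nn}}(d,L,N,R)$ in Definition~\ref{def:nn} only truncates the \emph{output} at level $R$; the weights are unconstrained, so there is no uniform Lipschitz bound over the class, and the realized $\hat f$ has a data-dependent Lipschitz constant that can be arbitrarily large. Consequently the mesh of your net, and hence the log-cardinality entering the union bound, is not $O(\log K)$ with a universal constant, and the claimed $(\log K)^2/K$ term with a universal $\Upsilon$ does not follow. (A further issue: $y$ ranges over all of $\mathbb{R}^{d_y}$ for the diffused law $p_t$, so the net must also be paired with a tail truncation in $y$, which you do not address.) The paper avoids uniformity altogether: it integrates the squared Monte Carlo error against $p_t(y|x)\,dy\,\mu_{0,x}(dx)$ \emph{first}, so the quantity to control is a single scalar that splits into diagonal averages and a degenerate second-order U-statistic in the Gaussian draws $U_{t,1},\dots,U_{t,K}$ with kernel $K(u,u')=\int \hat p(\cdot)\hat p(\cdot)p_t$; it then applies decoupling (de la Pe\~na--Montgomery-Smith) and the Gin\'e--Lata{\l}a--Zinn exponential inequality for degenerate U-statistics, together with a truncation of $\|U_{t,k}\|$ at $\sqrt{Cd\log K}$ and Bernstein for $\|U_{t,k}\|^2$, which yields the $(\log K)^2/K$ rate with probability $1-K^{-100}$ and no uniformity argument. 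Also note the denominator needs no ``$K$ large enough'' step: since $\hat p=\exp(T_R\hat f)\ge e^{-R}$ everywhere, $\hat D^{\rm emp}\ge 1/\|\hat p^{-1}\|_\infty$ deterministically, whereas your condition ``choose $K$ large enough so that $\hat B\ge\tilde B/2$'' imposes a restriction not present in the statement. To repair your argument you would either need to add explicit weight/Lipschitz constraints to the network class, or switch to an integrated-functional concentration argument of the U-statistic type used in the paper.
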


\begin{remark}
The above proposition provides an upper bound on the estimation error of the score function at any fixed time point $t$. This bound consists of two main components:
(i) the first term in Equation \eqref{eq:score_bound} captures the error arising from the estimation of the explicit density $\hat p_0$, and
(ii) the second term reflects the error due to Monte Carlo approximation of the standard Gaussian expectation using $K$ samples (as shown in Equation \eqref{eq:score-est}).
Importantly, the parameter $K$ is user-controlled, and the second term can be made arbitrarily small by choosing a sufficiently large $K$, at the cost of increased computation for evaluating $\hat p_0$ at $K$ points and averaging the results (which is effectively negligible unless $K$ is extremely large). As a result, the first term typically dominates the overall error, implying that the estimation error of the score function is essentially proportional to that of the explicit density estimator $\hat p_0$. 
\end{remark}

In Proposition \ref{prop:score_error_bound}, we established a non-asymptotic error bound for estimating the score function at any fixed time $t$. However, as outlined in Algorithm \ref{algo:ide}, we need to run the backward process for $M$ discrete time steps. By applying a union bound over the time indices to the bound in Proposition \ref{prop:score_error_bound}, we obtain the following corollary: 
\begin{corollary}
\label{cor:score_est_union}
    Under the setup in Proposition \ref{prop:score_error_bound}, with probability $\ge 1 - MK^{-100}$: 
    $$
    \mathbb{E}_X \left[\int (\hat{s}_K(y, t_j|X) - s^\star(y, t_j|X) )^2 \ p_t(y|X) \ dy\right] \le \delta_{\rm score}(t_j) \,, \qquad \forall \ 1 \le j \le M. 
    $$
\end{corollary}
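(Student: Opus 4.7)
The plan is to obtain Corollary~\ref{cor:score_est_union} as a direct union bound applied to the guarantee of Proposition~\ref{prop:score_error_bound}. Concretely, I condition on the training data $\mathcal{D}_n$ (equivalently, on the realized explicit estimator $\hat p$) lying in the high-probability event where $\|\hat p - p_0\|_2^2 \le \delta_{\mathrm{stat}}$, so that the hypothesis of Proposition~\ref{prop:score_error_bound} is satisfied deterministically, and all remaining randomness comes from the Monte Carlo samples used to build $\hat s_K$.

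For each discretization index $j \in \{1,\ldots, M\}$, let
\[
E_j = \Bigl\{\, \mathbb{E}_X\Bigl[\int (\hat{s}_K(y, t_j\mid X) - s^\star(y, t_j\mid X))^2\, p_{t_j}(y\mid X)\, dy\Bigr] > \delta_{\rm score}(t_j) \,\Bigr\}.
\]
Proposition~\ref{prop:score_error_bound}, applied at the fixed time $t = t_j$, gives $\Pr(E_j) \le K^{-100}$, where the probability is taken over the Monte Carlo pack $\mathcal{U}_{t_j} = \{U_{t_j,1},\ldots, U_{t_j,K}\}$ drawn in Step~6 of Algorithm~\ref{algo:ide}. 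A crucial (but trivial) observation is that Algorithm~\ref{algo:ide} draws a fresh independent batch $\mathcal{U}_{t_m}$ at every step $m$, and these packs are also independent of $\mathcal{D}_n$; thus the bounds at different $t_j$ are governed by disjoint blocks of randomness, and each single-time bound is unaffected by what happens at other times.

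A straightforward union bound then gives
\[
\Pr\Bigl(\bigcup_{j=1}^M E_j\Bigr) \;\le\; \sum_{j=1}^M \Pr(E_j) \;\le\; M K^{-100},
\]
which is exactly the failure probability stated in the corollary. On the complementary event, every inequality $\mathbb{E}_X \int (\hat{s}_K(y,t_j\mid X) - s^\star(y,t_j\mid X))^2 p_{t_j}(y\mid X)\,dy \le \delta_{\rm score}(t_j)$ holds simultaneously for all $j$, which is the claimed uniform-in-$j$ guarantee.

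There is no genuine obstacle here; the proof is essentially a bookkeeping step whose only content is to notice the fresh-randomness structure of Algorithm~\ref{algo:ide} that licenses the union bound. The only mild care needed is to keep the conditioning clean: the event in Proposition~\ref{prop:score_error_bound} is with respect to the Monte Carlo randomness given $\hat p$, so one first fixes a realization of $\hat p$ satisfying the $\delta_{\mathrm{stat}}$ bound, applies the union bound over $j$ in this conditional probability space, and then (if desired) integrates out the $\mathcal{D}_n$ randomness against the Theorem~\ref{thm:main} event to produce an unconditional statement.
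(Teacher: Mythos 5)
Your proposal is correct and matches the paper's own justification, which is exactly the same one-line union bound over the $M$ discretization times applied to Proposition~\ref{prop:score_error_bound} (the paper states this inline without further detail). The only minor remark is that the union bound needs no independence between the Monte Carlo batches $\mathcal{U}_{t_j}$, so that observation, while harmless and consistent with Algorithm~\ref{algo:ide}, is not actually required.
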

We now present our main theorem on the estimation of the conditional density $p_{\hat Y(U) \mid X}$, which aggregates the errors from all three sources discussed above to provide a non-asymptotic bound on the estimation error of the implicit density estimator: 
\begin{theorem}[Implicit density estimator]
\label{thm:main2}
    Recall the risk of implicit density estimator defined in \eqref{fan1}.
    Under the event of \cref{thm:main} and Corollary \ref{cor:score_est_union}, the implicit density estimator in \cref{algo:ide} with $\hat{p}$ being that used in \cref{thm:main} satisfies
    $\mathsf{R}_{TV}(p_0(\cdot|X), p_{\hat{Y}|X}) \le \sqrt{C\delta_{\text{all}}}$ and $\mathsf{R}_{KL}(p_0(\cdot|X), p_{\hat{Y}|X}) \le C\delta_{\text{all}}$,
    where
    \begin{align*}
        \delta_{\rm{all}} = \sum_{n=0}^{M}(t_{n+1} - t_n) \delta_{\rm score}(T - t_n) + \delta + d_y e^{-2T} + d_y \frac{[T + \log(1/\delta)]^2}{M}
    \end{align*}
and $C > 0$ is some constant independence of $(c_1, c_2, K, n, d_y, T, M)$. In particular if we take $T \asymp \log n$, $K \asymp \delta_{\text{stat}}^{-1}$, $M \asymp \delta_{\text{stat}}^{-1}$, and $\delta \asymp \delta_{\text{stat}}$, then we have: 
$$
    \mathsf{R}_{TV}(p_0(\cdot|X), p_{\hat{Y}|X}) \le \sqrt{C_1 \delta_{\text{stat}}}\log{n}, \qquad \text{and} \qquad \mathsf{R}_{KL}(p_0(\cdot|X), p_{\hat{Y}|X}) \le C_1 \delta_{\text{stat}} \log^2{(n)} \,.
    $$
for some constant $C_1 > 0$. 
\end{theorem}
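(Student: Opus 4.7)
The plan is to bound the conditional KL divergence $\mathsf{R}_{KL}(p_0(\cdot|X), p_{\hat{Y}|X})$ pointwise in $X=x$, then take expectation over $X \sim \mu_{0,x}$, and finally appeal to Pinsker's inequality to recover the TV bound. Fix $x \in \cX$. The distribution $p_{\hat{Y}|X=x}$ is produced by discretizing the reverse SDE \eqref{eq:backward} with the plug-in score $\hat{s}_K(\cdot, \cdot|x)$, initializing at $\mathcal{N}(0, I_{d_y})$ instead of $p_T(\cdot|x)$, and early-stopping at time $\delta$. I would decompose the end-to-end KL error into four pieces: (i) the initialization error from replacing $p_T(\cdot|x)$ by $\mathcal{N}(0, I_{d_y})$; (ii) the accumulated score-approximation error along the $M$ discretization nodes; (iii) the time-discretization error of the DDPM-style scheme in \cref{algo:ide}; and (iv) the early-stopping error from terminating the reverse process at time $\delta$ rather than $0$.

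For piece (i), the exponential ergodicity of the OU process together with \cref{cond:regularity} yields $\mathrm{KL}(p_T(\cdot|x)\,\|\,\mathcal{N}(0, I_{d_y})) \lesssim d_y e^{-2T}$. For pieces (ii)--(iii), I would apply Girsanov's theorem to the two SDEs driven respectively by the exact score $s^\star(\cdot, t|x)$ and by the piecewise-constant drift using $\hat{s}_K(\breve Y_{t_m}, T-t_m|x)$, following the template of \cite{huang2024denoising,benton2023nearly}. The Girsanov computation produces a term of the form $\sum_{n=0}^{M}(t_{n+1}-t_n) \int \|\hat s_K - s^\star\|_2^2 \, dp_{T-t_n}(\cdot|x)$, which after taking $\mathbb{E}_X$ and invoking \cref{cor:score_est_union} is bounded by $\sum_n (t_{n+1}-t_n)\,\delta_{\rm score}(T-t_n)$; it also produces a pure discretization remainder of order $d_y(T+\log(1/\delta))^2/M$ thanks to the mixed uniform/exponential grid \eqref{eq:discrete-timestep}. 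For piece (iv), the smoothness of $t \mapsto p_t(\cdot|x)$ near $0$ (Lipschitz in TV under \cref{cond:regularity}) gives an $O(\delta)$ contribution. Summing (i)--(iv), combining with \cref{thm:main} for the input $\hat p$, and applying Pinsker's inequality $\mathsf{TV} \le \sqrt{\mathrm{KL}/2}$ delivers both asserted bounds.

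For the quantitative specialization, substituting $T \asymp \log n$, $\delta \asymp \delta_{\rm stat}$, $K \asymp \delta_{\rm stat}^{-1}$, $M \asymp \delta_{\rm stat}^{-1}$ into $\delta_{\rm all}$: the Gaussian tail term $d_y e^{-2T}$ is polynomially small in $n$; the Monte Carlo component of $\delta_{\rm score}(t)$ contributes $O(\delta_{\rm stat}/\sigma_t^2)$ per step; and the discretization remainder becomes $O(\delta_{\rm stat}\log^2 n)$. Crucially, the exponentially spaced second half of the grid in \eqref{eq:discrete-timestep} is engineered so that $\sum_{m > M/2}(t_{m+1}-t_m)/\sigma_{T-t_m}^2 = O(\log(1/\delta))$, which prevents the Riemann sum of $\delta_{\rm score}$ from blowing up near the early-stopping cutoff.

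The main obstacle is piece (ii): the per-step score bound $\delta_{\rm score}(t)$ scales as $1/\sigma_t^2 = 1/(1-e^{-2t})$, which diverges as $t \to 0$. Verifying that the exponential spacing near $T-\delta$ exactly absorbs this singularity requires splitting the grid at $m = M/2$ and bounding the uniform and exponential halves separately, so that the aggregate accumulated error remains $\tilde O(\delta_{\rm stat})$. A second subtlety is that \cref{cor:score_est_union} bounds $\|\hat s_K - s^\star\|_{L^2(p_t)}^2$, whereas the pathwise Girsanov integrand naturally lives on the simulated marginals along $\{\breve Y_{t_m}\}$; bridging this gap requires either the truncation in \cref{cond:trunc} to yield an $L^\infty$ bound on $\hat s_K$ that dominates the density ratio between $p_t$ and the simulated marginals, or the direct pointwise-Girsanov argument of \cite{benton2023nearly} which sidesteps the change of measure.
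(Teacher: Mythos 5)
Your proposal is correct and follows essentially the same route as the paper: the paper bounds $\mathrm{KL}(p_\delta(\cdot|x)\,\|\,p_{\hat Y|X=x})$ pointwise in $x$ by directly invoking the Girsanov-based discretization result of \cite{benton2023nearly} (equivalently the bound in \cite{huang2024denoising}), which packages your pieces (i)--(iv) into the terms $\varepsilon_{\rm score}(x)+\kappa^2 M d_y+\kappa T d_y+d_y e^{-2T}$ with $\kappa\asymp (T+\log(1/\delta))/M$, then takes $\mathbb{E}_X$ using Corollary \ref{cor:score_est_union} and obtains the TV bound via Pinsker. Your two flagged subtleties (the $1/\sigma_t^2$ singularity absorbed by the exponential grid, and the score error being measured under the true forward marginals $p_t$) are exactly what the cited theorem is designed to handle, so there is no gap.
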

It is instructive to examine and interpret the different components of $\delta_{\rm all}$, which, as discussed earlier, reflects the combined effect of three distinct sources of errors. The first term captures the aggregated estimation error of the score function over the time points $\{T - t_m\}_{m\in [M]}$. The second and third terms together account for the error introduced by early stopping and finite-time truncation of the SDE. Finally, the fourth term arises from the discretization of the SDE.

\section{Examples and Convergence Rates}
\label{sec:MRF_HCM}

In real-world scenarios, the target density often exhibits low-dimensional structures, such as factorized or compositional forms. This section demonstrates that our estimators automatically adapt to such different hidden structures and consequently achieve faster convergence rates without the knowledge of these structures. 
Before delving deep into the discussion, we first introduce the notion of H{\"o}lder smooth function: 

\begin{definition}[$(\beta,C)$-smooth Function]
\label{def:holder_func}
    Let $\beta = r+s$ for some nonnegative integer $r\ge 0$ and $0<s\le 1$, and $C>0$. A $d$-variate function $f$ is $(\beta, C)$-smooth if for every non-negative sequence $\alpha \in \mathbb{N}^d$ such that $\sum_{j=1}^d \alpha_j = r$, the partial derivative $\partial^{\alpha} f=(\partial f)/(\partial z_1^{\alpha_1}\cdots z_d^{\alpha_d})$ exists and satisfies $|\partial^{\alpha} f(z) - \partial^{\alpha} f(\tilde{z})| \le C \|z - \tilde{z}\|_2^s$.
    We use $\mathcal{H}_{\mathtt{h}}(d, \beta, C)$ to denote the set of all the $d$-variate $(\beta, C)$-smooth functions.
\end{definition}
In a nutshell, if a function $f \in \mathcal{H}_{\mathtt{h}}(d, \beta, C)$, then the function is differentiable up to order $\lfloor \beta \rfloor$ with uniformly bounded derivatives, and its $\lfloor \beta \rfloor$-th derivative is Lipschitz of order $\beta - \lfloor \beta \rfloor$. In the following sections, we present several examples of structured density functions and demonstrate the fast convergence rates achieved by CINDES.

\subsection{Low-dimensional factorizable structure}

We first consider the following structure, whose density is the product of $d^\star$-variate $(\beta,C)$-smooth functions with $d^\star \le d$. 
\begin{definition}
Let $\beta, C \in \mathbb{R}^+$ and $d, d^\star \in \mathbb{N}^+$ satisfying $d^\star\le d$. We define $\mathcal{H}_{\mathsf{f}}(d, d^\star, \beta, C)$ as
\begin{align*}
    \mathcal{H}_{\mathsf{f}}(d, d^\star, \beta, C) = \Bigg\{h(z) = \prod_{|J|\le d^\star} f_J(z_J): f_J\in \mathcal{H}_{\mathtt{h}}(|J|, \beta, C)\Bigg\}.
\end{align*}
\end{definition}

The above definition implies that $\cH_{\mathsf{f}}(d, d^*, \beta, C)$ consists of all density functions that can be factorized into multiple components, where each component depends on approximately $d^*$ variables, i.e., the overall density is a product of several lower-dimensional functions.
Under the setting of unconditional density estimation (i.e., $X= \varnothing$ ($d_x=0$) in Algorithm \ref{algo:ede}), this low-dimensional structure $\mathcal{H}_{\mathsf{f}}(d, d^\star, \beta, C)$ characterizes the function form of many graphical models of interest, e.g., Markov random field (MRF) and Bayesian network. 
MRF represents the joint distribution of a set of random variables using an undirected graph, where edges encode conditional dependencies. 
To be specific, consider a d-dimensional random vector $Z = (Z_1, \dots, Z_d)$, and associate $Z$ with a graph $G = (V, E)$, where $V=\{1,\ldots, d\}$ consists of $d$ vertices, each corresponding to a coordinate $Z_j$ with $j\in [d]=V$, and $E$ represents the set of edges. We say a density $p(z_1,\ldots, z_d)$ satisfies the Markov property with respect to a graph $G$ if $Z_{A} \indep_{p} Z_{B} | Z_{C}$ for any vertices index $C\subseteq V$ such that deleting the vertices and corresponding edges in $C$ 
breaks the graph $G$ into two disconnected components $A, B$. The well-known Hammersley–Clifford theorem connects the Markov property and factorizable function form of $p$ when $p$ is strictly positive on its domain: 
\begin{proposition}[Hammersley–Clifford theorem]
\label{eq:}
Suppose $p(z_1,\ldots, z_d)$ is a strictly positive density on its domain, then the following two statements are equivalent: (1) $p$ satisfies the Markov property with respect to $G$; and (2)
\begin{align*}
    p(z) = \prod_{J \in \mathcal{C}(G)} f_J(x_J)
\end{align*} where $\mathcal{C}(G)$ is the set of all the cliques of $G$ defined formally as $\mathcal{C}(G) = \{V'\subseteq V: (i,j)\in E ~\text{for any}~ i,j\in V',i\neq j\}$.
\end{proposition}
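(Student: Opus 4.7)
The plan is to prove each implication separately: the easy factorization-implies-Markov direction by grouping cliques around the separator, and the harder Markov-implies-factorization direction by the classical Möbius inversion on the log-density.

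For $(2)\Rightarrow(1)$, I would assume $p(z)=\prod_{J\in\mathcal{C}(G)}f_J(z_J)$ and fix any separator $C\subseteq V$ disconnecting $G$ into $A$ and $B$. Since every clique is a complete subgraph, no $J\in\mathcal{C}(G)$ can meet both $A$ and $B$ (its vertices would have to be pairwise adjacent, contradicting separation). Hence each clique lies in $A\cup C$ or in $B\cup C$, so the product splits as $p(z)=\varphi(z_{A\cup C})\,\psi(z_{B\cup C})$. Strict positivity lets me normalize the marginals over $z_A$ and $z_B$, which yields $Z_A\indep_p Z_B\mid Z_C$.

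For the harder direction $(1)\Rightarrow(2)$, I would fix a reference point $z^\star$ in the support of $p$ and define, for each $T\subseteq V$, the ``mixed'' density $p^T(z):=p(z_T,z^\star_{V\setminus T})$. Positivity keeps $\log p^T$ everywhere finite, and Möbius inversion on the Boolean lattice of subsets of $V$ gives the decomposition
$$\log\frac{p(z)}{p(z^\star)}=\sum_{J\subseteq V}H_J(z),\qquad H_J(z):=\sum_{T\subseteq J}(-1)^{|J\setminus T|}\log p^T(z),$$
where each $H_J$ depends on $z$ only through $z_J$. The proposition then reduces to the central claim that $H_J\equiv 0$ whenever $J$ is not a clique of $G$, since in that case $p(z)=p(z^\star)\prod_{J\in\mathcal{C}(G)}\exp(H_J(z))$ is the required factorization (with the constant $p(z^\star)$ absorbed into any single clique factor).

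To prove the claim, I take $J$ containing two non-adjacent vertices $i,j$. The global Markov property applied with separator $V\setminus\{i,j\}$ (which disconnects $\{i\}$ from $\{j\}$ since $i\not\sim j$) yields $Z_i\indep_p Z_j\mid Z_{V\setminus\{i,j\}}$. Reading the resulting factorization of the conditional density pointwise along the reference configurations and using positivity, this translates into the multiplicative identity $p^{T\cup\{i,j\}}(z)\cdot p^T(z)=p^{T\cup\{i\}}(z)\cdot p^{T\cup\{j\}}(z)$ for every $T\subseteq V\setminus\{i,j\}$; taking logarithms gives the additive identity $\log p^{T\cup\{i,j\}}-\log p^{T\cup\{i\}}-\log p^{T\cup\{j\}}+\log p^T\equiv 0$. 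I then partition the $2^{|J|}$ summands defining $H_J$ into quadruples indexed by $T\subseteq J\setminus\{i,j\}$ (within each quadruple the four signs $(-1)^{|J\setminus T|}$ realize exactly the pattern $+,-,-,+$ of the identity), so each quadruple sums to zero and $H_J\equiv 0$, as claimed.

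The main obstacle will be the translation step: converting the abstract conditional-independence statement into the clean multiplicative identity among the $p^T$. This needs the factorization $p(z_i,z_j\mid z_{V\setminus\{i,j\}})=p(z_i\mid z_{V\setminus\{i,j\}})\,p(z_j\mid z_{V\setminus\{i,j\}})$ to be read pointwise at the ``mixed'' configurations appearing in $p^T,p^{T\cup\{i\}},p^{T\cup\{j\}},p^{T\cup\{i,j\}}$, and strict positivity of $p$ is used essentially so that every conditional density, ratio, and logarithm involved is finite and well defined. Once this identity is in place, the rest of the argument is Möbius-inversion bookkeeping on the Boolean lattice.
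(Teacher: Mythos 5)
The paper states this proposition as the classical Hammersley–Clifford theorem and provides no proof of its own, so there is no in-paper argument to compare against; your proposal is the standard Möbius-inversion proof and it is correct. The factorization-implies-Markov direction is right: a complete set cannot meet both components $A$ and $B$ once $C$ separates them, so the clique product splits as $\varphi(z_{A\cup C})\psi(z_{B\cup C})$, giving the conditional independence. For the converse, your chain — reference point $z^\star$, mixed densities $p^T$, Möbius expansion $\log\{p(z)/p(z^\star)\}=\sum_{J\subseteq V}H_J(z)$ with $H_J$ depending only on $z_J$, and the cross-product identity $p^{T\cup\{i,j\}}\,p^T=p^{T\cup\{i\}}\,p^{T\cup\{j\}}$ deduced from $Z_i$ and $Z_j$ being conditionally independent given $Z_{V\setminus\{i,j\}}$ (which is a legitimate instance of the paper's Markov property, since removing $V\setminus\{i,j\}$ leaves the two components $\{i\}$ and $\{j\}$) — correctly annihilates $H_J$ for every non-complete $J$, and since the paper's $\mathcal{C}(G)$ is the family of \emph{all} complete subsets (not just maximal cliques), the surviving terms give exactly the claimed factorization with the constant $p(z^\star)$ absorbed into one factor. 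The one point you should state explicitly rather than leave implicit is that the mixed configurations $(z_T, z^\star_{V\setminus T})$ must lie in the domain of $p$, i.e.\ the support must be a product set; this is the standard reading of ``strictly positive on its domain'' and is exactly where your remark that ``positivity keeps $\log p^T$ finite'' is doing work, but without the product-support assumption that step (and the theorem itself, by known counterexamples) can fail.
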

It is immediate from the above theorem that any density function $p(y)$ satisfying the Markov property with graph $G$ satisfies $p\in \mathcal{H}_{\mathsf{f}}(d, d^\star, \beta, C)$ where $d^*$ is the size of the maximum clique, as soon as $f_J$'s are $\beta$-H\"{o}lder.   
Recently, a few papers have established convergence rates in TV distance when $p(y) \in \mathcal{H}_{\mathsf{f}}(d, d^\star, \beta, C)$ under the unconditional density estimation setup. 
For example, \cite{bos2023supervised} proposed a two-stage explicit density estimator using neural networks and establishes the convergence rate $n^{-\beta/(2\beta+d^\star)} + n^{-\alpha/d}$ where $\alpha$ is the H{\"o}lder smoothness parameter of the whole function $p$; this rate cannot circumvent the curse of dimensionality in general, given $\alpha=\beta$ without further assumptions. 
\cite{vandermeulen2024dimension} proposed a neural network-based least square estimator that can achieve the rate $n^{-\beta/(4\beta+d^\star)}$ when $\beta=1$, but clearly this rate is not minimax optimal. 
While \cite{kwon2025nonparametric} constructs a rate-optimal implicit density estimator of order $n^{-\beta/(2\beta + d^\star)}$ using a diffusion model, their neural network implementation is computationally intractable.
As a comparison, we next argue that applying our \cref{thm:main} and \cref{thm:main2} can yield the optimal rate for conditional density estimation when $p_0(y|x) \in \mathcal{H}_{\mathsf{f}}(d, d^\star, \beta, C)$ with $d=d_x + d_y$, (it admits the unconditional density estimation as a special case). 
Before stating our main result, we first present a condition specifying the choice of relevant hyperparameters: 

\begin{condition}
\label{cond:interaction}
    We adopt the following hyperparameter configurations. 
    \begin{itemize}[itemsep=0pt]
        \item[(a)] For the explicit density estimation, we choose the depth $L$ and width $N$ of the neural network such that $LN \asymp n^{\frac{d^\star}{2(2\beta+d^\star)}}$.
        \item[(b)] For estimation the score function in implicit density estimation, we use  $K \gtrsim (NL)^2$ Monte Carlo samples at each step of the backward diffusion process, along with the early stopping parameter $\delta\asymp (NL)^{-2}$, truncation parameter $T\asymp \log(n)$ parameters, and the discretization hyperparameter $M\asymp (NL)^2$.
    \end{itemize}
\end{condition}

\begin{corollary}
\label{cor:lowdim_factor}
    Under the setting of \cref{thm:main}, suppose further $p_0(y|x) \in \mathcal{H}_{\mathsf{f}}(d_y+d_x, d^\star, \beta, C)$. With neural network hyper-parameter choice in \cref{cond:interaction} (a), the explicit neural density estimator $\hat{p}$ in \cref{algo:ede} satisfies
    \begin{align*}
         \mathsf{R}_{\mathsf{TV}}(p_0, \hat{p}) + \sqrt{\mathsf{R}_{\mathsf{D}_f}(p_0, \hat{p})}= \tilde{O}(n^{-\frac{\beta}{2\beta+d^\star}})
    \end{align*} with probability at least $1-n^{-100}$, where the randomness is taken over the i.i.d. samples $\{(X_i, Y_i, \tilde{Y}_i)\}_{i=1}^n$. With the backward diffusion process hyperparameter choices in \cref{cond:interaction} (b), the distribution $p_{\hat{Y}|X}$ of the samples generated by the implicit neural density estimator in \cref{algo:ide} satisfies
    \begin{align*}
         \mathsf{R}_{\mathsf{TV}}(p_0, p_{\hat{Y}|X}) + \sqrt{\mathsf{R}_{\mathsf{KL}}(p_0, p_{\hat{Y}|X})}= \tilde{O}(n^{-\frac{\beta}{2\beta+d^\star}}).
    \end{align*} Here $\tilde{O}(\cdot)$ absorbs the constants $(d_y, d_x, d^\star, \beta, C, c_1, c_2)$ and $\mathrm{poly}(\log(n))$ factors.
\end{corollary}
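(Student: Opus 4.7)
}

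The plan is to specialize the oracle inequalities of Theorems \ref{thm:main} and \ref{thm:main2} to the factorizable class $\mathcal{H}_{\mathsf{f}}(d_x+d_y, d^\star, \beta, C)$, with the key step being a neural‐network approximation bound for $\log p_0$ whose rate depends only on the intrinsic dimension $d^\star$. First I would exploit the product form $p_0(y\mid x)=\prod_{|J|\le d^\star} f_J(z_J)$, where $z=(y,x)$, to write $\log p_0 = \sum_J \log f_J$. Using Condition \ref{cond:regularity}, $p_0$ is bounded above and below, so without loss of generality each factor $\log f_J$ can be taken to be $(\beta,C')$-smooth on its $|J|\le d^\star$ variables (rescaling constants into the clique potentials, as in the Hammersley--Clifford representation for strictly positive MRFs). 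Thus $\log p_0$ decomposes as a sum of finitely many (a constant in $d$) $d^\star$-variate $\beta$-smooth functions with uniformly bounded H\"older norms.

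Next I would invoke a standard Yarotsky-type ReLU approximation result: for each $d^\star$-variate $\beta$-smooth $\log f_J$ there exists a network $g_J \in \mathcal{H}_{\mathtt{nn}}(|J|, L, N, R)$ with
\[
\|g_J - \log f_J\|_\infty \lesssim (NL)^{-2\beta/d^\star}.
\]
Parallel composition and a single linear output layer (absorbed into constants on $L,N$ up to multiplicative factors) produce a network $g \in \mathcal{G}=\mathcal{H}_{\mathtt{nn}}(d_x+d_y,L,N,R)$ with $\|g-\log p_0\|_\infty \lesssim (NL)^{-2\beta/d^\star}$, hence
\[
\inf_{g\in\mathcal{G}}\|g-\log p_0\|_2^2 \;\lesssim\; (NL)^{-4\beta/d^\star}.
\]
Plugging this into Theorem \ref{thm:main} and balancing $(NL)^{-4\beta/d^\star}\asymp (NL)^2\log(n)/n$ gives the choice in Condition \ref{cond:interaction}(a), namely $(NL)^2 \asymp n^{d^\star/(2\beta+d^\star)}$, and hence $\delta_{\mathrm{stat}} = \tilde{O}\bigl(n^{-2\beta/(2\beta+d^\star)}\bigr)$ with probability at least $1-n^{-100}$ after choosing $t\asymp\log n$. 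Corollary \ref{cor:main} then upgrades this $L_2$ bound to $\mathsf{R}_{\mathsf{TV}}(p_0,\hat{p})+\sqrt{\mathsf{R}_{\mathsf{D}_f}(p_0,\hat p)} = \tilde{O}(n^{-\beta/(2\beta+d^\star)})$, which handles the explicit part.

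For the implicit part, I would plug the explicit rate into Theorem \ref{thm:main2} and verify that each of the four summands in $\delta_{\mathrm{all}}$ is at most $\tilde{O}(\delta_{\mathrm{stat}})$ under Condition \ref{cond:interaction}(b). With $K\gtrsim(NL)^2 \asymp \delta_{\mathrm{stat}}^{-1}$, the Monte Carlo part of $\delta_{\mathrm{score}}(t)$ is $\tilde{O}(\delta_{\mathrm{stat}})$ uniformly in $t$, while the first contribution is $\tilde{O}(\delta_{\mathrm{stat}}/\sigma_t^2)$. Using $\sigma_t^2 = 1-e^{-2t}$ and the two-regime schedule \eqref{eq:discrete-timestep}, a direct computation shows $\sum_{m=0}^{M}(t_{m+1}-t_m)/\sigma_{T-t_m}^2 \lesssim T+\log(1/\delta) \asymp \log n$, so the aggregated score term is $\tilde{O}(\delta_{\mathrm{stat}})$. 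The truncation term $d_y e^{-2T}=O(n^{-2})$ is negligible for $T\asymp\log n$; the early-stopping term $\delta\asymp\delta_{\mathrm{stat}}$ is of the right order by design; and $d_y[T+\log(1/\delta)]^2/M \asymp \log^2(n)\,\delta_{\mathrm{stat}}$ with $M\asymp\delta_{\mathrm{stat}}^{-1}$. Summing yields $\delta_{\mathrm{all}}=\tilde{O}(\delta_{\mathrm{stat}})$, and Theorem \ref{thm:main2} delivers the claimed implicit rate.

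The main obstacle is the approximation step, more precisely justifying that one may reduce the factorized density to a \emph{sum} of bounded, $d^\star$-variate $\beta$-smooth log-potentials (so that the ReLU approximation rate depends on $d^\star$ and not on $d$), and then verifying that the additive composition can be realized within the ambient network class $\mathcal{G}$ without inflating $(L,N)$ beyond constants. Everything downstream—balancing bias and variance, and controlling the $1/\sigma_t^2$ blow-up in $\delta_{\mathrm{score}}$ via the exponential time discretization—is then a routine bookkeeping exercise absorbing polynomial-logarithmic factors into $\tilde{O}(\cdot)$.
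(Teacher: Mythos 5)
Your proposal is correct, and its overall skeleton (approximation bound for $\log p_0$ adapted to the intrinsic dimension $d^\star$, bias--variance balancing in \cref{thm:main} with $t\asymp \log n$, then \cref{cor:main} for the explicit rates and \cref{thm:main2} with the stated hyperparameters for the implicit rates) coincides with the paper's. The one step you handle genuinely differently is the approximation error: the paper does not construct the approximant directly, but instead observes that $\log p_0 \in \mathcal{H}_{\mathsf{hcm}}(d,2,\mathcal{O},C)$ (and $p_0\in\mathcal{H}_{\mathsf{hcm}}(d,3,\mathcal{O},C)$) with $\alpha^\star=\beta/d^\star$, reducing \cref{cor:lowdim_factor} to \cref{cor:lowdim_compose}, whose proof cites an existing HCM approximation theorem (Theorem 4 of \cite{fan2024factor}, constant depth, width $N$, $L_2$ error $N^{-2\alpha^\star}$). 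Your route—writing $\log p_0=\sum_J \log f_J$ and approximating each $d^\star$-variate log-potential by a Yarotsky-type ReLU network at rate $(NL)^{-2\beta/d^\star}$, then stacking the subnetworks in parallel—is more self-contained and elementary for this factorizable case, at the cost of redoing a construction the HCM machinery gives for free; the paper's reduction buys generality (the same citation covers \cref{cor:lowdim_compose}) and avoids the parallel-composition bookkeeping. Note also that the obstacle you flag—that the individual factors $f_J$ need to be bounded away from $0$ and $\infty$ with controlled constants so that each $\log f_J$ is $(\beta,C')$-smooth (Condition \ref{cond:regularity} only controls the product)—is present in exactly the same form in the paper's claim that $\log p_0$ lies in a level-2 HCM, and is passed over there without comment; so you are at parity with the paper on this point, and your explicit acknowledgment of it is if anything the more careful treatment. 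Your minor imprecision that the Monte Carlo part of $\delta_{\rm score}(t)$ is $\tilde O(\delta_{\rm stat})$ ``uniformly in $t$'' (it also carries the $1/\sigma_t^2$ factor) is harmless, since your aggregation bound $\sum_m (t_{m+1}-t_m)/\sigma^2_{T-t_m}\lesssim T+\log(1/\delta)\asymp\log n$ absorbs it; in fact the paper simply invokes the final display of \cref{thm:main2} rather than re-deriving this bookkeeping.
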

It is immediately evident from the above Corollary that CINDES can achieve a minimax optimal rate (up to a log factor) \emph{without knowing the graph explicitly}; all we need to know is $(\beta, d^*)$ (or any upper bound thereof). 
It is also possible to adapt to the unknown parameters $(\beta, d^*)$ by employing a truncated $\ell_1$-norm penalty on the weights of the neural network (see \cite{fan2024factor} for details). However, we choose not to pursue this direction in order to maintain the clarity of exposition.

\subsection{Low-dimensional compositional structures}

Neural networks are known for their capability to be adaptive to the low-dimensional composition structures both empirically \citep{sclocchi2025phase} and theoretically in regression tasks \citep{fan2024factor, kohler2021rate, schmidt2020nonparametric}. 
In this section, we extend that result to the implicit and the explicit density estimation. 
Towards that goal, we first introduce the definition of hierarchical composition models: 

\begin{definition}[Hierarchical composition model $\mathcal{H}_{\mathsf{hcm}}(d, l, \mathcal{O}, C)$]
\label{hcm}
    We define function class of hierarchical composition model $\mathcal{H}_{\mathsf{hcm}}(d, l, \mathcal{O}, C)$ \citep{kohler2021rate} with $l, d \in \mathbb{N}^+$, $C\in \mathbb{R}^+$, and $\mathcal{O}$, a subset of $[1,\infty) \times \mathbb{N}^+$, in a recursive way as follows. Let $\mathcal{H}_{\mathsf{h}}(d, 0,\mathcal{O}, C)=\{h(x)=x_j, j\in [d]\}$, and for each $l\ge 1$, 
    \begin{align*}
    \mathcal{H}_{\mathsf{hcm}}(d, l,\mathcal{O}, C) = \big\{&h: \mathbb{R}^d \to \mathbb{R}: h(x) = g(f_1(x),...,f_t(x))\text{, where} \\
    &~~~~~ g\in \mathcal{H}_{\mathsf{h}}(t, \beta, C) \text{ with } (\beta, t)\in \mathcal{O} \text{ and } f_i \in \mathcal{H}_{\mathsf{hcm}}(d, l-1,\mathcal{O}, C)\big\}.
\end{align*} 
\end{definition}
Basically, $\mathcal{H}_{\mathsf{hcm}}(d, l, \mathcal{O}, C)$ consists of all functions that are composed $l$ times of functions of $t$ dimensions with smoothness $\beta$ with $(\beta, t)\in \mathcal{O}$.
Here, we assume that all components are at least Lipschitz functions to simplify the presentation, as in \cite{kohler2021rate}. For standard regression task, the minimax optimal $L_2$ estimation risk over $\mathcal{H}(d, l, \mathcal{O}, C_h)$ is $n^{-\alpha^\star/(2\alpha^\star + 1)}$, where $\alpha^\star = \min_{(\beta, t)\in \mathcal{O}} (\beta/t)$ is the smallest dimensionality-adjusted degree of smoothness \citep{fan2024factor} that represents the hardest component in the composition. The hierarchical composition model also admits the factorizable structure $\mathcal{H}_{\mathsf{f}}(d, d^\star, \beta, C)$ as special cases with $\alpha^\star = \beta/d^\star$, yet includes more functions with intrinsic low-dimensional structures. For example, if $f(x) = f_1(x_2, x_6) \cdot f_2(f_3(x_2, x_3), f_4(x_4, x_5)) \cdot f_5(x_1, x_3, x_5)$ and all functions have a bounded second derivative $\beta=2$, then the hardest component is the last one, and the dimensionality-adjusted degree of smoothness is $\alpha^* = 2/3$ rather than $\beta/d = 2/6=1/3$ or $\beta/d^\star=2/4=1/2$.

With the choice of the hyperparameters in \cref{cond:hcm}, our CINDES estimator can also achieve an optimal rate when the conditional density functions lie within the hierarchical composition model defined in \cref{hcm}. 
\begin{condition}
\label{cond:hcm}
    We adopt the following hyperparameter configurations. 
    \begin{itemize}[itemsep=0pt]
        \item[(a)] The neural network depth $L$ and width $N$ satisfying $LN \asymp n^{\frac{1}{2(2\alpha^\star + 1)}}$.
        \item[(b)] For estimation of the score function in implicit density estimation, we use the same hyperparameter setting as in Condition \ref{cond:interaction} with the choice of $NL$ mentioned in (a). 
    \end{itemize}
\end{condition}

\begin{corollary}
\label{cor:lowdim_compose}
    Under the setting of \cref{thm:main}, suppose further $p_0(y|x) \in \mathcal{H}_{\mathsf{hcm}}(d, l, \mathcal{O}, C)$. With neural network hyper-parameter choice in \cref{cond:hcm} (a), the explicit neural density estimator $\hat{p}$ in \cref{algo:ede} satisfies
    \begin{align*}
         \mathsf{R}_{\mathsf{TV}}(p_0, \hat{p}) + \sqrt{\mathsf{R}_{\mathsf{D}_f}(p_0, \hat{p})}= \tilde{O}(n^{-\frac{\alpha^\star}{2\alpha^\star+1}})
    \end{align*} with probability at least $1-n^{-100}$, where the randomness is taken over the i.i.d. samples $\{(X_i, Y_i, \tilde{Y}_i)\}_{i=1}^n$. With the backward diffusion process hyperparameter choices in \cref{cond:hcm} (b), the distribution $p_{\hat{Y}|X}$ of the samples generated by the implicit neural density estimator in \cref{algo:ide} satisfies
    \begin{align*}
         \mathsf{R}_{\mathsf{TV}}(p_0, p_{\hat{Y}|X}) + \sqrt{\mathsf{R}_{\mathsf{KL}}(p_0, p_{\hat{Y}|X})}= \tilde{O}(n^{-\frac{\alpha^\star}{2\alpha^\star+1}}).
    \end{align*} Here $\tilde{O}(\cdot)$ absorbs the constants $(l, \sup_{\beta, t\in \mathcal{O}} (\beta \lor t), C, d_y, d_x, c_1, c_2)$ and $\mathrm{poly}(\log(n))$ factors.
\end{corollary}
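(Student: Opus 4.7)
The plan is to combine the oracle inequality in Theorem~\ref{thm:main} with a known neural-network approximation theorem for the hierarchical composition class, then transport the resulting $L_2$-rate to $\mathsf{TV}$, $\mathsf{D}_f$, and the sampling risks through Corollary~\ref{cor:main} and Theorem~\ref{thm:main2}. More precisely, Theorem~\ref{thm:main} says that
\begin{equation*}
\|\hat p - p_0\|_2^2 \;\lesssim\; \inf_{g\in\mathcal{G}}\|g-\log p_0\|_2^2 \;+\; \frac{(NL)^2\log n + t}{n},
\end{equation*}
so the task reduces to (i) controlling the approximation error of $\log p_0$ by a deep ReLU network and (ii) balancing that error against the stochastic term $(NL)^2\log n/n$.

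For (i) I would first argue that $\log p_0 \in \mathcal{H}_{\mathsf{hcm}}(d,\,l+1,\,\mathcal{O}\cup\{(\beta_0,1)\},\,C')$ for some $C'$ depending on $(c_1,c_2)$ and some $\beta_0$ as large as desired: under Condition~\ref{cond:regularity}, $p_0$ takes values in $[c_1^{-1},c_1]$, and on this compact interval $\log$ is $C^\infty$ with uniformly bounded derivatives of every order, so composing $p_0$ with $\log$ preserves membership in an HCM class and can only change the dimensionality-adjusted smoothness through the outer $\log$, which is univariate and of arbitrarily high smoothness, hence leaves $\alpha^\star$ unchanged. Then I would invoke the standard deep ReLU approximation theorem for HCMs (e.g.\ \cite{kohler2021rate,schmidt2020nonparametric,fan2024factor}): for a network of width $N$ and depth $L$ there exists $g\in\mathcal{H}_{\mathtt{nn}}(d,L,N,R)$ with
\begin{equation*}
\|g - \log p_0\|_\infty \;\lesssim\; (NL)^{-2\alpha^\star}\log(NL),
\end{equation*}
and the truncation at level $R\gtrsim\log c_1$ does not hurt since $\log p_0\in[-\log c_1,\log c_1]$. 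Combined with (i), the right-hand side of Theorem~\ref{thm:main} becomes $O\bigl((NL)^{-4\alpha^\star}\log^2(NL)+(NL)^2\log n/n\bigr)$; the choice $NL\asymp n^{1/(2(2\alpha^\star+1))}$ in Condition~\ref{cond:hcm}(a) balances the two and yields $\|\hat p - p_0\|_2^2 = \widetilde O\bigl(n^{-2\alpha^\star/(2\alpha^\star+1)}\bigr)$ with probability $1-n^{-100}$ after taking $t\asymp\log n$. Corollary~\ref{cor:main} then immediately gives the announced $\mathsf{R}_{\mathsf{TV}}+\sqrt{\mathsf{R}_{\mathsf{D}_f}}$ bound.

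For the implicit estimator I would feed $\delta_{\text{stat}}=\widetilde O\bigl(n^{-2\alpha^\star/(2\alpha^\star+1)}\bigr)$ into Theorem~\ref{thm:main2}. Condition~\ref{cond:hcm}(b) takes $K,M\asymp (NL)^2\asymp \delta_{\text{stat}}^{-1}$, $\delta\asymp\delta_{\text{stat}}$, and $T\asymp\log n$, which is exactly the hyperparameter choice required there. Theorem~\ref{thm:main2} then yields $\mathsf{R}_{\mathsf{TV}}(p_0,p_{\hat Y|X})\le \sqrt{C_1\delta_{\text{stat}}}\log n$ and $\mathsf{R}_{\mathsf{KL}}(p_0,p_{\hat Y|X})\le C_1\delta_{\text{stat}}\log^2 n$, producing the claimed $\widetilde O\bigl(n^{-\alpha^\star/(2\alpha^\star+1)}\bigr)$ rate.

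The main obstacle I anticipate is the bookkeeping in step (i): verifying cleanly that taking the logarithm of an HCM function preserves the class with the \emph{same} dimensionality-adjusted smoothness $\alpha^\star$ so that the approximation theorem applies with the advertised exponent. Once that lemma is in hand the rest is a balancing computation and an appeal to results that are already stated in the paper, and the remaining conditional-density aspect (the extra $d_x$ covariates) is absorbed by the fact that the ambient dimension enters the HCM rate only through $\alpha^\star$, not through $d$ itself.
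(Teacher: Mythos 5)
Your proposal is correct and follows essentially the same route as the paper's own proof: argue that $\log p_0$ remains in a hierarchical composition class with the same $\alpha^\star$ (the paper asserts this directly, citing the smoothness of $\log$ on $[c_1^{-1},c_1]$), invoke the deep ReLU approximation theorem for HCMs (Theorem 4 of \cite{fan2024factor}), balance the approximation and stochastic terms in Theorem~\ref{thm:main} under the choice of $NL$ in Condition~\ref{cond:hcm}(a), and then transfer the rate via Corollary~\ref{cor:main} and Theorem~\ref{thm:main2}. Your extra care in step (i) about why the outer $\log$ does not degrade $\alpha^\star$ is, if anything, slightly more explicit than the paper's one-line justification.
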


\section{Simulation Studies}
\label{sec:simulation}
In this section, we evaluate the empirical performance of CINDES against the following three competing methods (if applicable). 
\begin{itemize}
    \item[(1)] Random Forest Classifer Density Estimator (RFCDE): It shares a similar idea with our CINDES estimator, where the machine learning module is replaced by a random forest (instead of a neural network). It is applicable to all the density estimation tasks.
    \item[(2)] Masked Autoregressive Flow (MAF) \citep{papamakarios2017masked}: It is a neural density estimator in the family of normalizing flows, where the target distribution of the response is modeled as a base measure pushed forward by a series of invertible transforms that are parameterized by neural networks. 
    \item[(3)] LinCDE \citep{gao2022LinCDE}: The estimator uses tree boosting and Lindsey's method to estimate the conditional density, but is only applicable for univariate response $Y$.
\end{itemize}

\noindent{\textbf{Implementation.}} For our estimator and the RFCDE, the ``fake samples" $\tilde Y_1, \dots, \tilde Y_n \in \mathbb{R}^{d_y}$ are sampled uniformly from $\hat{\mathcal{Y}} = \prod_{j=1}^{d_y}[\min_{i} Y_{i,j}, \max_{i} Y_{i,j}]$. As for the neural network architecture, we adopt a fully connected neural network with depth 3, and width 64 for our CINDES estimator. For MAF, we employ a 3-layer architecture of a normalizing flow model with 2 sequential transformations, each implemented by a masked autoregressive layer with 64 hidden features and a standard Gaussian as a base density. The weights for both neural network estimators are optimized using the Adam optimizer with a learning rate of $10^{-3}$, $L_2$ regularization with a hyper-parameter picked from $\{10^{-3}, 5 \times 10^{-4}, 2\times 10^{-4}, 10^{-4}, 0\}$ and early stopping using another validation set. For model selection, we pick the model that minimizes the negative log-likelihood (NLL) $\sum_{(x,y) \in \mathcal{D}_{valid}} \log \hat{p}(y|x)$ using validation set $|\mathcal{D}_{valid}| = 0.25 |\mathcal{D}_{train}| = 0.25 n$. For RFCDE, we use a random forest with 200 trees, where each tree has a maximum depth of 12. The LinCDE for conditional density estimation on univariate response uses default hyperparameters.
We evaluate different estimators using the empirical TV distance between the estimated density and the ground-truth density under $(x,y) \in \mathcal{D}_{test} \overset{i.i.d.}{\sim} \mu_{0, x} \times \mathrm{Uniform}(\mathcal{Y})$ by
$$
\hat{\mathsf{TV}}(\hat{p},p_0) = \frac{1}{|\mathcal{D}_{test}|}\sum_{(x,y)\in \mathcal{D}_{test}}|\hat{p}(y|x) - p_0(y|x)|
$$
Especially, for unconditional density estimation $X =  \varnothing$, $\hat{\mathsf{TV}}(\hat{p},p_0) = \frac{1}{|\mathcal{D}_{test}|}\sum_{y\in \mathcal{D}_{test}}|\hat{p}(y) - p_0(y)|$.
Section~\ref{sec:uncond} presents simulations for unconditional density estimation, and Section~\ref{sec:cond} presents simulations for conditional density estimation.

\subsection{Unconditional density estimation}
\label{sec:uncond}
In this section, we empirically compare the performance of CINDES for unconditional density estimation (as outlined in Algorithm \ref{algo:ede} with $X =  \varnothing$) with RFCDE and MAF. 

\medskip
\noindent \textbf{Data Generating Process.} We consider the following two bivariate distributions. The ground-truth density function is visualized in the first column of \cref{fig:unconditional}.
\begin{enumerate}
    \item [(a)] \underline{Spherical Gaussian mixture.} In this case, the observations are generated from a mixture of 6 Gaussian distributions $Y \sim \frac16 \sum_{j = 1}^6 \cN(\mu_j, 0.01 \mathbf{I}_2)$ with $\mu_j = \left(\frac12 \cos\left(\frac{2\pi j}{6}\right), \frac12\sin\left(\frac{2\pi j}{6}\right)\right) $.
    \item [(b)] \underline{Elliptical Gaussian mixture.} The data-generating process is similar to the previous setup, but we choose $Y \sim \frac18 \sum_{j = 1}^8 \cN(\mu_j, \boldsymbol{\Sigma}_j)$ where $\mu_j = \left(3 \cos\left(\frac{\pi j}{4}\right), 3\sin\left(\frac{\pi j}{4}\right)\right)$ and $$\mathbf{\Sigma}_j = \left[\begin{array}{cc}\cos ^2 \frac{\pi j}{4}+0.16^2 \sin ^2 \frac{\pi j}{4} & \left(1-0.16^2\right) \sin \frac{\pi i}{4} \cos \frac{\pi i}{4} \\ \left(1-0.16^2\right) \sin \frac{\pi j}{4} \cos \frac{\pi j}{4} & \sin ^2 \frac{\pi j}{4}+0.16^2 \cos ^2 \frac{\pi j}{4}\end{array}\right].$$
\end{enumerate}
Different estimators observe $Y_1, \dots, Y_n \overset{i.i.d.}{\sim} F$ where $n = 12000$ and $F$ varies among the two choices mentioned above.

\noindent \textbf{Results.} The TV distances of the procedures are presented in Table \ref{tab:uncond}, where our method has significantly smaller TV distance than other methods. We further assess all density estimates over $\mathcal{Y}$, discretized into a $100 \times 100$ evaluation test grid in Figure \ref{fig:unconditional}. 
Each row of the figure represents one of the two data-generating distributions, and each column presents a method for estimating density (with the left-most column being the true density).
It is immediate from the figure that for six mixture Gaussian distribution (both with constant variance (a) and non-constant variance (b)), CINDES cleanly recovers the mixture components without any spurious ``bridges'' between them; other estimators either produce blocky, grid‐like artifacts with residual noise between clusters (RFCDE) or overly smooth, unrealistic connections linking separate modes (MAF). 

\begin{figure}[ht]
  \centering
     \subfigure[Spherical Gaussian mixture]{
         \centering
         \includegraphics[width = \textwidth]{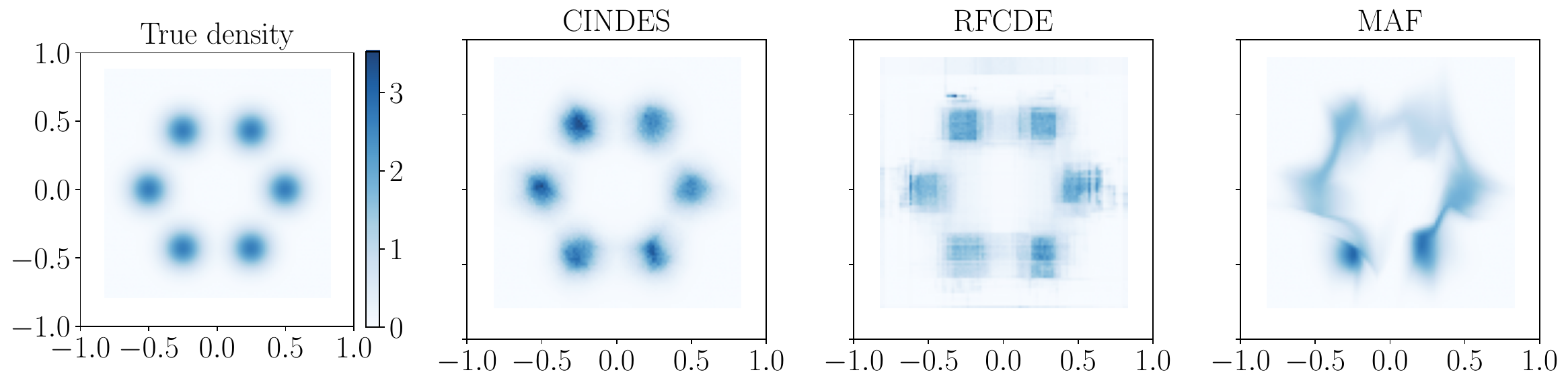}
     }
     \subfigure[Elliptical Gaussian mixture]{
         \centering
         \includegraphics[width = \textwidth]{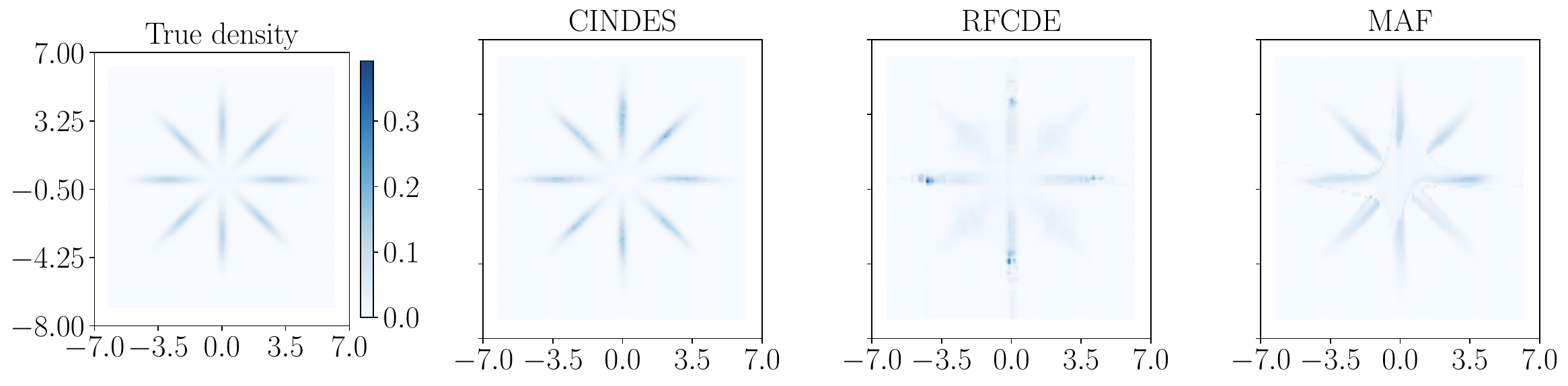}
     }
     \caption{True density and estimated density by different density estimators (CINDES, RFCDE, and MAF) in one trial for two data-generating processes. Density plots were shown on a $100 \times 100$ grid 2D bounded region. Ground-truth densities were shown in the first column. Each row plots results of one data-generating process: (a) Spherical Gaussian mixture; (b) Elliptical Gaussian mixture.} 
     \label{fig:unconditional}
\end{figure}  

\begin{table}[htbp]
\centering \small
\begin{tabular}{lccc}
\toprule
 & CINDES & RFCDE & MAF \\
\midrule
Spherical Gaussian Mixture & $\textbf{0.0475} \pm 0.0040 $ & $0.1282\pm 0.0074$ & $0.1323 \pm 0.0098$\\
Elliptical Gaussian Mixture & $\textbf{0.0011} \pm 0.0000$ &$0.0047 \pm 0.0003$ &$0.0027 \pm 0.0003$
 \\
\bottomrule
\end{tabular}
\caption{Empirical TV distance for unconditional density estimation under 100 replications. Lower values indicate better performance.}
\label{tab:uncond}
\end{table}

\subsection{Conditional density estimation}
\label{sec:cond}
In this section, we present our simulation results for conditional density estimation. We consider two different scenarios: i) when the conditioning variable $X$ is multivariate but the response variable $Y$ is univariate, and ii) when both $X$ and $Y$ are multivariate. 

\subsubsection{Univariate $Y$ and Multivariate $X$}
\noindent \textbf{Data generating process.}
The responses $Y \in \mathbb{R}$ given the covariates $X \in \mathbb{R}^{d_x}$ simulated from Uniform $(\mathcal{X})$ are generated from the following distribution. We generate $(X_i, Y_i)_{i=1}^n$ with different training sample sizes $n = 500, 2000, 8000$, validating dataset with 1/4 corresponding training sample size and test data $(x,y)\in \mathcal{D}_{test} \overset{i.i.d}{\sim}\mu_{0,x} \times \mathrm{Uniform}(\mathcal{Y})$ with $|\mathcal{D}_{test}| = 500^2$.
\begin{enumerate}
\item [(a)] \underline{Nonlinear model.} 
In this case, the covariate $X$ is generated uniformly from $\mathcal{X} = [-1, 1]^{4}$ and the response is generated as $Y|X = x \sim p(y|x) = \frac{1}{2}\left(1-y\tanh\left(\sin(x_1) + x_2^2 - \frac{1}{2}x_3\right)\right)$ with support $Y \subset[-1,1]$ .
    
\item [(b)] \underline{Additive model.} 
In this case, $X$ is generated uniformly from $\mathcal{X} =[0,1]^{d_x}$ (with $d_x = 20$) and the response $Y$ is generated as $Y|X = x \sim\mathcal{TN}_1(\mu(x), 2^2)$ with $\mu(x) = \sum_{j=1}^5 \mu_j(x_j)$, where $\mathcal{TN}_M(\mu,\sigma^2)$ is the truncated normal distribution within the interval $[-M, M]$. 
The mean functions $\{\mu_j(x)\}_{j = 1}^5$ are randomly selected from the set of univariate functions $\{\cos (\pi x), \sin(x), (1-|x|)^2, (1+e^{-x})^{-1}, 2\sqrt{|x|}-1\}$. 
    
\item [(c)] \underline{Gaussian mixture.} 
In this case, the covariate $X$ is generated uniformly from $\mathcal{X} = [0, 1]^{d_x}$ (with $d_x = 4$) and the response is generated from a mixture normal distribution as: $Y|X = x \sim (1-\pi(x))\mathcal{TN}_{0.85}(\mu_1(x), 0.15^2) + \pi(x)\mathcal{TN}_{0.85}(\mu_2(x), 0.12^2)$, where $\pi(x) = (1+\exp(-0.2-1.2x_1+0.8x_2-0.6x_3+0.4x_4))^{-1}$, $\mu_1(x) = 0.6x_1-0.3x_2+0.2x_3+0.4\sin(2\pi x_1)+0.2\cos(2\pi x_2)$ and $\mu_2(x) =-0.5 x_1+0.2 x_2-0.25 x_3+0.1 x_4-0.35 \sin \left(2 \pi x_1\right)+0.25 \cos \left(2 \pi x_3\right)$.
\end{enumerate}

\noindent{\bf Results.}
Our simulation results are presented in Table \ref{tab:tv_results_univariate_y}. We report the average TV distance between the estimated density and the ground-truth density, averaged over $100$ Monte Carlo replications. The results in Table \ref{tab:tv_results_univariate_y} show that CINDES consistently achieves smaller estimation error than RFCDE, MAF, and LinCDE across all sample sizes, demonstrating the efficacy of our proposed method. 

\begin{table}[htbp]
\centering \small
\begin{tabular}{cccccc}
\toprule
Experiment & Sample size &CINDES  & RFCDE & MAF & LinCDE \\
\midrule
I(a) &500 & $\textbf{0.0938} \pm 0.027$ & $0.1668 \pm 0.018$ & $0.1763 \pm 0.017$ & $0.1029 \pm 0.016$ \\
&2000 & $\textbf{0.0665} \pm 0.011$ & $0.1173 \pm 0.010$ & $0.1626 \pm 0.011$ & $0.0767 \pm 0.008$ \\
&8000 & $\textbf{0.0473} \pm 0.007$ & $0.0812 \pm 0.004$ & $0.1576 \pm 0.007$ & $0.0559 \pm 0.004$ \\
\hline
I(b) &500 & $\textbf{0.0677} \pm 0.026$ & $0.0907 \pm 0.012$ & $0.1709 \pm 0.021$ & $0.1014 \pm 0.013$ \\
&2000 & $\textbf{0.0550} \pm 0.014$ & $0.0701 \pm 0.006$ & $0.1609 \pm 0.012$ & $0.0661 \pm 0.009$ \\
&8000 & $\textbf{0.0418} \pm 0.008$ & $0.0520 \pm 0.005$ & $0.1567 \pm 0.007$ & $0.0470 \pm 0.005$ \\
    \hline
I(c) &500 & $0.3779 \pm 0.025$ & $0.3859 \pm 0.015$ & $0.3632 \pm 0.019$ & $\textbf{0.3503} \pm 0.012$ \\
&2000 & $\textbf{0.2609} \pm 0.019$ & $0.3282 \pm 0.012$ & $0.2827 \pm 0.012$ & $0.3233 \pm 0.009$ \\
&8000 & $\textbf{0.1684} \pm 0.009$ & $0.2879 \pm 0.009$ & $0.2549 \pm 0.011$ & $0.3190 \pm 0.009$ \\
\bottomrule
\end{tabular}
\caption{Empirical TV distance for each estimator across different experiments and different training sample sizes under 100 replications. Lower values indicate better performance.}
\label{tab:tv_results_univariate_y}
\end{table}

\subsubsection{Multivariate $X$ and multivariate $Y$}
In this subsection, we explore the situation when we have a multivariate covariate $X$, a multivariate response $Y$, and we aim to estimate the conditional density of $Y$ given $X$. 
\\\\
\noindent
{\bf Data generating process.}
Here we generate the covariate/conditioning variable $X$ uniformly from $[0, 1]^{d_x}$ with $d_x  = 16$. The response variable $Y \in \reals^{d_y}$, with $d_y = 4$, is generate as $Y \mid X = x \sim \mathcal{TN}_1(Wx, I_{d_y})$.
Here the matrix $W \in \reals^{d_y \times d_x}$. 
Each row of $W$ is generated from a Dirichlet distribution with $\mathbf{\alpha} = \mathbf{1}_{d_x}$ and kept fixed throughout the experiment.

\begin{table}[htbp]
\centering 
\begin{tabular}{cccc}
\toprule
Sample size &CINDES  & RFCDE & MAF  \\
\midrule
500 & $\textbf{0.0161} \pm 0.002$ & $0.0190 \pm 0.002$ & $0.0313 \pm 0.002$ \\
2000 & $\textbf{0.0144} \pm 0.001$ & $0.0148 \pm 0.001$ & $0.0301 \pm 0.001$ \\
8000 & $\textbf{0.0105} \pm 0.002$ & $0.0113 \pm 0.001$ & $0.0245 \pm 0.003$ \\\bottomrule
\end{tabular}
\caption{Empirical TV distance for each estimator and different training sample sizes under 100 replications. Lower values indicate better performance.}
\label{tab:tv_results_multivariate_y}
\end{table}

\noindent
{\bf Results. } 
Our results are summarized and presented in Table \ref{tab:tv_results_multivariate_y}. As before, we compare our method against RFCDE and MAF. However, we exclude LinCDE from this comparison as it is not practically applicable to settings with a multivariate response. The method's reliance on a basis function expansion becomes computationally intractable as the response dimension grows. As in the case of the univariate response in the previous subsection, we vary the sample size $n \in \{500, 2000, 8000\}$, and report the TV distance between the estimated density and the true density averaged over 100 Monte Carlo repetitions. Here also, CINDES yields a smaller estimator error across all sample sizes compared to other methods, which establishes the efficacy of our proposed methodology.

\section{Real data analysis}
\label{sec:real_data}
In this section, we showcase the performance of our methodology on estimating conditional density using the light tunnel dataset of \cite{gamella2025chamber}. 
The light tunnel is a physical chamber with a controllable light source at one end and two linear polarizers mounted on rotating frames. Sensors measure the light intensity before, between, and after the polarizers. 
Specifically, we aim to estimate the joint and/or conditional probability densities of the system's key variables: the angles of the two polarizers and the light intensities recorded by the sensors.
The variables we consider in the experiment are $(R, G, B, \tilde{C}, \theta_1, \theta_2, \tilde{I}_1, \tilde{V}_1, \tilde{I}_2, \tilde{V}_2, \tilde{I}_3, \tilde{V}_3)$, where $(R, G, B)$ is the brightness of the red, green and blue LEDs on the main light source, $\tilde{C}$ is the electric current drawn by the light source, $(\theta_1, \theta_2)$ are the angles of the polarizer frames, and $(\tilde{I}_1, \tilde{V}_1, \tilde{I}_2, \tilde{V}_2, \tilde{I}_3, \tilde{V}_3)$ represent the measurement of light-intensity sensors placed at different positions of polarizers. To make the scale homogeneous, we first standardize each variable by subtracting its mean and dividing it by its standard deviation. 
Now we consider discussing the conditional density of response variables given other variables. 
We divide the conditional density estimation into two categories: 

\noindent (a) \underline{Univariate conditional density estimation:} We pick one of the variables as the response, and the goal is to estimate the conditional density of the selected variable given other variables.

\noindent (b) \underline{Multivariate conditional density estimation: } We consider a group of the variables as a multivariate response, and the goal is to estimate the conditional density of this group of response variables given other variables. The groups that we consider here as response variable are $(R, G, B, \tilde{C})$, $(\theta_1, \theta_2)$, $(\tilde{I}_1, \tilde{V}_1, \tilde{I}_2, \tilde{V}_2)$, and $(\tilde{I}_3, \tilde{V}_3)$ given their semantic similarity.  

For our estimator, we use a neural network with depth $L = 3$ and width = $64$. We use the $L_2$ penalty as a regularization technique with early stopping and select the best model determined by the validation set. 
In order to make sure that the support of the fake responses $\tilde Y$ contains the support of the true responses $Y$, we here generate $\tilde Y \sim \cN(\mu, \Sigma)$, where $\mu$ and $\Sigma$ are the estimated mean and variance of the true response $Y$. 
Consequently, our Algorithm \ref{algo:ede} needs to be modified slightly: instead of setting $\hat p(y \mid x) = \exp(\hat f(y \mid x))$, we set it as $\hat p(y \mid x) =  \exp(\hat f(y \mid x)) \phi(y; \mu, \Sigma)$, where $\phi(y; \mu, \Sigma)$ is the Gaussian density with mean $\mu$, variance $\Sigma$, evaluated at $y$. 
As before, we compare our method with MAF, RFCDE, LinCDE, and introduce a new competitor, LocScale-NN (location-scale neural network), which models the conditional density of $Y$ given $X$ as $\cN(\mu(X), \Sigma(X))$ and estimates $\mu(\cdot), \Sigma(\cdot)$ using neural networks. For LocScale-NN, we adopt the same structure as the neural network in our estimators. 
\begin{table}[htbp]
\centering \footnotesize
\begin{tabular}{c c c c c c}
\hline
Response & Loc-ScaleNN & CINDES & RFCDE & MAF & LinCDE \\
\hline
\texttt{red}     & 0.645 $\pm$ 0.470 & \textbf{-0.410} $\pm$ 0.809 & 0.669 $\pm$ 0.022 & -0.259 $\pm$ 0.050 & 0.686 $\pm$ 0.017 \\
\texttt{green}   & 0.537 $\pm$ 0.200 & \textbf{0.134} $\pm$ 0.777 & 0.897 $\pm$ 0.017 & 0.243 $\pm$ 0.052 & 0.998 $\pm$ 0.014 \\
\texttt{blue}    & 0.693 $\pm$ 0.536 & \textbf{-0.671} $\pm$ 0.556 & 0.631 $\pm$ 0.024 & -0.295 $\pm$ 0.057 & 0.633 $\pm$ 0.014 \\
\texttt{current} & 0.970 $\pm$ 0.044 & \textbf{0.868} $\pm$ 0.699 & 0.956 $\pm$ 0.031 & 0.936 $\pm$ 0.038 & 0.969 $\pm$ 0.038 \\
\texttt{pol\_1}  & 1.361 $\pm$ 0.029 & 1.159 $\pm$ 0.365 & \textbf{1.156} $\pm$ 0.013 & 1.369 $\pm$ 0.032 & 1.257 $\pm$ 0.010 \\
\texttt{pol\_2}  & 1.362 $\pm$ 0.031 & 1.160 $\pm$ 0.358 & \textbf{1.146} $\pm$ 0.014 & 1.361 $\pm$ 0.032 & 1.247 $\pm$ 0.015 \\
\texttt{ir\_1}   & 0.532 $\pm$ 0.462 & \textbf{-0.772} $\pm$ 0.596 & 0.645 $\pm$ 0.025 & -0.353 $\pm$ 0.057 & 0.493 $\pm$ 0.027 \\
\texttt{vis\_1}  & 0.502 $\pm$ 0.464 & \textbf{-0.649} $\pm$ 0.667 & 0.629 $\pm$ 0.025 & -0.403 $\pm$ 0.060 & 0.473 $\pm$ 0.023 \\
\texttt{ir\_2}   & 0.585 $\pm$ 0.508 & \textbf{-0.753} $\pm$ 0.602 & 0.627 $\pm$ 0.024 & -0.363 $\pm$ 0.052 & 0.491 $\pm$ 0.024 \\
\texttt{vis\_2}  & 0.449 $\pm$ 0.446 & \textbf{-0.754} $\pm$ 0.624 & 0.627 $\pm$ 0.026 & -0.440 $\pm$ 0.050 & 0.460 $\pm$ 0.029 \\
\texttt{ir\_3}   & 0.493 $\pm$ 0.416 & \textbf{-0.464} $\pm$ 0.714 & 0.666 $\pm$ 0.022 & -0.313 $\pm$ 0.047 & 0.540 $\pm$ 0.029 \\
\texttt{vis\_3}  & 0.413 $\pm$ 0.264 & -0.291 $\pm$ 0.815 & 0.653 $\pm$ 0.023 & \textbf{-0.320} $\pm$ 0.044 & 0.522 $\pm$ 0.032 \\
\hline
\end{tabular}
    \caption{Average NLL across models. Each row corresponds to a response variable; the lowest value per row is bolded.}
    \label{Tab:NLL for real data}
\end{table}

\begin{table}[bp]
    \centering \small
    \begin{tabular}{lcccc}
    \hline
    Response & LocScale-NN & CINDES & RFCDE & MAF \\
    \hline
    \texttt{red}, \texttt{green}, \texttt{blue}, \texttt{current} & $4.637 \pm 0.079$ & $\textbf{1.336} \pm 0.103$ & $5.240 \pm 0.233$ & $1.577 \pm 0.104$ \\
    \texttt{pol\_1}, \texttt{pol\_2} & $2.841 \pm 0.030$ & $2.534 \pm 0.040$ & $\textbf{2.318} \pm 0.033$ & $2.708 \pm 0.048$ \\
    \texttt{ir\_1}, \texttt{vis\_1}, \texttt{ir\_2}, \texttt{vis\_2} & $2.520 \pm 0.844$ & $\textbf{-2.288} \pm 0.365$ & $5.101 \pm 0.257$ & $-0.774 \pm 0.137$ \\
    \texttt{ir\_3}, \texttt{vis\_3} & $1.377 \pm 1.152$ & $-0.153 \pm 0.071$ & $3.038 \pm 0.357$ & $\textbf{-0.158} \pm 0.075$ \\
    \hline
    \end{tabular}
    \caption{Average NLL across models. Each row corresponds to a response variable; the lowest value per row is bolded.}
    \label{tab: NLL for multivariate}
\end{table}

We repeat the experiment 100 times. In each trial, a randomly selected subset of 3000 data is used for all the estimators. Among these selected data, we use 40\% as training data, 10\% as validation data, and 50\% as test data and evaluate the performance of estimators via the normalized NLL, defined as: 
$$
\textsf{NLL}(\hat p) = \frac{1}{n_{test}}\sum_{(x,y)\in \mathcal{D}_{test}} \left[\log \hat{p}(y|x) - \log \left( \frac{\mathrm{Vol}(\hat{\mathcal{Y}})}{n_{\tilde{y}}}\sum_{i = 1}^{n_{\tilde y}}\hat{p}(\tilde{y_i}|x)\right) \right], 
$$
where $\tilde y_1, \dots, \tilde y_{n_{\tilde y}} \overset{i.i.d.}{\sim} \mathrm{Uniform}(\hat{\mathcal{Y}})$. 
The results for the univariate conditional density estimation are presented in Table \ref{Tab:NLL for real data}, and the results for multivariate conditional density estimation are presented in Table \ref{tab: NLL for multivariate}. 
It is immediately from the tables that CINDES outperforms other methods, consistently achieving a smaller negative log-likelihood across different experiments.



\bibliographystyle{apalike2}
\bibliography{main}

\begin{thebibliography}{}

\bibitem[Anderson, 1982]{anderson1982reverse}
Anderson, B.~D. (1982).
\newblock Reverse-time diffusion equation models.
\newblock {\em Stochastic Processes and their Applications}, 12(3), 313--326.

\bibitem[Anthony \& Bartlett, 1999]{AB1999}
Anthony, M. \& Bartlett, P.~L. (1999).
\newblock {\em Neural Network Learning: Theoretical Foundations}.
\newblock Cambridge University Press.

\bibitem[Arjovsky et~al., 2017]{arjovsky2017wasserstein}
Arjovsky, M., Chintala, S., \& Bottou, L. (2017).
\newblock Wasserstein generative adversarial networks.
\newblock In {\em International conference on machine learning}  (pp.\ 214--223).: PMLR.

\bibitem[Bakry et~al., 2013]{bakry2013analysis}
Bakry, D., Gentil, I., \& Ledoux, M. (2013).
\newblock {\em Analysis and geometry of Markov diffusion operators}, volume 348.
\newblock Springer Science \& Business Media.

\bibitem[Bartlett et~al., 2019]{BHLM2019}
Bartlett, P.~L., Harvey, N., Liaw, C., \& Mehrabian, A. (2019).
\newblock Nearly-tight vc-dimension and psuedodimension bounds for piecewise linear neural networks.
\newblock {\em Journal of Machine Learning Research}, 20(63), 1--17.

\bibitem[Benton et~al., 2023]{benton2023nearly}
Benton, J., De~Bortoli, V., Doucet, A., \& Deligiannidis, G. (2023).
\newblock Nearly $ d $-linear convergence bounds for diffusion models via stochastic localization.
\newblock {\em arXiv preprint arXiv:2308.03686}.

\bibitem[Bhattacharya et~al., 2024]{bhattacharya2024deep}
Bhattacharya, S., Fan, J., \& Mukherjee, D. (2024).
\newblock Deep neural networks for nonparametric interaction models with diverging dimension.
\newblock {\em The Annals of Statistics}, 52(6), 2738--2766.

\bibitem[Bickel et~al., 2007]{bickel2007discriminative}
Bickel, S., Br{\"u}ckner, M., \& Scheffer, T. (2007).
\newblock Discriminative learning for differing training and test distributions.
\newblock In {\em Proceedings of the 24th international conference on Machine learning}  (pp.\ 81--88).

\bibitem[Bos \& Schmidt-Hieber, 2023]{bos2023supervised}
Bos, T. \& Schmidt-Hieber, J. (2023).
\newblock A supervised deep learning method for nonparametric density estimation.
\newblock {\em arXiv preprint arXiv:2306.10471}.

\bibitem[Boucheron et~al., 2003]{boucheron2003concentration}
Boucheron, S., Lugosi, G., \& Bousquet, O. (2003).
\newblock Concentration inequalities.
\newblock In {\em Summer school on machine learning}  (pp.\ 208--240). Springer.

\bibitem[Chen et~al., 2024]{chen2024overview}
Chen, M., Mei, S., Fan, J., \& Wang, M. (2024).
\newblock An overview of diffusion models: Applications, guided generation, statistical rates and optimization.
\newblock {\em National Science Review}, 11(12), nwae348.

\bibitem[Cheng \& Chu, 2004]{cheng2004semiparametric}
Cheng, K.~F. \& Chu, C.-K. (2004).
\newblock Semiparametric density estimation under a two-sample density ratio model.
\newblock {\em Bernoulli}, 10(4), 583--604.

\bibitem[Cranmer et~al., 2020]{cranmer2020frontier}
Cranmer, K., Brehmer, J., \& Louppe, G. (2020).
\newblock The frontier of simulation-based inference.
\newblock {\em Proceedings of the National Academy of Sciences}, 117(48), 30055--30062.

\bibitem[de~la Pena \& Montgomery-Smith, 1995]{de1995decoupling}
de~la Pena, V.~H. \& Montgomery-Smith, S.~J. (1995).
\newblock Decoupling inequalities for the tail probabilities of multivariate u-statistics.
\newblock {\em The Annals of Probability}, (pp.\ 806--816).

\bibitem[Efroimovich \& Pinsker, 1982]{efroimovich1982estimation}
Efroimovich, S.~Y. \& Pinsker, M.~S. (1982).
\newblock Estimation of square-integrable probability density of a random variable.
\newblock {\em Problemy Peredachi Informatsii}, 18(3), 19--38.

\bibitem[Fan \& Gu, 2024]{fan2024factor}
Fan, J. \& Gu, Y. (2024).
\newblock Factor augmented sparse throughput deep relu neural networks for high dimensional regression.
\newblock {\em Journal of the American Statistical Association}, 119(548), 2680--2694.

\bibitem[Fan et~al., 2024]{fan2024noise}
Fan, J., Gu, Y., \& Zhou, W.-X. (2024).
\newblock How do noise tails impact on deep relu networks?
\newblock {\em The Annals of Statistics}, 52(4), 1845--1871.

\bibitem[Fan et~al., 2013]{fan2013large}
Fan, J., Liao, Y., \& Mincheva, M. (2013).
\newblock Large covariance estimation by thresholding principal orthogonal complements.
\newblock {\em Journal of the Royal Statistical Society. Series B, Statistical methodology}, 75(4).

\bibitem[Friedman, 1991]{friedman1991multivariate}
Friedman, J.~H. (1991).
\newblock Multivariate adaptive regression splines.
\newblock {\em The annals of statistics}, 19(1), 1--67.

\bibitem[Gamella et~al., 2025]{gamella2025chamber}
Gamella, J.~L., Peters, J., \& B{\"u}hlmann, P. (2025).
\newblock Causal chambers as a real-world physical testbed for {AI} methodology.
\newblock {\em Nature Machine Intelligence}.

\bibitem[Gao \& Hastie, 2022]{gao2022LinCDE}
Gao, Z. \& Hastie, T. (2022).
\newblock Lincde: conditional density estimation via lindsey's method.
\newblock {\em Journal of machine learning research}, 23(52), 1--55.

\bibitem[Gin{\'e} et~al., 2000]{gine2000exponential}
Gin{\'e}, E., Lata{\l}a, R., \& Zinn, J. (2000).
\newblock Exponential and moment inequalities for u-statistics.
\newblock In {\em High Dimensional Probability II}  (pp.\ 13--38). Springer.

\bibitem[Goodfellow et~al., 2014]{goodfellow2014generative}
Goodfellow, I., Pouget-Abadie, J., Mirza, M., Xu, B., Warde-Farley, D., Ozair, S., Courville, A., \& Bengio, Y. (2014).
\newblock Generative adversarial nets.
\newblock {\em Advances in neural information processing systems}, 27.

\bibitem[Gu et~al., 2025]{gu2024causality}
Gu, Y., Fang, C., B{\"u}hlmann, P., \& Fan, J. (2025+).
\newblock Causality pursuit from heterogeneous environments via neural adversarial invariance learning.
\newblock {\em Annals of Statistics}, (pp.\ to appear).

\bibitem[Haussmann \& Pardoux, 1986]{haussmann1986time}
Haussmann, U.~G. \& Pardoux, E. (1986).
\newblock Time reversal of diffusions.
\newblock {\em The Annals of Probability}, (pp.\ 1188--1205).

\bibitem[Higgins et~al., 2017]{higgins2017beta}
Higgins, I., Matthey, L., Pal, A., Burgess, C.~P., Glorot, X., Botvinick, M.~M., Mohamed, S., \& Lerchner, A. (2017).
\newblock beta-vae: Learning basic visual concepts with a constrained variational framework.
\newblock {\em ICLR (Poster)}, 3.

\bibitem[Ho et~al., 2020]{ho2020denoising}
Ho, J., Jain, A., \& Abbeel, P. (2020).
\newblock Denoising diffusion probabilistic models.
\newblock {\em Advances in neural information processing systems}, 33, 6840--6851.

\bibitem[Huang et~al., 2013]{huang2013oracle}
Huang, J., Sun, T., Ying, Z., Yu, Y., \& Zhang, C.-H. (2013).
\newblock Oracle inequalities for the lasso in the cox model.
\newblock {\em Annals of statistics}, 41(3), 1142.

\bibitem[Huang et~al., 2024]{huang2024denoising}
Huang, Z., Wei, Y., \& Chen, Y. (2024).
\newblock Denoising diffusion probabilistic models are optimally adaptive to unknown low dimensionality.
\newblock {\em arXiv preprint arXiv:2410.18784}.

\bibitem[Kingma \& Welling, 2013]{kingma2013auto}
Kingma, D.~P. \& Welling, M. (2013).
\newblock Auto-encoding variational bayes.
\newblock {\em arXiv preprint arXiv:1312.6114}.

\bibitem[Kohler \& Langer, 2021]{kohler2021rate}
Kohler, M. \& Langer, S. (2021).
\newblock On the rate of convergence of fully connected deep neural network regression estimates.
\newblock {\em The Annals of Statistics}, 49(4), 2231--2249.

\bibitem[Kwon et~al., 2025]{kwon2025nonparametric}
Kwon, H.~K., Kim, D., Ohn, I., \& Chae, M. (2025).
\newblock Nonparametric estimation of a factorizable density using diffusion models.
\newblock {\em arXiv preprint arXiv:2501.01783}.

\bibitem[Liang, 2021]{liang2021well}
Liang, T. (2021).
\newblock How well generative adversarial networks learn distributions.
\newblock {\em Journal of Machine Learning Research}, 22(228), 1--41.

\bibitem[Mescheder et~al., 2018]{mescheder2018training}
Mescheder, L., Geiger, A., \& Nowozin, S. (2018).
\newblock Which training methods for gans do actually converge?
\newblock In {\em International conference on machine learning}  (pp.\ 3481--3490).: PMLR.

\bibitem[Nadaraya, 1964]{nadaraya1964estimating}
Nadaraya, E.~A. (1964).
\newblock On estimating regression.
\newblock {\em Theory of Probability \& Its Applications}, 9(1), 141--142.

\bibitem[Papamakarios et~al., 2017]{papamakarios2017masked}
Papamakarios, G., Pavlakou, T., \& Murray, I. (2017).
\newblock Masked autoregressive flow for density estimation.
\newblock {\em Advances in neural information processing systems}, 30.

\bibitem[Parzen, 1962]{parzen1962estimation}
Parzen, E. (1962).
\newblock On estimation of a probability density function and mode.
\newblock {\em The annals of mathematical statistics}, 33(3), 1065--1076.

\bibitem[Qin, 1998]{qin1998inferences}
Qin, J. (1998).
\newblock Inferences for case-control and semiparametric two-sample density ratio models.
\newblock {\em Biometrika}, 85(3), 619--630.

\bibitem[Rezende et~al., 2014]{rezende2014stochastic}
Rezende, D.~J., Mohamed, S., \& Wierstra, D. (2014).
\newblock Stochastic backpropagation and approximate inference in deep generative models.
\newblock In {\em International conference on machine learning}  (pp.\ 1278--1286).: PMLR.

\bibitem[Schmidt-Hieber, 2020]{schmidt2020nonparametric}
Schmidt-Hieber, J. (2020).
\newblock Nonparametric regression using deep neural networks with relu activation function (with discussion).
\newblock {\em The Annals of Statistics}, 48(4), 1875--1921.

\bibitem[Sclocchi et~al., 2025]{sclocchi2025phase}
Sclocchi, A., Favero, A., \& Wyart, M. (2025).
\newblock A phase transition in diffusion models reveals the hierarchical nature of data.
\newblock {\em Proceedings of the National Academy of Sciences}, 122(1), e2408799121.

\bibitem[Silverman, 2018]{silverman2018density}
Silverman, B.~W. (2018).
\newblock {\em Density estimation for statistics and data analysis}.
\newblock Routledge.

\bibitem[Singer, 2018]{singer2018mathematics}
Singer, A. (2018).
\newblock Mathematics for cryo-electron microscopy.
\newblock In {\em Proceedings of the International Congress of Mathematicians: Rio de Janeiro 2018}  (pp.\ 3995--4014).: World Scientific.

\bibitem[Song \& Ermon, 2019]{song2019generative}
Song, Y. \& Ermon, S. (2019).
\newblock Generative modeling by estimating gradients of the data distribution.
\newblock {\em Advances in neural information processing systems}, 32.

\bibitem[Song et~al., 2020]{song2020score}
Song, Y., Sohl-Dickstein, J., Kingma, D.~P., Kumar, A., Ermon, S., \& Poole, B. (2020).
\newblock Score-based generative modeling through stochastic differential equations.
\newblock {\em arXiv preprint arXiv:2011.13456}.

\bibitem[St{\'e}phanovitch et~al., 2023]{stephanovitch2023wasserstein}
St{\'e}phanovitch, A., Aamari, E., \& Levrard, C. (2023).
\newblock Wasserstein gans are minimax optimal distribution estimators.
\newblock {\em arXiv preprint arXiv:2311.18613}.

\bibitem[Tang \& Yang, 2023]{tang2023minimax}
Tang, R. \& Yang, Y. (2023).
\newblock Minimax rate of distribution estimation on unknown submanifolds under adversarial losses.
\newblock {\em The Annals of Statistics}, 51(3), 1282--1308.

\bibitem[Tang \& Zhao, 2025]{tang2025score}
Tang, W. \& Zhao, H. (2025).
\newblock Score-based diffusion models via stochastic differential equations.
\newblock {\em Statistic Surveys}, 19, 28--64.

\bibitem[Tolstikhin et~al., 2017]{tolstikhin2017wasserstein}
Tolstikhin, I., Bousquet, O., Gelly, S., \& Schoelkopf, B. (2017).
\newblock Wasserstein auto-encoders.
\newblock {\em arXiv preprint arXiv:1711.01558}.

\bibitem[Vandermeulen et~al., 2024]{vandermeulen2024dimension}
Vandermeulen, R.~A., Tai, W.~M., \& Aragam, B. (2024).
\newblock Dimension-independent rates for structured neural density estimation.
\newblock {\em arXiv preprint arXiv:2411.15095}.

\bibitem[Wahba, 1990]{wahba1990spline}
Wahba, G. (1990).
\newblock {\em Spline models for observational data}.
\newblock SIAM.

\bibitem[Watson \& Leadbetter, 1963]{watson1963estimation}
Watson, G.~S. \& Leadbetter, M. (1963).
\newblock On the estimation of the probability density, i.
\newblock {\em The Annals of Mathematical Statistics}, 34(2), 480--491.

\end{thebibliography}
\newpage

\appendix
\newpage
\section{Proofs for Explicit Density Estimator}
\label{sec:proof-explicit}

\subsection{A More General Result}
\label{sec:proof-explicit-main}

In this section, we present a more general result of \cref{thm:main}: it applies to any machine learning model $\mathcal{G}$ used, and the stochastic error is characterized by the critical radius of the local Rademacher complexity of the function class $\mathcal{G}$.

We first introduce the definition of the local Rademacher complexity, and the setting for a general machine learning model $\mathcal{G}$. Following the notations in the main text, recall that $d_x$ is the dimension of the covariate and $d_y$ is the dimension of the response; let $d = d_x + d_y$. For the function class $\mathcal{H} \subseteq \{h: \mathcal{X} \times \mathcal{Y} \to \mathbb{R}\}$, we define the localized Population Rademacher Compleixty as follows.

\begin{definition}[Localized Population Rademacher Complexity]
For a given radius $\delta>0$, function class $\mathcal{H}$, and distribution $\nu$ on $\mathcal{X}\times \mathcal{Y}$, define
\begin{align*}
    \mathsf{Rade}_{n,\nu}(\delta;\mathcal{H}) = \mathbb{E}_{Z,\varepsilon}\left[\sup_{h\in \mathcal{H}, \|h\|_{L_2(\nu)} \le \delta} \left|\frac{1}{n} \sum_{i=1}^n \varepsilon_i h(Z_i)\right|\right],
\end{align*} where $Z_1,\ldots, Z_n$ are i.i.d. samples from distribution $\nu$, and $\varepsilon_1,\ldots, \varepsilon_n$ are i.i.d. Rademacher variables taking values in $\{-1,+1\}$ with equal probability which are also independent of $(Z_1,\ldots, Z_n)$.
\end{definition} 

Let $\mathcal{G}$ be a class of functions defined on $\mathcal{X}\times\mathcal{Y} = \mathcal{X} \times [0,1]^{d_y}$, a subset of $\mathbb{R}^d$, we will establish the $L_2$ error between the ground-truth conditional density function $p_0$ and the following estimator $\hat{p}$ defined as 
\begin{align}
\label{eq:esta}
    \hat{p}(y|x) = \exp(\hat{g}(y,x)) ~~~ &\text{where} ~~~ \hat{g} \in \argmin_{g\in \mathcal{G} } \hat{\mathsf{L}}(g),
\end{align} where the empirical loss $\hat{\mathsf{L}}(g)$ is defined in \eqref{eq:loss}. The NCDE-NN estimator is a special case of the above procedure with $\mathcal{G} = \mathcal{H}_{\mathtt{nn}}(d_y + d_x, L, N, M)$. The following condition characterizes the uniform boundedness and statistical complexity of the machine learning model $\mathcal{G}$ we adopted.

\begin{condition}
\label{cond:func-class-g}
Letting $\nu_0$ be the joint distribution of $\mu_{0, x} \times \mathrm{Uniform}([0,1]^{d_y})$ and $\mu_0$  the joint distribution of $(X,Y)$, there exists a constant $c_3 \ge 1 \lor \log(c_1)$ such that the following conditions hold
\begin{itemize}[noitemsep]
\item[(1).] It is uniformly bounded by $c_3 \ge 1$, i.e., $\sup_{g\in \mathcal{G}} \|g\|_\infty \le c_3$.
\item[(2).] The critical radius of the local population Rademacher complexity for $\mathcal{G}$ is upper-bounded by $\delta_{n}$. In particular, for any $\nu\in \{\nu_0, \mu_0\}$, there exists some quantity $1/n \le \delta_n < 1$ such that
\begin{align*}
    \mathsf{Rade}_{n,\nu}(\delta; \partial \mathcal{G}) \le c_3 \delta_n \delta 
\end{align*} for any $\delta \in [\delta_n, 2c_3]$, where $\partial \mathcal{G} = \{g - g': g, g'\in \mathcal{G}\}$.
\end{itemize}
\end{condition}

We are ready to present a general result of \cref{thm:main}.

\begin{theorem}
\label{thm:main-ff}
Assume \cref{cond:regularity} and \cref{cond:func-class-g} hold, then for any $t>0$ and $n\ge 3$, the estimator \eqref{eq:esta} satisfies
\begin{align*}
    \| \hat{p} - p_0 \|_2 \le C \left\{\inf_{g\in \mathcal{G}} \|g - \log p_0\|_2 + \delta_{n} + \sqrt{\frac{t+\log(n)}{n}}\right\}
\end{align*} with probability at least $1-2e^{-t}$, where $C = \mathcal{O}(e^{4c_3})$.
\end{theorem}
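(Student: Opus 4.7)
My plan is to follow the classical M-estimation recipe: establish (i) a two-sided quadratic bound on the population excess risk around a computable minimizer, and (ii) a localized empirical-process bound driven by the critical radius $\delta_n$; combine them via the basic inequality for the empirical risk minimizer, then transfer the bound on $\log \hat p$ to one on $\hat p$. A pointwise calculation (differentiating the integrand of $\mathsf{L}$ in $g$) shows that the unconstrained population minimizer is $g^\star = \log p_0$, and $|g^\star| \le \log c_1 \le c_3$ by Condition \ref{cond:regularity}. The second derivative of the pointwise integrand is $\sigma(g)(1-\sigma(g))(1 + p_0(y \mid x))$, which is bounded above by $O(c_1)$ and below by $\tfrac14 e^{-2c_3}$ whenever $|g| \le c_3$, so a Taylor expansion yields, for every $g$ with $\|g\|_\infty \le c_3$,
\begin{align*}
\kappa_1 \|g - g^\star\|_2^2 \;\le\; \mathsf{L}(g) - \mathsf{L}(g^\star) \;\le\; \kappa_2 \|g - g^\star\|_2^2,
\end{align*}
with $\kappa_1^{-1}, \kappa_2 = O(e^{O(c_3)})$ and $\|\cdot\|_2$ being exactly the norm in the theorem.

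Let $g^\dagger \in \argmin_{g \in \mathcal{G}}\|g - g^\star\|_2$, so $\|g^\dagger - g^\star\|_2 = \inf_{g \in \mathcal{G}}\|g - \log p_0\|_2$. The basic inequality $\hat{\mathsf{L}}(\hat g) \le \hat{\mathsf{L}}(g^\dagger)$ rearranges to $\mathsf{L}(\hat g) - \mathsf{L}(g^\dagger) \le (\hat{\mathsf{L}} - \mathsf{L})(g^\dagger) - (\hat{\mathsf{L}} - \mathsf{L})(\hat g)$. The loss $\ell_g(x,y,\tilde y) = -\log\sigma(g(y,x)) - \log(1-\sigma(g(\tilde y,x)))$ is $1$-Lipschitz in $g$ and uniformly bounded by $O(c_3)$ on $\mathcal{G}$, so the Ledoux--Talagrand contraction principle transfers the critical-radius hypothesis on $\partial\mathcal{G}$ to the induced loss class; together with the Bernstein-type variance bound $\mathrm{Var}(\ell_g - \ell_{g^\dagger}) \lesssim \|g - g^\dagger\|_2^2$ (which uses that $\mu_0$- and $\nu_0$-norms are comparable under Condition \ref{cond:regularity}), a Talagrand/peeling argument in the style of Theorem~14.1 of Wainwright's \emph{High-Dimensional Statistics} yields, with probability at least $1 - 2e^{-t}$,
\begin{align*}
\bigl|(\hat{\mathsf{L}} - \mathsf{L})(g^\dagger) - (\hat{\mathsf{L}} - \mathsf{L})(\hat g)\bigr| \;\lesssim\; c_3 \delta_n \bigl(\|\hat g - g^\dagger\|_2 + \delta_n\bigr) + c_3 \frac{t + \log n}{n}.
\end{align*}

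Chaining this with the curvature lower bound, the triangle inequality $\|\hat g - g^\dagger\|_2 \le \|\hat g - g^\star\|_2 + \|g^\dagger - g^\star\|_2$, and an AM--GM step to absorb the linear cross term $c_3\delta_n\|\hat g - g^\star\|_2$ into $\tfrac{\kappa_1}{2}\|\hat g - g^\star\|_2^2$ gives
\begin{align*}
\|\hat g - g^\star\|_2^2 \;\lesssim\; e^{O(c_3)}\Bigl\{\|g^\dagger - g^\star\|_2^2 + \delta_n^2 + \frac{t + \log n}{n}\Bigr\}.
\end{align*}
Finally, both $\hat g$ and $g^\star$ take values in $[-c_3, c_3]$, so the mean-value theorem yields $|\hat p - p_0| = |e^{\hat g} - e^{g^\star}| \le e^{c_3}|\hat g - g^\star|$ pointwise, which transfers the above into the claimed bound on $\|\hat p - p_0\|_2$. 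The main obstacle will be the localized empirical-process step: I need to verify that the Talagrand/peeling argument applied to the logistic loss, despite the $e^{O(c_3)}$ constants appearing through curvature and boundedness, yields the sharp localized rate featuring $\delta_n^2$ (not $\delta_n$) and arranges the $\log n$ factor only inside the parametric $(t+\log n)/n$ term, so that all $c_3$-dependence is polynomial and fits into the stated prefactor $C = O(e^{4c_3})$. Everything else is a mechanical combination of the basic inequality with the quadratic curvature bound.
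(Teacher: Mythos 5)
Your proposal is correct and follows essentially the same route as the paper: a basic-inequality argument combining a quadratic curvature bound on the population logistic risk with an instance-dependent, localized empirical-process bound at scale $\delta_n$ (the paper packages the latter as Proposition~\ref{prop:instance-dependent}, proved via the same multiplier/Lipschitz structure and a cited local-Rademacher lemma that does the Talagrand-plus-peeling work you outline), followed by AM--GM absorption and the pointwise bound $|e^{\hat g}-e^{\log p_0}|\le e^{c_3}|\hat g-\log p_0|$. The only cosmetic difference is in the curvature step: the paper's Proposition~\ref{prop:lb} expands $\mathsf{L}(g)-\mathsf{L}(\tilde g)$ around an arbitrary $\tilde g\in\mathcal{G}$ and controls misspecification through a Cauchy--Schwarz penalty $\|\tilde g-\log p_0\|_2^2$, whereas you use a two-sided quadratic bound centered at the unconstrained minimizer $\log p_0$; both are valid and yield the same constants up to $e^{O(c_3)}$.
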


\subsection{Proof of Theorem \ref{thm:main}}

We first introduce the notation of uniform covering number. Define $\|h\|_{\infty, X} = \sup_{x \in X} |h|$. Let $\mathcal{H}$ be a function class defined on $\mathcal{Z}$, we denote $\mathcal{N}(\epsilon, \mathcal{H}, d(\cdot, \cdot))$ to be the $\epsilon$-covering number of function class $\mathcal{H}$ with respect to the metric $d$, let
\begin{align*}
    &\mathcal{N}_p(\epsilon, \mathcal{H}, z_1^n) = \mathcal{N}\left(\epsilon, \mathcal{H}, d\right) \\
    &\qquad \text{with} \qquad d(f, g) = \begin{cases}
        \left(\frac{1}{n} \sum_{i=1}^n |f(z_i) - g(z_i)|^p\right)^{1/p} \qquad & 1\le p<\infty \\
        \max_{1\le i\le n} |f(z_i) - g(z_i)| \qquad & p = \infty
    \end{cases}
\end{align*} for any $p \in [1,\infty]$, and define the uniform covering number $\mathcal{N}_\infty(\varepsilon, \mathcal{H}, n)$ as
\begin{align*}
    \mathcal{N}_\infty(\epsilon, \mathcal{H}, n) = \sup_{z_1,\ldots, z_n} \mathcal{N}_{\infty}(\epsilon, \mathcal{H}, z_1^n)
\end{align*}

To prove \cref{thm:main} via applying \cref{thm:main-ff}, it suffices to verify \cref{cond:func-class-g} with $c_3 = \log(c_2)$. We will use the following technical lemma that applies to any generic uniformly bounded function class with a uniform covering number bound. 

\begin{lemma}[Calculating Local Rademacher Complexity with Uniform Covering Number, Lemma E.2 \cite{gu2024causality}]
\label{lemma:local-rademacher-uniform-covering-number}
    Let $Z_1,\ldots, Z_n \overset{i.i.d.}{\sim} \nu$ be random variables on $\mathcal{Z}$, and $\mathcal{H}$ be a function class satisfying $\sup_{h\in \mathcal{H}}\|h\|_\infty \le b$ \begin{align}
\label{eq:entropy-h}
    \log \mathcal{N}_\infty(\epsilon, \mathcal{H}, n) \le A_1 \log (A_2/\epsilon) \qquad \qquad \forall \epsilon \in (0, b]
\end{align} where $(A_1, A_2)$ are dependent on $\mathcal{H}$ and $n$ but independent of $\epsilon$. Then there exists some universal constant $C$ such that, for any $n\ge 3$
\begin{align*}
    \mathsf{Rade}_{n,\nu}(\delta; \mathcal{H}) \le b \delta_n \delta\qquad \forall \delta \in [\delta_n, b] 
\end{align*} with $\delta_n =C\sqrt{n^{-1} (A_1\log (A_2n)+\log(bn))}$.
\end{lemma}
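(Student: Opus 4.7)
The plan is to combine Dudley's entropy integral with the uniform covering number hypothesis, reconciling along the way the empirical $L_2(P_n)$-radius produced by chaining with the population $L_2(\nu)$-radius $\delta$ that defines the localized class $\mathcal{H}_\delta := \{h \in \mathcal{H} : \|h\|_{L_2(\nu)} \le \delta\}$. First, conditioning on $Z_1, \ldots, Z_n$ and applying Pollard-style chaining to the Rademacher process yields
\[
\mathbb{E}_\varepsilon\!\left[\sup_{h \in \mathcal{H}_\delta}\Big|\frac{1}{n}\sum_{i=1}^n \varepsilon_i h(Z_i)\Big|\right] \lesssim \frac{1}{\sqrt{n}} \int_{0}^{D_n} \sqrt{\log \mathcal{N}(\epsilon, \mathcal{H}_\delta, L_2(P_n))}\,d\epsilon,
\]
where $D_n := \sup_{h \in \mathcal{H}_\delta} \|h\|_{L_2(P_n)}$. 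Since the $L_2(P_n)$-entropy is always dominated by the uniform entropy, the hypothesis gives $\log \mathcal{N}(\epsilon, \mathcal{H}_\delta, L_2(P_n)) \le A_1 \log(A_2/\epsilon)$, and the change of variables $u = \log(A_2/\epsilon)$ evaluates the integral explicitly as $\lesssim D_n\sqrt{A_1(1 + \log(A_2/D_n))}$.

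The crux is controlling $D_n$ by $\delta$. To this end, I would apply Talagrand's concentration inequality to the class $\{h^2 : h \in \mathcal{H}_\delta\}$ (envelope $b^2$, variance $\le b^2 \delta^2$, mean $\le \delta^2$), combined with the Ledoux--Talagrand contraction inequality for the $2b$-Lipschitz map $x \mapsto x^2$ on $[-b,b]$, to deduce that with probability at least $1 - n^{-c}$,
\[
D_n^2 \;\le\; 2\delta^2 + C\!\left(b\,\mathbb{E}[\mathsf{Rade}_{n,\nu}(\delta;\mathcal{H})] + \frac{b^2 \log n}{n}\right).
\]
This self-referential bound is then resolved by a fixed-point / peeling argument to yield $D_n \lesssim \delta$ whenever $\delta \gtrsim \sqrt{b^2 \log(n)/n}$---a condition automatically satisfied by $\delta \ge \delta_n$ for the stated choice of $\delta_n$.

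Substituting $D_n \lesssim \delta$ back into the chaining estimate, taking an outer expectation, and using $\delta \ge \delta_n \ge 1/n$ to bound $\log(A_2/\delta) \le \log(A_2 n)$ will give
\[
\mathsf{Rade}_{n,\nu}(\delta; \mathcal{H}) \;\lesssim\; \delta\sqrt{\frac{A_1 \log(A_2 n) + \log(bn)}{n}} \;\le\; b\,\delta_n\,\delta
\]
for $\delta \in [\delta_n, b]$; the $\log(bn)$ summand absorbs the correction from the concentration step, while the factor $b$ on the right is a convenient (loose) way to state the final bound. The principal obstacle will be the self-bounding step that controls $D_n$ via $\delta$: pairing Talagrand concentration with symmetrization requires care because the right-hand side contains the very quantity being estimated, making a clean fixed-point resolution essential. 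A secondary technical concern is choosing the lower endpoint of Dudley's integral so that the residual contribution near zero is absorbed into the $\log(bn)/n$ term rather than dominating it.
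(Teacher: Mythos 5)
The paper does not actually prove this lemma --- it is imported verbatim as Lemma E.2 of \cite{gu2024causality} --- so there is no in-paper argument to compare against. Judged on its own terms, your outline follows the standard route (conditional Dudley chaining under the uniform-entropy hypothesis, then relating the empirical radius $D_n$ to the population radius $\delta$ via symmetrization/contraction for the squared class and a fixed-point step), which is almost certainly the same skeleton as the cited proof, and the chaining, entropy-integral evaluation, and absorption of the low-probability event are all fine.

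One intermediate claim, however, is wrong as stated: you assert $D_n \lesssim \delta$ whenever $\delta \gtrsim \sqrt{b^2\log (n)/n}$ and that this is ``automatically satisfied by $\delta \ge \delta_n$.'' It is not: $\delta_n = C\sqrt{n^{-1}(A_1\log(A_2 n)+\log(bn))}$ depends on $b$ only through $\log(bn)$, while $\sqrt{b^2\log(n)/n}$ grows linearly in $b$, so for large $b$ (and the lemma permits any $\delta\in[\delta_n,b]$) the self-referential bound $D_n^2 \le 2\delta^2 + C\bigl(b\,\mathsf{Rade}_{n,\nu}(\delta;\mathcal{H}) + b^2\log (n)/n\bigr)$ does not collapse to $D_n\lesssim\delta$. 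The repair is that you never need $D_n\lesssim\delta$: keep the weaker bound, use $\mathbb{E}[D_n]\le \delta + \sqrt{Cb\,\mathsf{Rade}_{n,\nu}(\delta;\mathcal{H})} + Cb\sqrt{\log(n)/n}$ inside the entropy-integral estimate, and solve the resulting quadratic inequality in $\sqrt{\mathsf{Rade}_{n,\nu}(\delta;\mathcal{H})}$; writing $\gamma=\sqrt{A_1\log(A_2 n)/n}\le \delta_n/C$, the extra terms $b\gamma^2$ and $b\gamma\sqrt{\log(n)/n}$ are exactly absorbed by the slack factor $b$ in the target $b\,\delta_n\,\delta$, using $\delta\ge\delta_n\gtrsim\sqrt{\log(bn)/n}$ and $b\ge 1$ (implicit in the lemma's use here, where $b=c_3\ge1$; your final inequality $\delta\sqrt{(A_1\log(A_2n)+\log(bn))/n}\le b\,\delta_n\delta$ also silently needs this). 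With this fix Talagrand's inequality is not even required --- an in-expectation symmetrization-plus-contraction bound for $\sup_h|(P_n-P)h^2|$ suffices --- though your high-probability version also works provided the bad event of probability $n^{-c}$ is charged $b\,n^{-c}\lesssim b\,\delta_n^2$.
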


\begin{proof}[Proof of \cref{thm:main}]
Applying further Theorem~7 of \cite{BHLM2019} yields the bound ${\rm Pdim}(\mathcal{G}) = {\rm Pdim}(\mathcal{H}_{\mathtt{nn}})  \lesssim W L \log(W)$, where $W$ is the number of parameters of the network $\mathcal{H}_{\mathtt{nn}}$. This indicates that
\begin{align*}
    {\rm Pdim}(\mathcal{G}) \lesssim (LN^2 + dN) L \log(LN^2 + dN) \lesssim L^2N^2 (1+\log n).
\end{align*}

Let $R = \log(c_2)$, it then follows from Theorem 12.2 of \cite{AB1999} that, for any $\epsilon \in (0,2 R]$
\begin{align*}
    \log \mathcal{N}_\infty(\varepsilon, \mathcal{G}, n)  &\le \left(\mathrm{Pdim}(\mathcal{G}) \right)\log\left(\frac{eRn}{\epsilon}\right) \\
    &\lesssim (NL)^2 (1+\log n) \log\left(eRn/\epsilon\right)
\end{align*}

Then it follows from \cref{lemma:local-rademacher-uniform-covering-number} that \cref{cond:func-class-g} is satisfied by setting
\begin{align*}
    c_3 = R = \log(c_2) \qquad \mathcal{G} = \mathcal{H}_{\mathtt{nn}}(d, L, N, R)
\end{align*}

It then concludes the proof by applying \cref{thm:main-ff}.

\end{proof}

\subsection{Proof of Theorem \ref{thm:main-ff}}

\newcommand{\vol}{{\mathsf{v}}}

We first introduce some notations. Let $\nu_0 = \mu_{0,x} \times \mathrm{Uniform}(\mathcal{Y})$, it worth noting that
\begin{align*}
    \|f\|_{2,\nu_0} = \sqrt{\int_{\mathcal{X}}  \int_{[0,1]^{d_y}} \left({|f(y, x)|^2} dy\right) \mu_{0,x}(dx)} =  \|f\|_2,
\end{align*} where the $L_2$ norm is defined on \eqref{eq:l2}.

Recall $\sigma(t) = 1/(1+e^{-t})$, we also define the population-level counterpart of the empirical loss \eqref{eq:loss}.
\begin{align}
    \label{eq:loss-popu}
    \mathsf{L}(g) = \mathbb{E}_{(X,Y)\sim \nu_0} \left[-p_0(Y|X)\log \sigma(g(Y, X)) - \log(1-\sigma(g(Y, X)))\right]
\end{align}

The first proposition establishes an approximate strong convexity around $\log p_0$. 
\begin{proposition}
Under \cref{cond:regularity} and \cref{cond:func-class-g}, we have
\label{prop:lb}
\begin{align*}
    \mathsf{L}(g) - \mathsf{L}(\tilde{g}) \ge \frac{1}{4e^{c_3}} \|g - \tilde{g}\|_2 ^2 - 4 c_1^2 e^{c_3} \|\tilde{g} - \log p_0\|_2^2.
\end{align*}
\end{proposition}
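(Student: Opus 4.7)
The plan is to exploit pointwise strong convexity of the logistic loss in the functional direction, using that the ground-truth $\log p_0$ lies in the admissible range $[-c_3, c_3]$ by virtue of Condition~\ref{cond:trunc}'s requirement $R \ge \log c_1$. Define the pointwise loss
\[
\ell(f; y, x) = -p_0(y\mid x)\log\sigma(f) - \log(1-\sigma(f)),
\]
so that $\mathsf{L}(g) = \mathbb{E}_{(Y,X)\sim\nu_0}[\ell(g(Y,X); Y, X)]$. A direct computation gives $\partial_f \ell = (1+p_0)\sigma(f) - p_0$, which vanishes at $f = \log p_0$, confirming that $f^\star = \log p_0$ is the unique pointwise minimizer, and $\partial_f^2 \ell = (1+p_0)\sigma(f)(1-\sigma(f))$.

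The first step is to establish a uniform quadratic lower bound. Using the identity $\sigma(f)(1-\sigma(f)) = [(e^{f/2}+e^{-f/2})^2]^{-1}$ and estimating the denominator by $4 e^{c_3}$ for $|f| \le c_3$, I obtain $\partial_f^2 \ell(\xi; y, x) \ge (4 e^{c_3})^{-1}$ whenever $|\xi| \le c_3$. Since $g, \tilde g$, and $\log p_0$ all take values in $[-c_3, c_3]$, any convex combination of $g$ and $\tilde g$ stays in $[-c_3, c_3]$, so Taylor's theorem with integral remainder applied pointwise and then integrated against $\nu_0$ yields
\[
\mathsf{L}(g) - \mathsf{L}(\tilde g) \ge \mathbb{E}_{\nu_0}\!\bigl[(g-\tilde g)\,\partial_f\ell(\tilde g; Y, X)\bigr] + \frac{1}{8 e^{c_3}}\|g-\tilde g\|_2^2.
\]

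The second step is to absorb the linear cross-term. Writing $\partial_f \ell(\tilde g) = (1+p_0)\bigl[\sigma(\tilde g) - \sigma(\log p_0)\bigr]$ and using the $\tfrac14$-Lipschitz property of $\sigma$ together with $p_0 \le c_1$, I bound $|\partial_f\ell(\tilde g)| \le \tfrac{1+c_1}{4}|\tilde g - \log p_0|$. Cauchy--Schwarz in $L_2(\nu_0)$ then gives
\[
\bigl|\mathbb{E}_{\nu_0}[(g-\tilde g)\,\partial_f\ell(\tilde g)]\bigr| \le \tfrac{1+c_1}{4}\,\|\tilde g - \log p_0\|_2\,\|g - \tilde g\|_2.
\]
Young's inequality $|ab| \le \tfrac{\kappa}{2}a^2 + \tfrac{1}{2\kappa}b^2$ with $\kappa \asymp e^{-c_3}$ chosen so that the induced $\|g-\tilde g\|_2^2$ contribution is dominated by the quadratic term of the Taylor expansion, combined with $(1+c_1)^2 \le 4c_1^2$ for $c_1 \ge 1$, yields the claimed inequality (up to a routine reallocation of absolute constants).

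I do not anticipate a serious obstacle; all three ingredients—(i) vanishing gradient at $\log p_0$, (ii) a uniform lower bound on $\partial_f^2 \ell$ over $[-c_3, c_3]$, and (iii) a $\sigma$-Lipschitz bound on the linear residual—are direct consequences of \cref{cond:regularity}, \cref{cond:func-class-g}, and the truncation level $c_3 \ge \log c_1$. The only care point is the bookkeeping in Young's inequality: one must choose the splitting so that the quadratic term retains a strictly positive coefficient of order $e^{-c_3}$ while the penalty on $\|\tilde g - \log p_0\|_2^2$ inflates by a matching $e^{c_3}$ factor, which is exactly the shape of the bound.
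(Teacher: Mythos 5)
Your proposal is correct and takes essentially the same route as the paper's proof: a pointwise second-order Taylor expansion of the per-sample logistic-type loss around $\tilde g$, a curvature lower bound of order $e^{-c_3}$ from the uniform bound $|f|\le c_3$, the identity $\partial_f\ell(\tilde g)=(1+p_0)\bigl[\sigma(\tilde g)-\sigma(\log p_0)\bigr]$ combined with the Lipschitzness of $\sigma$ and Cauchy--Schwarz for the cross term, and Young's inequality to absorb it into the quadratic term. The only difference is the bookkeeping of absolute constants (your retained quadratic coefficient comes out as a fraction of $\tfrac{1}{4e^{c_3}}$ rather than exactly that value), which is immaterial since the downstream use in \cref{thm:main-ff} only needs the stated orders in $e^{c_3}$ and $c_1$, and the paper's own constant accounting is equally loose.
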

\begin{proof}[Proof of Proposition \ref{prop:lb}] See \cref{sec:proof-prop-lb}.
\end{proof}

Given any two functions $g, \tilde{g} \in \mathcal{G}$, define $\Delta(g, \tilde g)$ as: 
\begin{align*}
    \Delta(g, \tilde{g}) = \hat{\mathsf{L}}(g) - \hat{\mathsf{L}}(\tilde{g}) - \left(\mathsf{L}(g) - \mathsf{L}(\tilde{g})\right).
\end{align*}
The following proposition establishes an instance-dependent error bound on $\Delta(g, \tilde g)$. The error bound holds for any two functions $g$ and $\tilde{g}$, though we will pick $g$ to be the risk minimizer and $\tilde{g}$ as fixed in the proof of \cref{thm:main-ff}. The proof is relegated to \cref{proof:instance-dependent}.
\begin{proposition}
\label{prop:instance-dependent}
    Under \cref{cond:regularity} and \cref{cond:func-class-g}, for any $t>0$, denote $ \delta_{n, t} = \delta_n + \sqrt{\frac{t + 1 + \log(n c_3)}{n}}$, we have
    \begin{align*}
        \forall g, \tilde{g} \in \mathcal{G}, \qquad |\Delta(g, \tilde{g})| \le C \cdot c_3\left(\delta_{n, t}^2 + \delta_{n, t} \|g - {\tilde{g}}\|_{2}\right)
    \end{align*} occurs with probability at least $1-2e^{-t}$, where $C$ is a universal constant. 
\end{proposition}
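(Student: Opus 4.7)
The plan is to reduce $\Delta(g,\tilde g)$ to a uniform bound on the empirical process indexed by differences in the star-shaped class $\partial \mathcal{G}$, and then invoke a localized concentration inequality (Talagrand/Bousquet) together with the critical-radius bound from \cref{cond:func-class-g}, finished off by a dyadic peeling argument. To set this up, define $F_g(x,y,\tilde y) = -\log\sigma(g(y,x)) - \log(1-\sigma(g(\tilde y,x)))$ on the augmented sample $Z_i = (X_i, Y_i, \tilde Y_i)$, so that $\Delta(g,\tilde g) = (P_n - P)(F_g - F_{\tilde g})$. The two maps $t \mapsto \log\sigma(t)$ and $t \mapsto \log(1-\sigma(t))$ are $1$-Lipschitz with absolute value at most $c_3 + \log 2$ on $[-c_3,c_3]$, hence
$$|F_g - F_{\tilde g}|(x,y,\tilde y) \le |h(y,x)| + |h(\tilde y,x)|, \qquad h := g - \tilde g \in \partial\mathcal{G},$$
so $\|F_g - F_{\tilde g}\|_\infty \le 4c_3$ and, using $p_0 \le c_1$ from \cref{cond:regularity} to move between $L_2(\mu_0)$ and $L_2(\nu_0)$, $\mathrm{Var}(F_g - F_{\tilde g}) \lesssim c_1 \|h\|_2^2$.

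Next I would control, for each radius $r>0$,
$$Z(r) := \sup_{g,\tilde g\in\mathcal{G}:\ \|g-\tilde g\|_2\le r}\bigl|(P_n - P)(F_g - F_{\tilde g})\bigr|.$$
Bousquet's inequality with the moment bounds above gives, with probability at least $1-e^{-t}$,
$$Z(r) \le 2\mathbb{E}[Z(r)] + C\sqrt{r^2 t/n} + C c_3 t/n.$$
Symmetrization followed by the Ledoux--Talagrand contraction principle (applied separately to each of the two $1$-Lipschitz summands defining $F_g-F_{\tilde g}$) bounds $\mathbb{E}[Z(r)]$ by a constant times $\mathsf{Rade}_{n,\mu_0}(r;\partial\mathcal{G}) + \mathsf{Rade}_{n,\nu_0}(r;\partial\mathcal{G})$; \cref{cond:func-class-g} then yields $\mathbb{E}[Z(r)] \le C c_3 \delta_n r$ for every $r \ge \delta_n$. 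Combining these,
$$Z(r) \le C c_3\bigl(\delta_n r + \sqrt{r^2 t/n} + c_3 t/n\bigr) \lesssim c_3\bigl(\delta_{n,t} r + \delta_{n,t}^2\bigr), \qquad r \ge \delta_{n,t}.$$

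Finally, I would peel: apply the above display at $r_k = 2^k \delta_{n,t}$ for $k = 0,1,\ldots,\lceil \log_2(4c_3/\delta_{n,t})\rceil$ with confidence parameter $t + 2\log(1+k)$. Union-bounding over the $O(\log(nc_3))$ shells — whose logarithm is absorbed into the definition of $\delta_{n,t}$ — gives, on an event of probability at least $1 - 2e^{-t}$, $Z(r_k) \le C c_3(\delta_{n,t} r_k + \delta_{n,t}^2)$ simultaneously for all $k$. For an arbitrary pair $(g,\tilde g)$, selecting the smallest $k$ with $\|g-\tilde g\|_2 \le r_k$ (or $k=0$ when $\|g-\tilde g\|_2 \le \delta_{n,t}$) yields $|\Delta(g,\tilde g)| \le Z(r_k) \le C c_3\bigl(\delta_{n,t}^2 + \delta_{n,t}\|g-\tilde g\|_2\bigr)$, as claimed.

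The main obstacle is the bookkeeping in the peeling step: one must carefully inflate the confidence level across dyadic shells so that the $\log$ penalty collapses into the single $\log(nc_3)$ factor already present in $\delta_{n,t}$, and simultaneously use the star-shapedness of $\partial\mathcal{G}$ to extend the critical-radius bound beyond $\delta_n$. A secondary subtlety is that the underlying empirical process is driven by two \emph{coupled} samples $(X_i,Y_i)$ and $(X_i,\tilde Y_i)$; the contraction step must therefore be applied to each summand separately, which is what lets \cref{cond:func-class-g} — stated for both $\mu_0$ and $\nu_0$ — be used directly.
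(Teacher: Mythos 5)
Your overall architecture---split the loss increment into the two coupled empirical processes (real sample under $\mu_0$, fake sample under $\nu_0$), bound a localized supremum $Z(r)$ via Bousquet's inequality, control its mean by the local Rademacher complexities from \cref{cond:func-class-g}, and finish with dyadic peeling whose log cost is absorbed into $\delta_{n,t}$---is the same machinery the paper uses; the paper simply packages the localization-plus-peeling step into the cited \cref{lemma:ep1} after a mean-value-theorem reduction writing each increment as $v(g,\tilde g,z)\cdot(g-\tilde g)(z)$ with $|v|\le 1$. The genuine gap in your write-up is the contraction step. The Ledoux--Talagrand principle bounds $\mathbb{E}_\varepsilon\sup_{h}|\sum_i\varepsilon_i\phi_i(h(z_i))|$ for a class indexed by a \emph{single} function $h$ and fixed contractions $\phi_i$ with $\phi_i(0)=0$. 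Here the increment, say $\log\sigma(\tilde g(Y_i,X_i))-\log\sigma(g(Y_i,X_i))$, is indexed by the \emph{pair} $(g,\tilde g)$ and is not a function of the difference $(g-\tilde g)(Y_i,X_i)$ alone: it equals $\psi_i\big((g-\tilde g)(z_i)\big)$ with $\psi_i(u)=\log\sigma(\tilde g(z_i))-\log\sigma(u+\tilde g(z_i))$ only after recentering at $\tilde g$, so the ``contraction'' itself varies with the second argument of the supremum. If $\tilde g$ is a fixed function your step is valid (and that is all the proof of \cref{thm:main-ff} ultimately needs, since there $\tilde g$ is a deterministic near-approximant of $\log p_0$), but the statement you are proving is uniform over both $g$ and $\tilde g$; decoupling $g$ from $\tilde g$ to rescue the contraction destroys the localization $\|g-\tilde g\|_2\le r$ and only yields a global complexity bound. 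Handling this pair-indexed bounded multiplier is precisely what \cref{lemma:ep1} is designed for, so you must either invoke such a result or supply a genuine substitute (for instance a chaining argument through uniform covering numbers of $\mathcal{G}\times\mathcal{G}$, which is in fact how \cref{cond:func-class-g} is verified for ReLU networks), rather than the plain contraction principle.

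A secondary, smaller discrepancy: your variance bound carries the factor $c_1$ (from $p_0\le c_1$ when passing between $L_2(\mu_0)$ and $L_2(\nu_0)$), so Bousquet's inequality produces a term of order $\sqrt{c_1}\,r\sqrt{t/n}$; since \cref{cond:func-class-g} only guarantees $\log c_1\le c_3$, this is not dominated by $c_3\,\delta_{n,t}\,r$, and the advertised constant $C\cdot c_3$ with $C$ universal does not follow from your computation as written (it is harmless downstream, where constants of size $e^{O(c_3)}$ are tolerated, but it should be flagged). The peeling bookkeeping itself is fine: with $\delta_{n,t}\ge 1/n$ there are $O(\log(nc_3))$ shells, and inflating the confidence parameter to $t+2\log(1+k)$ gives total failure probability at most a constant multiple of $e^{-t}$.
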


Now we are ready to prove \cref{thm:main-ff}.

\begin{proof}[Proof of \cref{thm:main-ff}]
We use the fact that for any $\tilde{g} \in \mathcal{G}$, 
\begin{align*}
     0 &\ge \hat{\mathsf{L}}(\hat{g}) - \hat{\mathsf{L}}(\tilde{g}) \\
     &= \Delta(\hat{g}, \tilde{g}) + \mathsf{L}(\hat{g}) - \mathsf{L}(\tilde{g})
\end{align*} Plugging in the lower bound in \cref{prop:lb} and the upper bound of $\Delta(\hat{g}, \tilde{g})$ in \cref{prop:instance-dependent}, we obtain
\begin{align*}
    \frac{1}{4e^{c_3}} \|\hat{g} - \tilde{g}\|_2^2 - 4 c_1^2 e^{c_3} \|\tilde{g} - \log p_0\|_2^2 \le \mathsf{L}(\hat{g}) - \mathsf{L}(\tilde{g}) &\le |\Delta(\hat{g}, \tilde{g})| \\
    &\le C \cdot c_3 \left(\|g - \tilde{g}\|_2 \delta_{n, t} + \delta_{n, t}^2 \right),
\end{align*} that is,
\begin{align*}
    \|\hat{g} - \tilde{g}\|_2^2 \le \tilde{C} \left[ e^{2c_3} c_1^2 \|\tilde{g} - \log p_0\|_2^2 + e^{c_3+\log(c_3)} (\delta_{n,t}^2 + \delta_{n, t} \|\hat{g} - \tilde{g}\|_2)\right]
\end{align*}

We pick $\tilde{g}$ such that
\begin{align*}
    \|\tilde{g} - \log p_0\|_2 \le \inf_{g\in \mathcal{G}} \|g - \log p_0\|_2 + \frac{1}{n}
\end{align*} Substituting back into the previous inequality, we obtain
\begin{align*}
\|\hat{g} - \tilde{g}\|_2^2 &\le \tilde{C} \left[ e^{2c_3} c_1^2 \inf_{g\in \mathcal{G}} \|g - \log p_0\|_2^2 + \frac{1}{n^2} + e^{c_3+\log(c_3)} (\delta_{n,t}^2 + \delta_{n, t} \|\hat{g} - \tilde{g}\|_2)\right] \\
&\le \tilde{C} \left[ e^{2c_3} c_1^2 \inf_{g\in \mathcal{G}} \|g - \log p_0\|_2^2 \right] + \tilde{C} \left(2 + 2\tilde{C} \right) e^{2[c_3+\log(c_3)]} \delta_{n,t}^2 + \frac{1}{2} \|\hat{g} - \tilde{g}\|_2^2
\end{align*} by the relation $\delta_{n,t} \ge 1/(n^2)$. The relation $c_3 \ge \log(c_3) \lor \log(c_1) \lor 1$ further yields
\begin{align*}
    \|\hat{g} - \tilde{g}\|_2 \lesssim e^{2c_3} \left(\inf_{g\in \mathcal{G}} \|g - \log p_0\|_2 + \delta_{n, t} \right).
\end{align*} Applying the triangle inequality, we obtain
\begin{align*}
    \|\hat{g} - \log p_0 \|_2 \le \|\hat{g} - \tilde g \|_2 + \|\tilde{g} - \log p_0\|_2 \lesssim e^{2c_3} \left(\inf_{g\in \mathcal{G}} \|g - \log p_0\|_2 + \delta_{n, t} \right).
\end{align*} Finally, observe that for any $x, y$
\begin{align*}
    \hat{p}(y|x) - p_0(y|x) = \exp(\bar{p}(y|x)) \left\{\hat{g}(y, x) - \log p_0(y|x)\right\}
\end{align*} with $\exp(\bar{p}(y|x)) \le e^{c_3}$, hence we have
\begin{align*}
    \|\hat{p} - p_0 \|_2 &= \left\|\exp(\bar{p}(y|x)) \left\{\hat{g}(y, x) - \log p_0(y|x)\right\} \right\|_2 \le \left\|\exp(\bar{p}(y|x))\right\|_\infty \left\|\hat{g}(y, x) - \log p_0(y|x)\right\|_2 \\
    &\le e^{3c_3} \left[\inf_{g\in \mathcal{G}} \|g - \log p_0\|_2 + \delta_n + \sqrt{\frac{t + \log (nc_3)}{n}}\right],
\end{align*} this concludes the proof.

\end{proof}

\subsection{Proof of Corollary \ref{cor:main}}

For the TV-distance, by using the inequality $(\mathbb{E}[X])^2 \le \mathbb{E}[X^2]$, we obtain
\begin{align*}
    \mathsf{R}_{\mathsf{TV}}(p_0, \hat{p}) &= \int \int_{\mathcal{Y}} \left|\hat{p}_0(y|x) - p_0(y|x) \right|dy \mu_{0,x}(dx) \\
    &\le \sqrt{\int \int_{\mathcal{Y}} \left|\hat{p}_0(y|x) - p_0(y|x) \right|^2 dy \mu_{0,x}(dx)} = \|\hat{p} - p_0\|_2 \le \delta_{\rm stat} \,,
\end{align*}
where the last conclusion follows from Theorem \ref{thm:main}. 
\\\\
\noindent
As for the general $f$-divergence, we perform a Taylor expansion: 
\begin{align*}
    & \mathsf{R}_{\mathsf{D}_f}(p_0, \hat{p}) \\
    & = \int f\left(\frac{\hat p(y\mid x)}{p_0(y \mid x)}\right) \ p_0(dy \mid x) \mu_{0, x}(dx) \\
    & = \int f(1) \ p_0(dy \mid x) \mu_{0, x}(dx)  + \int \left(\frac{\hat p(y\mid x)}{p_0(y \mid x)} - 1\right)f'(1)\ p_0(dy \mid x) \mu_{0, x}(dx) \\
    & \qquad \qquad \qquad + \int \left(\frac{\hat p(y\mid x)}{p_0(y \mid x)} - 1\right)^2f''\left(\lambda + (1-\lambda)\frac{\hat p(y\mid x)}{p_0(y \mid x)}\right) \ p_0(dy \mid x) \mu_{0, x}(dx) \\
    & = f'(1) \left(\int \hat p(dy \mid x) - 1\right) + \int \left(\frac{\hat p(y\mid x)}{p_0(y \mid x)} - 1\right)^2f''\left(\lambda + (1-\lambda)\frac{\hat p(y\mid x)}{p_0(y \mid x)}\right) \ p_0(dy \mid x) \mu_{0, x}(dx) 
\end{align*}
where in the last equality we have used the fact that $f(1) = 0$. Now consider the second term; as $\hat p $ is upper bounded by the choice of our estimator and $p_0$ is lower bounded by Condition \ref{cond:regularity}, we can upper bound the double derivative of $f$ by some constant. Therefore, we have: 
\begin{align*}
    & \int_\cX \int_\cY \left(\frac{\hat p(y\mid x)}{p_0(y \mid x)} - 1\right)^2f''\left(\lambda + (1-\lambda)\frac{\hat p(y\mid x)}{p_0(y \mid x)}\right) \ p_0(dy \mid x) \mu_{0, x}(dx) \\
    & \le C \int_\cX \int_\cY \left(\frac{\hat p(y\mid x)}{p_0(y \mid x)} - 1\right)^2 \ p_0(dy \mid x) \mu_{0, x}(dx) \\
    & \le C \int_\cX \int_\cY \frac{(\hat p(y \mid x) - p_0(y \mid x))^2}{p_0(y \mid x)} \ dy \ \mu_{0, x}(dx) \\
    & \le Cc_1 \int_\cX \int_\cY (\hat p(y \mid x) - p_0(y \mid x))^2 \ dy \ \mu_{0, x}(dx) \\
    & \le Cc_1 \|\hat p - p_0\|_2^2 \le Cc_1 \delta_{\rm stat} \,.
\end{align*}
Now, with respect to the first sum if $\hat p = \hat p_{\rm norm}$, then it is $0$, and consequently, we have: 
$$
\mathsf{R}_{\mathsf{D}_f}(p_0, \hat{p}) \lesssim \delta_{\rm stat}. 
$$
On the other hand, if it is not normalized, then we have: 
\begin{align*}
\int_\cX \left(\int_{\cY} \left(\hat p(y \mid x)  - 1\right) \ dy \ \mu_{0, x} \ dx\right) & = \int_\cX \left(\int_{\cY} \left(\hat p(y \mid x)  - p_0(y \mid x)\right) \ dy \ \mu_{0, x} \ dx\right) \\
& \le \int_\cX \left(\int_{\cY} \left|\hat p(y \mid x)  - p_0(y \mid x)\right| \ dy \ \mu_{0, x} \ dx\right)  \\
& \le \|\hat p - p_0\|_2 \lesssim \delta_{\rm stat}
\end{align*}
As a consequence, we have: 
$$
\mathsf{R}_{\mathsf{D}_f}(p_0, \hat{p}) \lesssim \sqrt{\delta_{\rm stat}}. 
$$

\subsection{Proof of Corollary \ref{cor:lowdim_factor}}
The proof of this corollary follows from the proof of Corollary \ref{cor:lowdim_compose} by observing the fact that $f_0(x, y) = \log{p_0(y \mid x)}$ belongs to $\mathcal{H}_{\sf hcm}(d, 2, \mathcal{O}, C)$ and consequently $p_0(y \mid x)$ belongs to $\mathcal{H}_{\sf hcm}(d, 3, \mathcal{O}, C)$.

\subsection{Proof of Corollary \ref{cor:lowdim_compose}}
The proof of this corollary essentially follows from the proof of Theorem \ref{thm:main}. Suppose $p_0(y \mid x) \in \mathcal{H}_{\sf hcm}(d, l, \mathcal{O}, C)$. Then it is immediate that $f_0(x, y) = \log{p_0(y \mid x)} \in \mathcal{H}_{\sf hcm}(d, l+1, \mathcal{O}, C)$ (as we are composing the a smooth function $\log{}$ with the conditional density. 
Then by Theorem 4 of \cite{fan2024factor} we know that: 
$$
\inf_{g \in \cG} \|g - \log{p_0}\|_2^2 \le c_5 N^{-4\gamma_*}
$$
where $\cG = \mathcal{H}_{\sf nn}(d_x + d_y, c_1, N, C_3), \gamma_* = \min_{(\beta, d) \in \mathcal{0}}(\beta/d)$. 
In other words, the approximation error above is achieved by a collection of neural networks with constant depth $c_1 = L$ and width $N$. 
As we are using a constant depth $c_1 = L$, the bound on Theorem \ref{thm:main} implies: 
$$
\|\hat p - p_0\|_2^2 \lesssim N^{-4\gamma_*} + \frac{(N^2 + c')\log{n}}{n}
$$
with probability $\ge 1 - 2n^{-c'}$ (for example, one may take $c' = 100$ as mentioned in the statement of the Corollary in the main draft). Now balancing the bias and the variance, and choosing $N \asymp (n/\log{n})^{1/(2 + 4\gamma_*)}$ we obtain that with probability $\ge 1 - 2n^{-c'}$, we have: 
$$
\|\hat p - p_0\|_2^2 \lesssim \left(\frac{n}{\log{n}}\right)^{\frac{2\gamma_*}{2\gamma_* + 1}} \,.
$$
Hence, the bound on $\mathsf{R}_{\sf TV}(\hat p, p_0)$ and $\mathsf{R}_{\sf D_f}(p_0, \hat p)$ follows from Corollary \ref{cor:main} and the bound on $\mathsf{R}_{\sf TV}(p_{\hat Y \mid X}, p_0)$ and $\sqrt{\mathsf{R}_{\sf KL}(p_0, p_{\hat Y \mid X})}$ follows from Theorem \ref{thm:main2}.

\subsection{Proof of Proposition \ref{prop:lb}}
\label{sec:proof-prop-lb}

    Denote $F(u, v) = -u \log \{\sigma(v)\} - \log\{1 - \sigma(v)\}$, it follows from second-order Tayler expansion that
    \begin{align*}
        F(u, v) - F(u, \tilde{v}) = \frac{\partial F}{\partial v} (u,\tilde{v}) \cdot (v - \tilde{v})+ \frac{1}{2} \frac{\partial^2 F}{\partial v^2} (u, \bar{v}) \cdot (v - \tilde{v})^2
    \end{align*} where $\bar{v} = w v + (1-w) v$ for some $w\in [0,1]$. It follows from basic calculation and the definition of $\sigma(\cdot)$ that
    \begin{align*}
        \frac{\partial F}{\partial v} (u, v) &= -u (1 - \sigma(v)) + \sigma(v) \\
        &= (1 + u) \left[\sigma(v) - \sigma(\log(u))\right] \\
        &= (1 + u) \sigma'(m(\log u, v)) (v - \log(u))
    \end{align*} where $m(u, v) = \omega u + (1-\omega) v$, $\sigma'(t) = \sigma(t) (1 - \sigma(t))$, and
    \begin{align*}
        \frac{\partial^2 F}{\partial v^2}(u, v) = (1 + u) \sigma'(v).
    \end{align*} Applying the above second-order expansion with $u = p_0(y|x)$ and $v = g(y,x)$ and $\tilde{v} = \tilde{g}(y, x)$, we obtain
    \begin{align*}
        \mathsf{L}(g) - \mathsf{L}(\tilde{g}) &= \mathbb{E}_{(X,Y)\sim \nu_0}\left[F(u, v) - F(u, \tilde{v})\right] \\ 
        &= \mathbb{E}_{(X,Y)\sim \nu_0} \left[(1 + p_0(Y|X)) \sigma'(m(\log p_0, \tilde{g})) (\tilde{g}(Y, X) - \log p_0(Y|X)) \{g(Y, X) - \tilde{g}(Y,X)\} \right] \\
        &\qquad \qquad + \mathbb{E}_{(X,Y)\sim \nu_0}\left[(1 + p_0(Y|X)) \sigma'(\bar{g}(Y, X)) (g - \tilde{g})^2(Y, X)\right] \\
        &= \mathsf{T}_1(g, \tilde{g}, \log p_0, m) + \mathsf{T}_2(g, \tilde{g}, \bar{g}).
    \end{align*}
    where $m(\log p(y|x), \tilde{g}(y, x)) = \log p(y|x) \cdot \omega(y, x) + \tilde{g}(y, x) \cdot (1 - \omega(y, x))$ with $\omega(y, x) \in [0,1]$, and $\bar{g}(y, x) = g(y, x) w(y, x) + \tilde{g}(y, x) (1 - w(y, x))$ with $w(y, x) \in [0,1]$. It follows from the Cauchy-Schwarz inequality that
    \begin{align*}
    \mathsf{T}_1(g, \tilde{g}, \log p_0, m) &\le \|g - \tilde{g}\|_2 \cdot \left\|(1 + p_0) \sigma'(m) (\tilde{g} - \log p_0)\right\|_2 \\
    &\le 2c_1 \|g - \tilde{g}\|_2 \|\tilde{g} - \log p_0\|_2
    \end{align*} where the second inequality follows from \cref{cond:regularity} and the uniform bound $\sigma'(t) = \sigma(t) (1 - \sigma(t)) \le 1$. 
    On the other hand, \cref{cond:regularity} and the fact that 
    $$
    \bar{g}(y, x) \in [\min\{{g}(y, x), \tilde{g}(y, x)\}, \max\{{g}(y, x),\tilde{g}(y, x)\}] \,,
    $$
    uniformly for any $(y, x)$, further gives
    \begin{align*}
        \sigma'(\bar{g}(y, x)) = \sigma(\bar{g}(y, x)) \cdot (1 - \sigma(\bar{g}(y, x))) = \frac{e^{\bar{g}(y, x)}}{(1 + e^{\bar{g}(y, x)})^2} \ge \frac{1}{2e^{c_3}},
    \end{align*} together with the non-negativity of $p_0$ further implies
    \begin{align*}
        \mathsf{T}_2(g, \tilde{g}, \bar{g}) &\ge \frac{1}{2e^{c_3}} \|g - \tilde{g}\|_2^2.
    \end{align*} Putting all the pieces together, we can conclude that
    \begin{align*}
        \mathsf{L}(g) - \mathsf{L}(\tilde{g}) &\ge - 2c_1 \|g - \tilde{g}\|_2 \|\tilde{g} - \log p_0\|_2 + \frac{1}{2e^{c_3}} \|g - \tilde{g}\|_2^2 \\
        &\ge \frac{1}{4e^{c_3}} \|g - \tilde{g}\|_2 ^2 - 4 c_1^2 e^{c_3} \|\tilde{g} - \log p_0\|_2^2,
    \end{align*} where the last inequality applies Holder inequality $ab \le \frac{1}{2} a^2 + \frac{1}{2} b^2$ with $a = \frac{1}{\sqrt{2e^{c_3}}} \|g - \tilde{g}\|_2$ and $b = \sqrt{2e^{c_3}} 2 c_1 \|\tilde{g} - \log p_0\|_2$. This completes the proof.

\subsection{Proof of Proposition \ref{prop:instance-dependent}}\label{proof:instance-dependent}

We need the following technical lemma from \cite{gu2024causality}.

\begin{lemma}[Instance-dependent error bound on empirical process, Lemma D.1 in \cite{gu2024causality}]
\label{lemma:ep1}
Suppose the function class $\mathcal{H}$ satisfies $\sup_{h\in \mathcal{H}} \|h\|_\infty \le b$, and for any $\delta \ge \delta_n \ge 1/n$, the local population Rademacher complexity satisfies
\begin{align}
\label{eq:local-rade}
    \mathsf{Rade}_{n, \nu}(\delta;\partial \mathcal{H}) \le b\delta_n \delta
\end{align} and the function $\Phi(h, h', z): \mathcal{H} \times \mathcal{H} \times \mathcal{Z}$ satisfies that, $\nu$-a.s.,
\begin{align}
\label{eq:lipcond}
    \Phi(h, h', Z) = v(h, h', Z) \phi(h - h') ~~~~\text{with}~~~~ |v(h, h', z)|\le L_1, \phi\text{ is } L_2 \text{-Lipschitz and } \phi(0)=0.
\end{align}
Then let $\delta_* = \delta_n + \sqrt{\frac{t+1+\log(nb)}{n}}$
\begin{align*}
    \mathbb{P}\Bigg[\forall h,h'\in \mathcal{H}, ~~ &\Big| \frac{1}{n} \sum_{i=1}^n \Phi(h, h', Z_i) - \mathbb{E}[\Phi(h, h', Z_i)] \Big| \\
    &~~~~~~~~~~\le C(bL_1L_2)\{\delta_* \|h-h'\|_{L_2(\nu)} + \delta_*^2\} \Bigg] \ge 1-e^{-t}.
\end{align*} for some universal constant $C>0$.
\end{lemma}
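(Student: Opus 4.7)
The plan is to establish this uniform localized empirical-process bound via a peeling argument over the $L_2(\nu)$-radius combined with Talagrand's concentration inequality on each slice. For each $r>0$, introduce the slice class
$$\mathcal{F}_r = \bigl\{\Phi(h,h',\cdot):\ h,h'\in\mathcal{H},\ \|h-h'\|_{L_2(\nu)}\le r\bigr\}.$$
Since $\phi$ is $L_2$-Lipschitz with $\phi(0)=0$ and $|v|\le L_1$ uniformly, every $f\in\mathcal{F}_r$ satisfies the envelope bound $\|f\|_\infty\le 2bL_1L_2$ and the variance bound $\|f\|_{L_2(\nu)}\le L_1L_2\,r$. These are exactly the two moment ingredients that Talagrand/Bousquet-type inequalities require.

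First I would control the Rademacher complexity of each slice. By Gin\'e--Zinn symmetrization, $\mathbb{E}\sup_{f\in\mathcal{F}_r}|(P_n-P)f|\le 2\,\mathsf{Rade}_{n,\nu}(\mathcal{F}_r)$, and the product structure $\Phi=v\cdot\phi(h-h')$ can be reduced to $\partial\mathcal{H}$ by a two-step contraction: Ledoux--Talagrand contraction peels off the Lipschitz envelope $\phi$ and contributes a factor $L_2$, while the bounded multiplier $v(h,h',z)$ is absorbed via a multiplier contraction using $|v|\le L_1$. Combined with the hypothesis $\mathsf{Rade}_{n,\nu}(r;\partial\mathcal{H})\le b\delta_n r$, this yields $\mathsf{Rade}_{n,\nu}(\mathcal{F}_r)\le C\,bL_1L_2\,\delta_n r$. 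Plugging this into Bousquet's version of Talagrand's inequality with the envelope and variance above gives, with probability at least $1-e^{-t'}$,
$$\sup_{f\in\mathcal{F}_r}|(P_n-P)f|\ \le\ C\bigl(bL_1L_2\delta_n r + L_1L_2\,r\sqrt{t'/n} + bL_1L_2\,t'/n\bigr).$$

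Next I would peel. Set $r_k=2^k\delta_*$ for $k=0,1,\ldots,K$ with $K=\lceil\log_2(2b/\delta_*)\rceil = O(\log(bn))$, since no pair in $\mathcal{H}^2$ can have $L_2$-distance exceeding $2b$. Apply the slice bound on each $\mathcal{F}_{r_k}$ with $t'=t+\log K$ and union-bound over $k$; the log-overhead is exactly what the $\log(nb)$ inside $\delta_*$ is designed to absorb. For any pair $(h,h')$ with $\|h-h'\|_{L_2(\nu)}\in(r_{k-1},r_k]$ and $k\ge 1$, one has $r_k\le 2\|h-h'\|_{L_2(\nu)}$ and $\delta_n\le \delta_*$, so the first two terms of the slice bound combine into $O(bL_1L_2\,\delta_*\|h-h'\|_{L_2(\nu)})$; the third term $bL_1L_2\,t'/n$ is at most $bL_1L_2\,\delta_*^2$ by definition of $\delta_*$; and the innermost slice ($\|h-h'\|_{L_2(\nu)}\le r_0=\delta_*$) is trivially bounded by the same $O(bL_1L_2\,\delta_*^2)$. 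Consolidating yields the claimed uniform bound $CbL_1L_2\bigl(\delta_*\|h-h'\|_{L_2(\nu)}+\delta_*^2\bigr)$.

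The main obstacle is the multiplier-contraction step for $v(h,h',z)$: the textbook Ledoux--Talagrand contraction requires the nonlinearity to be a single Lipschitz function of $h(Z_i)$, not an additional bounded factor depending on $(h,h')$ in an unspecified way. The cleanest resolution is to invoke a Maurer-type multiplier/vector contraction inequality; alternatively, one can condition on the Rademacher signs and view $\epsilon_i v(h,h',Z_i)$ as a bounded weighted Rademacher sequence before re-applying standard contraction. Either route produces the desired factor $L_1$ with only a universal-constant overhead, after which the remaining assembly (symmetrization, Bousquet, peeling) is routine local-Rademacher machinery.
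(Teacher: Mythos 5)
This lemma is not proved in the paper at all: it is imported verbatim as Lemma D.1 of \cite{gu2024causality}, so there is no internal argument to compare your proposal against, and I assess it on its own terms. Your overall architecture --- slicing by $\|h-h'\|_{L_2(\nu)}$, symmetrization, a per-slice complexity bound, Bousquet's inequality with envelope $2bL_1L_2$ and variance proxy $L_1L_2r$, and a union bound over $O(\log (bn))$ shells absorbed into $\delta_*$ --- is the standard local-Rademacher machinery and that bookkeeping is fine.

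The genuine gap is the step you flag and then declare resolvable: bounding the Rademacher complexity of the slice $\mathcal{F}_r=\{v(h,h',\cdot)\,\phi(h-h'):\|h-h'\|_{L_2(\nu)}\le r\}$ by $C\,bL_1L_2\,\delta_n r$ using only $|v|\le L_1$. Neither suggested fix works. Conditioning on the signs and treating $\epsilon_i v(h,h',Z_i)$ as a fixed weighted Rademacher sequence is circular, since $v$ varies with the pair $(h,h')$ over which the supremum is taken; and a Maurer-type vector contraction (which anyway needs joint Lipschitzness of $(u,u')\mapsto v\cdot\phi$, not mere boundedness of $v$) converts the bound into the complexity of the unlocalized pair class $\{(h,h')\}$, destroying exactly the localization in $\|h-h'\|_{L_2(\nu)}$ that produces the instance-dependent term. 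In fact, at the stated level of generality the slice bound is false: take $\mathcal{Z}=[0,1]$ with $\nu$ Lebesgue, $\mathcal{H}=\{h_c\equiv c:c\in[0,1]\}$ (so $b=1$ and \eqref{eq:local-rade} holds with $\delta_n=n^{-1/2}$), $\phi=\mathrm{id}$, $L_1=L_2=1$, and for the continuum of pairs with $c-c'=1/2$ let $v(h_c,h_{c'},\cdot)$ enumerate the functions $2\cdot\mathbf{1}_S-1$ over all finite $S\subset[0,1]$; then for every sample there is a pair with $\frac{1}{n}\sum_i\Phi(h,h',Z_i)-\mathbb{E}[\Phi(h,h',Z)]=1$, while the asserted bound is $O(\sqrt{\log n/n})$. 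So any correct proof must exploit structure of $v$ beyond boundedness --- in this paper's application $v=F'(\bar g)$ comes from the mean value theorem, so that $v\,\phi(h-h')=F\circ h-F\circ h'$ for a fixed $1$-Lipschitz $F$, and the complexity transfer can then be carried out, e.g., through sup-norm covering numbers as in Lemma~\ref{lemma:local-rademacher-uniform-covering-number} --- or must simply reproduce the argument of \cite{gu2024causality}. As written, the central complexity-transfer step of your proof does not go through.
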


\begin{proof}[Proof of \cref{prop:instance-dependent}]

\noindent \underline{\sc Step 1. Decomposition of $\Delta(g, \tilde{g})$.} We first decompose $\Delta(g, \tilde{g})$ into several parts. It follows from the definition of the loss \eqref{eq:loss} that
\begin{align*}
    \hat{\mathsf{L}}(g) - \hat{\mathsf{L}}(\tilde{g}) &= \frac{1}{n} \sum_{i=1}^n -\log \{\sigma (\tilde{g}(Y_i, X_i))\} - \left[-\log \{\sigma (g(Y_i, X_i))\}\right]\\
    &\qquad \qquad \frac{1}{n} \sum_{i=1}^n -\log \{1-\sigma (g(\tilde{Y}_i, X_i))\} - \left[-\log \{1-\sigma (\tilde{g}(\tilde{Y}_i, X_i))\}\right] \\
    &= \hat{\mathsf{T}}_1(g, \tilde{g}) + \hat{\mathsf{T}}_2(g, \tilde{g}).
\end{align*} and
\begin{align*}
    {\mathsf{L}}(g) - {\mathsf{L}}(\tilde{g}) &= \mathbb{E}_{(X,Y)\sim \nu_0} \left[ p_0(Y|X) \left\{\log(\sigma(\tilde{g}(Y,X))) - \log (\sigma(g(Y,X))) \right\}\right] \\
    &\qquad \qquad \mathbb{E}_{(X,Y)\sim \nu_0} \left[ \log(1-\sigma(\tilde{g}(Y,X))) - \log (1-\sigma(g(Y,X))) \right] \\
    &= {\mathsf{T}}_1(g, \tilde{g}) + {\mathsf{T}}_2(g, \tilde{g}).
\end{align*} Thus
\begin{align*}
    \Delta(g, \tilde{g}) = \hat{\mathsf{T}}_1(g, \tilde{g}) - {\mathsf{T}}_1(g, \tilde{g}) + \hat{\mathsf{T}}_2(g, \tilde{g}) - {\mathsf{T}}_2(g, \tilde{g}).
\end{align*}

In the rest of the proof, we will derive instance-dependent error bounds on $\hat{\Delta}_k(g, \tilde{g}) = \hat{\mathsf{T}}_k(g,\tilde{g}) - \mathsf{T}_k(g,\tilde{g})$ for $k\in \{2,1\}$ and then put the two pieces together. 

\noindent \underline{\sc Step 2. Error bound of $\hat{\Delta}_2(g, \tilde{g})$. } Let $F(v) = \log (1 - \sigma(v))$. We can write $\hat{\Delta}_2(g, \tilde{g})$ as 
\begin{align*}
    \hat{\Delta}_{2}(g, \tilde{g}) &= \frac{1}{n} \sum_{i=1}^n F(\tilde{g}(\tilde{Y}_i, X_i)) - F({g}(\tilde{Y}_i, X_i)) - \mathbb{E}_{(X,Y)\sim \nu_0}\left[F(\tilde{g}(Y, X)) - F({g}(Y, X))\right]\\
    &= \frac{1}{n} \sum_{i=1}^n F(\tilde{g}(Z_i)) - F({g}(Z_i)) - \mathbb{E}\left[F(\tilde{g}(Z)) - F({g}(Z))\right].
\end{align*} where $Z_i = (X_i, \tilde{Y}_i)$ are i.i.d. samples from $Z = (X, Y) \sim \nu_0$. It follows from the mean-value theorem that for any $v, \tilde{v} \in \mathbb{R}$, 
\begin{align*}
    F(v) - F(\tilde{v}) = F'(\bar{v}) (v - \tilde{v}) = -\sigma(\bar{v}) (v - \tilde{v})
\end{align*} where $\bar{v} = \omega v + (1 - \omega) \tilde{v}$ with $\omega \in [0,1]$. Thus,
\begin{align*}
    \hat{\Delta}_{2}(g, \tilde{g}) &= \frac{1}{n} \sum_{i=1}^n -\sigma(\bar{g}(Z_i)) (g(Z_i) - \tilde{g}(Z_i)) - \mathbb{E}[-\sigma(\bar{g}(Z)) (g(Z) - \tilde{g}(Z))]
\end{align*} with $\bar{g}(Z) = \omega(Z) g(Z) + (1 - \omega(Z)) \tilde{g}(Z)$ with $\omega(Z)$ depending only on $g(Z), \tilde{g}(Z)$.

Now we apply \cref{lemma:ep1} with $\mathcal{H} = \mathcal{G}$, $b = c_3$, $\nu = \nu_0$, $\Phi(h, h', z) = v(h, h', z) \phi(h- h')$, where $v(h, h', z) = -\sigma(\bar{g})$ and $\phi(t) = t$. It is easy to verify that \eqref{eq:local-rade} holds given \cref{cond:func-class-g} (2), and \eqref{eq:lipcond} holds with $L_1 = L_2 = 1$, and
\begin{align*}
\hat{\Delta}_{2}(g, \tilde{g}) = \frac{1}{n} \sum_{i=1}^n \Phi(g, \tilde{g}, Z_i)  - \mathbb{E}[\Phi(g, \tilde{g}, Z)].
\end{align*} Then \cref{lemma:ep1} shows the following event
\begin{align}
    \mathcal{A}_2 = \left\{\forall g, \tilde{g} \in \mathcal{G}, ~~ |\hat{\Delta}_{2}(g, \tilde{g})| \le C \left\{\|g - \tilde{g}\|_2 \delta_{n, t} + \delta_{n, t}^2 \right\}\right\}
\end{align} satisfies $\mathbb{P}\left[\mathcal{A}_2^c\right] \le e^{-t}$.

\noindent \underline{\sc Step 3. Error bound of $\hat{\Delta}_1(g, \tilde{g})$. } Let $F(v) = \log (\sigma(v))$. We can write $\hat{\Delta}_1(g, \tilde{g})$ as 
\begin{align*}
    \hat{\Delta}_{1}(g, \tilde{g}) &= \frac{1}{n} \sum_{i=1}^n F(\tilde{g}({Y}_i, X_i)) - F({g}({Y}_i, X_i)) - \mathbb{E}_{(X,\tilde{Y})\sim \nu_0}\big[ p_0(Y|X) \large\{F(\tilde{g}(Y, X)) - F({g}(Y, X))\large\}\big]\\
    &= \frac{1}{n} \sum_{i=1}^n F(\tilde{g}(Z_i)) - F(\tilde{g}(Z_i)) - \mathbb{E}_{{Z} \sim \mu_0}\left[F(\tilde{g}({Z})) - F({g}({Z}))\right].
\end{align*} 
where $Z_i = (X_i, {Y}_i)$ are i.i.d. samples from $Z = (X, Y) \sim \mu_0 = \mu_{0,x} \cdot p_0(y|x)$. It follows from the mean-value theorem that for any $v, \tilde{v} \in \mathbb{R}$, 
\begin{align*}
    F(v) - F(\tilde{v}) = F'(\bar{v}) (v - \tilde{v}) = [1-\sigma(\bar{v})] (v - \tilde{v})
\end{align*} where $\bar{v} = \omega v + (1 - \omega) \tilde{v}$ with $\omega \in [0,1]$. Thus,
\begin{align*}
    \hat{\Delta}_{1}(g, \tilde{g}) &= \frac{1}{n} \sum_{i=1}^n \{1-\sigma(\bar{g}(Z_i))\} (g(Z_i) - \tilde{g}(Z_i)) - \mathbb{E}[\{1-\sigma(\bar{g}(Z))\} (g(Z) - \tilde{g}(Z))]
\end{align*} with $\bar{g}(Z) = \omega(Z) g(Z) + (1 - \omega(Z)) \tilde{g}(Z)$ with $\omega(Z)$ depending only on $g(Z), \tilde{g}(Z)$.

Now we apply \cref{lemma:ep1} with $\mathcal{H} = \mathcal{G}$, $b = c_3$, $\nu = \mu_0$, $\Phi(h, h', z) = v(h, h', z) \phi(h- h')$, where $v(h, h', z) = 1-\sigma(\bar{g})$ and $\phi(t) = t$. It is easy to verify that \eqref{eq:local-rade} holds given \cref{cond:func-class-g} (2), and \eqref{eq:lipcond} holds with $L_1 = L_2 = 1$, and
\begin{align*}
\hat{\Delta}_{1}(g, \tilde{g}) = \frac{1}{n} \sum_{i=1}^n \Phi(g, \tilde{g}, Z_i)  - \mathbb{E}[\Phi(g, \tilde{g}, Z)].
\end{align*} Then \cref{lemma:ep1} shows the following event
\begin{align}
    \mathcal{A}_1 = \left\{\forall g, \tilde{g} \in \mathcal{G}, ~~ |\hat{\Delta}_{2}(g, \tilde{g})| \le C \left\{\|g - \tilde{g}\|_2 \delta_{n, t} + \delta_{n, t}^2 \right\}\right\}
\end{align} satisfies $\mathbb{P}\left[\mathcal{A}_1^c\right] \le e^{-t}$.

\noindent \underline{\sc Step 4. Conclusion. } Now under $\mathcal{A}_1 \cap \mathcal{A}_2$, which occurs with probability at least
\begin{align*}
    \mathbb{P}\left[\mathcal{A}_1 \cap \mathcal{A}_2\right] = 1 - \mathbb{P}\left[\mathcal{A}_1^c \cup \mathcal{A}_2^c\right] \ge 1 - 2e^{-t},
\end{align*} by union bound, we have, by triangle inequality, that
\begin{align*}
    \forall g, \tilde{g} \in \mathcal{G}, \qquad |\Delta(g, \tilde{g})| \le |\hat\Delta_1(g, \tilde{g})| + |\hat\Delta_2(g, \tilde{g})| \le C (\delta_{n, t} \|g - \tilde{g}\|_2 + \delta_{n, t}^2).
\end{align*}
\end{proof}

\section{Proofs for Implicit Density Estimator}

\subsection{Proof of Theorem \ref{thm:main2}}
In this section, we present the proof of Theorem \ref{thm:main2} assuming Proposition \ref{thm:score_est}. The proof of Proposition \ref{thm:score_est} is delegated to Section \ref{sec:proof_score_est}. For notational simplicity, define $\varepsilon_{\rm score}(x, t)$ as the estimation error of the score function given $X = x$ and at time $t$, i.e.: 
$$
\int (\hat{s}_K(y, t|X) - s^\star(y, t|X) )^2 \ p_t(y|X) \ dy = \varepsilon_{\rm score}(x, t) \,.
$$
Furthermore, define $\varepsilon_{\rm score}(x)$ as: 
$$
\varepsilon_{\rm score}(x) = \sum_{n=0}^{N+1}(t_{n+1} - t_n) \varepsilon_{\rm score}(x, T - t_n) \,. 
$$
This implies: 
$$ 
\sum_{n=0}^{N+1}(t_{n+1} - t_n)\left(\int (\hat{s}_K(y, T - t_n|X) - s^\star(y, T-t_n|X) )^2 \ p_t(y|X) \ dy\right) = \varepsilon_{\rm score}(X) \,.
$$
It is immediate from Proposition \ref{thm:score_est} that $\bbE_X(\varepsilon_{\rm score}(X, t)) \le \delta_{\rm score}(t)$, which further yields: 
\begin{align*}
& \bbE_X\left[\sum_{n=0}^{N+1}(t_{n+1} - t_n)\left(\int (\hat{s}_K(y, T - t_n|X) - s^\star(y, T-t_n|X) )^2 \ p_t(y|X) \ dy\right)\right] \\
& = \bbE_X[\varepsilon_{\rm score}(X)] \\
& \le \sum_{n=0}^{N+1}(t_{n+1} - t_n) \delta_{\rm score}(T - t_n) \,.
\end{align*}
Now let us consider Theorem 1 \cite{benton2023nearly} or Theorem 2 of \cite{huang2013oracle}. An application of any of these theorems yields: 
$$
\KL\left(p_{\delta}(\cdot \mid X) \mid p_{\hat Y \mid X}\right) \le C\left[\varepsilon_{\rm score}(X) + \kappa^2 N d_y + \kappa T d_y + de^{-2T}\right] \,.
$$
It is immediate from Equation \eqref{eq:discrete-timestep} that $\kappa \asymp (T + \log{(1/\delta)})/N$. Taking $T = O(1)$, we get: 
\begin{equation}
\label{eq:kl_bound}
\KL\left(p_{\delta}(\cdot \mid X) \mid p_{\hat Y \mid X}\right) \le C\left[\varepsilon_{\rm score}(X) + \frac{(T + \log{(1/\delta)})^2}{N}d_y + \frac{T(T+ \log{(1/\delta)})}{N} d_y + d_ye^{-2T}\right] \,.
\end{equation}
Note that although the left-hand side is a function of $X$, the last three terms of the bound on the RHS does not depend on $X$ as per Condition \ref{cond:regularity}, the first and the second moment of the conditional distribution of $Y$ given $X$ is uniformly bounded over $X$. Taking expectation with respect to $X$ on bound side of Equation \eqref{eq:kl_bound}, we have: 
\begin{align*}
& \bbE_X\left[\KL\left(p_{\delta}(\cdot \mid X) \mid p_{\hat Y \mid X}\right)\right] \\
& \le C\left[\sum_{n=0}^{N+1}(t_{n+1} - t_n) \delta_{\rm score}(T - t_n) + \frac{(T + \log{(1/\delta)})^2}{N}d_y + \frac{T(T+ \log{(1/\delta)})}{N} d_y + d_ye^{-2T}\right] \,.
\end{align*}

\subsection{Proof of Proposition \ref{thm:score_est}}
\label{sec:proof_score_est}
Set $\Omega = \mathbb{R}^d$ and $\Theta = [0,1]^d$. We further define: 
\begin{align*}
    D(x, t) &= \int_{\Theta} \frac{1}{(\sqrt{2\pi}\sigma_t)^d} \exp\left(-\frac{\|x - m_t y\|_2^2}{2\sigma_t^2}\right)p(y) dy = p_t(x) \in \mathbb{R}\\
    N(x, t) &= -\int_{\Theta} \frac{x - m_t y}{\sigma_t}\frac{1}{(\sqrt{2\pi} \sigma_t)^d} \exp\left(-\frac{\|x - m_t y\|_2^2}{2\sigma_t^2}\right) p(y) dy \in \mathbb{R}^d
\end{align*} 
It is immediate that the score function $s(x, t)$ of $Y_t$ (forward process) satisfies 
$$
s(x, t) = \frac{1}{\sigma_t} N(x, t)/D(x, t) \,.
$$
Let $\hat{N}, \hat{D}$ and be the estimated counterparts of $(N, D)$, with $p_0$ replaced by $\hat p_0$, i.e.
\begin{align*}
    \hat D(x, t) &= \int_{\Theta} \frac{1}{(\sqrt{2\pi}\sigma_t)^d} \exp\left(-\frac{\|x - m_t y\|_2^2}{2\sigma_t^2}\right)\hat p_0(y \mid X = x) dy = p_t(x) \in \mathbb{R}\\
    \hat N(x, t) &= -\int_{\Theta} \frac{x - m_t y}{\sigma_t}\frac{1}{(\sqrt{2\pi} \sigma_t)^d} \exp\left(-\frac{\|x - m_t y\|_2^2}{2\sigma_t^2}\right) \hat p_0(y \mid X = x) dy \in \mathbb{R}^d
\end{align*}
Last but not least, let $\hat N^{\rm emp}, \hat D^{\rm emp}$, denote the empirical counterpart of $(N, D)$, where we replace the population average in the definition of $(\hat N, \hat D)$ by sample average: 
\begin{align*}
\hat D^{\rm emp}(x, t) &= \frac1K \sum_{i = 1}^K \hat p_0\left(\frac{x - \sigma_t Z_i}{m_t}\right) \\
\hat N^{\rm emp}(x, t) & = \frac{1}{K}\sum_{i = 1}^K Z_i \hat p_0\left(\frac{x - \sigma_t Z_i}{m_t}\right)
\end{align*}
where $Z_1, \dots, Z_n \overset{i.i.d.}{\sim} \cN(0, 1)$. Recall that $\sigma_t \hat s(x, t) = \hat N^{\rm emp}(x, t)/\hat D^{\rm emp}(x, t)$. For ease of presentation, define $\sigma_t \tilde s(x, t) = \hat N(x, t)/\hat D(x, t)$. 
An application of the inequality $(a + b)^2 \le 2(a^2 + b^2)$ yields: 
\begin{equation}
\label{eq:score_break_1}
\bbE_{Y_t}\left[(\hat{s}(Y_t, t) - s(Y_t, t) )^2 \right] \le 2\underbrace{\bbE_{Y_t}\left[(\hat{s}(Y_t, t) - \tilde s(Y_t, t) )^2 \right]}_{:= T_1} + 2\underbrace{\bbE_{Y_t}\left[(\tilde{s}(Y_t, t) - s(Y_t, t) )^2 \right]}_{:=T_2}
\end{equation}
We would like to highlight that both $T_1$ and $T_2$ are random variables; the randomness of $T_2$ stems from the observed data $\cD_n$ (through $\hat p_0(y \mid X = x)$) and the randomness of $T_1$ arises both from $\cD_n$ and the randomness of $\{Z_1, \dots, Z_n\}$. 
We start with $T_2$: 
\begin{align*}
& \bbE_{Y_t}\left[(\tilde{s}(Y_t, t) - s(Y_t, t) )^2 \right] \\
= & \int_{\Omega} (\tilde{s}(x, t) - s(x, t) )^2 p_t(x) dx \\
= & \int_{\Omega} (\tilde{s}(x, t) - s(x, t) )^2 D(x, t) dx \hspace{.1in} [\because D(x, t) = p_t(x)] \\
\le & \frac{2}{\sigma_t^2} \left(\int_{\Omega} \left\|\frac{\hat{N}(x,t)}{\hat{D}(x,t)}\right\|_2^2 \frac{(\hat{D}(x,t) - D(x,t))^2}{D(x,t)} dx + \int_\Omega \frac{\|\hat{N}(x,t) - N(x,t)\|^2}{D(x,t)} dx\right) 
\end{align*}
Let $\hat{\Delta}(y) = \hat{p}_{0}(y \mid X = x) - p_0(y \mid X = x)$. For the second summand, it follows from the Cauchy-Schwarz inequality that
\allowdisplaybreaks
\begin{align*}
& \int_\Omega \frac{\|\hat{N}(x,t) - N(x,t)\|^2}{D(x,t)} dx \\
&= \int_{\Omega} \frac{1}{D(x, t)} \left\|\int_{\Theta} \frac{x - m_t y}{\sigma_t}\frac{1}{(\sqrt{2\pi} \sigma_t)^d} \exp\left(-\frac{\|x - m_t y\|_2^2}{2\sigma_t^2}\right) \hat{\Delta}(y) dy \right\|_2^2 \\
&\le \|p^{-1}\|_\infty \int_{\Omega} \frac{1}{D(x, t)} \left\|\int_{\Theta} \frac{x - m_t y}{\sigma_t}\frac{1}{(\sqrt{2\pi} \sigma_t)^d} \exp\left(-\frac{\|x - m_t y\|_2^2}{2\sigma_t^2}\right) \sqrt{p(y)} \cdot \hat{\Delta}(y) dy \right\|_2^2 \\
&\le \|p^{-1}\|_\infty \int_{\Omega} \left(\frac{D(x, t)}{D(x, t)} \int_\Theta \hat{\Delta}^2(y) \left\|\frac{x - m_t y}{\sigma_t}\right\|_2^2   \frac{1}{(\sqrt{2\pi} \sigma_t)^d} \exp\left(-\frac{\|x - m_t y\|_2^2}{2\sigma_t^2}\right) dy \right) dx \\
&= \|p^{-1}\|_\infty \int_\Theta \hat{\Delta}^2(y) \left(\int \left\|\frac{x - m_t y}{\sigma_t}\right\|_2^2   \frac{1}{(\sqrt{2\pi} \sigma_t)^d} \exp\left(-\frac{\|x - m_t y\|_2^2}{2\sigma_t^2}\right) dx \right) dy \\
&= \|p^{-1}\|_\infty \int_\Theta \hat{\Delta}^2(y) \left(\int \left\|z\right\|_2^2   \frac{1}{(\sqrt{2\pi})^d} \exp\left(-\|z\|_2^2/2\right) dz \right) dy \\
&= \|p^{-1}\|_\infty d \int_\Theta \hat{\Delta}^2(y) dy.
\end{align*}
Observe that it also follows from Cauchy Schwarz inequality that, similarly, 
\begin{align*}
    \|\hat{N}(x, t)\|_2^2 &\le \hat{D}^2(x, t) \left(\int_\Theta \left\|\frac{x - m_t y}{\sigma_t}\right\|_2^2   \frac{1}{(\sqrt{2\pi} \sigma_t)^d} \exp\left(-\frac{\|x - m_t y\|_2^2}{2\sigma_t^2}\right) \hat{p}(y)dy \right) \\
    &\le \hat{D}^2(x, t) \int \|z\|_2^2 \frac{1}{(\sqrt{2\pi})^d} \exp\left(-\|z\|_2^2/2\right) \hat{p}\left(\frac{x + \sigma_t z}{m_t}\right) dz \\
    &\le \hat{D}^2(x, t) d \|\hat{p}\|_\infty.
\end{align*}
Turning to the first summand, plug-in our uniform bound above, 
\begin{align*}
\int_{\Omega} \left\|\frac{\hat{N}(x,t)}{\hat{D}(x,t)}\right\|_2^2 \frac{(\hat{D}(x,t) - D(x,t))^2}{D(x,t)} dx & \le d \|\hat{p}\|_\infty \int_{\Omega} \frac{(\hat{D}(x,t) - D(x,t))^2}{D(x,t)} dx \\
&\le  d \|\hat{p}\|_\infty \|p^{-1}\|_\infty \int_\Theta \hat{\Delta}^2(y) dy.
\end{align*}
Putting both the bounds together, we obtain
\begin{align*}
    T_2 := \bbE_{Y_t}\left[(\tilde{s}(Y_t, t) - s(Y_t, t) )^2 \right]  \le \frac{d \|p^{-1}\|_\infty \left(1 + \|\hat{p}\|_\infty\right)}{\sigma_t^2} \int_\Theta \hat{\Delta}^2(y) dy.
\end{align*}
Now for $T_1$ of Equation \eqref{eq:score_break_1}, we have: 
\begin{align*}
   &  \int_{\Omega} \left\|\hat{s}(x, t) - \tilde s(x, t)\right\|^2 p_t(x) dx \\
   & = \frac{1}{\sigma^2_t}\int_\Omega \left\|\frac{\hat N^{\rm emp}(x, t)}{\hat D^{\rm emp}(x, t)} - \frac{\hat N(x, t)}{\hat D(x, t)}\right\|_2^2 \ p_t(x) \ dx \\
   & = \sigma^2_t\int_\Omega \left\|\frac{\hat N(x, t)}{\hat D(x, t)} - \frac{\hat N(x, t)}{\hat D^{\rm emp}(x, t)} + \frac{\hat N(x, t)}{\hat D^{\rm emp}(x, t)} - \frac{\hat N^{\rm emp}(x, t)}{\hat D^{\rm emp}(x, t)}\right\|_2^2 \ p_t(x) \ dx \\
   & = \sigma^2_t\int_\Omega \left\|\frac{\hat N(x, t)}{\hat D(x, t)}\left(1 - \frac{\hat D(x, t)}{\hat D^{\rm emp}(x, t)}\right) + \frac{1}{\hat D^{\rm emp}(x, t)}\left(\hat N(x, t) - \hat N^{\rm emp}(x, t)\right)\right\|_2^2 \ p_t(x) \ dx \\
   & \le \frac{2}{\sigma^2_t}\int_\Omega \left\|\frac{\hat N(x, t)}{\hat D(x, t)}\left(1 - \frac{\hat D(x, t)}{\hat D^{\rm emp}(x, t)}\right)\right\|^2_2 \ p_t(x) \ dx \\
   & \qquad \qquad \qquad + \frac{2}{\sigma^2_t}\int_\Omega \left\|\frac{1}{\hat D^{\rm emp}(x, t)}\left(\hat N(x, t) - \hat N^{\rm emp}(x, t)\right)\right\|_2^2 \ p_t(x) \ dx \\
   & \le  \frac{2}{\sigma^2_t}\int_\Omega \left\|\frac{\hat N(x, t)}{\hat D(x, t)}\right\|_2^2 \frac{(\hat D(x, t) - \hat D^{\rm emp}(x, t))^2}{(\hat D^{\rm emp}(x, t))^2} \ p_t(x) dx \\
   & \qquad \qquad \qquad + \frac{2}{\sigma^2_t}\int_\Omega \frac{\|\hat N(x, t)- \hat N^{\rm emp}(x, t)\|_2^2}{(\hat D^{\rm emp}(x, t))^2} \ p_t(x) dx \\
   & \le \frac{2d\|\hat p_0\|_{\infty}}{\sigma^2_t} \int_\Omega \frac{(\hat D(x, t) - \hat D^{\rm emp}(x, t))^2}{(\hat D^{\rm emp}(x, t))^2} \ p_t(x) dx + \frac{2}{\sigma^2_t}\int_\Omega \frac{\|\hat N(x, t)- \hat N^{\rm emp}(x, t)\|_2^2}{(\hat D^{\rm emp}(x, t))^2} \ p_t(x) dx \\
   & \le \frac{2d\|\hat p_0\|_{\infty}}{\sigma^2_t} \int_\Omega(\hat D(x, t) - \hat D^{\rm emp}(x, t))^2 \ p_t(x) dx \\
   & \qquad \qquad \qquad + \frac{2\|\hat p_0^{-1}\|_\infty^2}{\sigma^2_t} \int_\Omega \|\hat N(x, t)- \hat N^{\rm emp}(x, t)\|_2^2 \ p_t(x) \ dx 
\end{align*}
Therefore, we need to provide an upper bound of 
$$
\int_\Omega(\hat D(x, t) - \hat D^{\rm emp}(x, t))^2 \ p_t(x) dx \qquad \text{and} \qquad  \int_\Omega \|\hat N(x, t)- \hat N^{\rm emp}(x, t)\|_2^2 \ p_t(x) \ dx \,.
$$
For notational simplicity, define $f(Z; x)$ as: 
$$
f(Z; x) = Z\hat p_0\left(\frac{x - \sigma_t Z}{m_t}\right) \,.
$$
Then, we have: 
$$
\hat N(x, t) - \hat N^{\rm emp}(x, t) = (\bbP_n - \bbP)f(Z; x)
$$
where $\bbP_n$ (resp. $\bbP$) denotes the empirical measure (resp. population measure) with respect to $Z$. Using this notation, we have: 
\begin{align*}
    &  \int_\Omega \|\hat N(x, t)- \hat N^{\rm emp}(x, t)\|_2^2 \ p_t(x) \ dx \\
    &  = \int_\Omega \|(\bbP_n - \bbP)f(Z; x)\|_2^2 \ p_t(x) \ dx \\
    & = \frac{1}{K^2} \sum_{i = 1}^K \int_\Omega \|f(Z_i; x) - \bbE_Z[f(Z; x)]\|^2 \ dx  \\
    & \qquad \qquad + \frac{1}{K^2}\sum_{i \neq j}\int_\Omega (f(Z_i; x) - \bbE_Z[f(Z; x)])^\top(f(Z_j; x) - \bbE_Z[f(Z; x)]) \ dx \\
    & := S_1 + S_2 \,.
\end{align*}
The term $S_2$ can be written as a sum of degenerate $U$-statistics. Towards that goal, define: 
\begin{align*}
 K(x, y) & =  \int_\Omega \hat p_0\left(\frac{u  - \sigma_t x}{m_t}\right)\hat p_0\left(\frac{u - \sigma_t y}{m_t}\right) \ p_t(u) \ du \\
    h(x, y) & = x^\top y \ K(x, y)\\
    h^D(Z, Z') & = h(Z, Z') - \bbE[h(Z, Z') \mid Z] - \bbE[h(Z, Z') \mid Z'] + \bbE[h(Z, Z')] \,.
\end{align*}
Then it is immediate that: 
$$
h^D(Z_i, Z_j) = \int_\Omega (f(Z_i; x) - \bbE_Z[f(Z; x)])^\top(f(Z_j; x) - \bbE_Z[f(Z; x)]) \ dx \,.
$$
As a consequence, we have: 
$$
S_2 = \frac{1}{n^2} \sum_{i \neq j} h^D(Z_i, Z_j) \,.
$$
Our next goal is to present a high probability upper bound on $S_2$. However, we need to use a truncation-based argument as $h_D$ is unbounded. Therefore, we divide the summand into two parts: 
\begin{align*}
\frac{1}{n^2} \sum_{i \neq j} h^D(Z_i, Z_j) & = \frac{1}{n^2} \sum_{i \neq j} h^D(Z_i, Z_j)\mathds{1}_{\|Z_i\| \le C\sqrt{d\log{n}}, \|Z_j\| \le C\sqrt{d\log{n}}} \\
& \qquad \qquad + \frac{1}{n^2} \sum_{i \neq j} h^D(Z_i, Z_j)\left(1 - \mathds{1}_{\|Z_i\| \le \sqrt{Cd\log{n}}, \|Z_j\| \le \sqrt{Cd\log{n}}}\right) \\
& \triangleq \frac{1}{n^2} \sum_{i \neq j} h^{n, D}(Z_i, Z_j) + \frac{1}{n^2} \sum_{i \neq j} (h - h^{n, D})(Z_i, Z_j) \,.
\end{align*}
Here, $C$ is a large constant to be defined later. To tackle the first, we first relate the degenerate U-statistics to uncoupled U-statistics using Theorem 1 of \cite{de1995decoupling}, which states that there exists some universal constant $C_2$
$$
\bbP\left(\left|\frac{1}{n^2} \sum_{i \neq j} h^{n, D}(Z_i, Z_j)\right| > t\right) \le C_2 \bbP\left(\left|\frac{1}{n^2} \sum_{i \neq j} h^{n, D}(Z_i, Z'_j)\right| > t/C_2\right)
$$
where $Z_1', \dots, Z_n'$ are i.i.d. copies of $(Z_1, \dots, Z_n)$. Therefore, it is enough to provide an upper bound on the right hand side of the above inequality. Towards that goal, we use Corollary 3.4 of \cite{gine2000exponential}. Note that by the trunction, we have: 
\begin{align*}
    \|h^{n, D}\|_\infty & \le 4Cd\|\hat p_0\|^2_\infty\log{n} \,, \\
    \bbE[(h^{n, D}(Z, Z'))^2] & \le (4Cd\|\hat p_0\|^2_\infty\log{n})^2 \,, \\
    \sup_{z}\bbE\left[(h^{n, D}(Z, Z'))^2 \mid Z = z\right] & \le (4Cd\|\hat p_0\|^2_\infty\log{n})^2 \,, \\
    \sup_z\bbE\left[(h^{n, D}(Z, Z'))^2 \mid Z' = z\right] & \le (4Cd\|\hat p_0\|^2_\infty\log{n})^2  \,.
\end{align*}
Furthermore, define $\|h^{n, D}\|_{L_2 \to L_2}$ as: 
$$
\sup\left\{\bbE[h^{n, D}(Z, Z')f(Z)g(Z')]: \ \bbE[f^2(Z)] \le 1, \bbE[g^2(Z')] \le 1\right\} \,.
$$
Now, for any $(f, g)$ with $\|f\|_2 \le 1, \|g\|_2 \le 1$, we have: 
$$
\bbE[h^{n, D}(Z, Z')f(Z)g(Z')] \le \left(4Cd\|\hat p_0\|^2_\infty \log{n}\right)\bbE[|f(Z)||g(Z')|] \le 4Cd\|\hat p_0\|^2_\infty \log{n} \,.
$$
An application of Corollary 3.4 of \cite{gine2000exponential} yields:  
\begin{align*}
    & \bbP\left(\left|\frac{1}{n^2} \sum_{i \neq j} h^{n, D}(Z_i, Z'_j)\right| > \frac{(4Cd\|\hat p_0\|^2_\infty \log{n})^2}{n}\right) \\
    & \le K_1\exp\left(-K_2\min\left\{(\log{n})^2, \log{n}, n^{1/3}(\log{n})^{2/3}, \sqrt{n\log{n}}\right\}\right) = K_1\exp\left(-K_2 \log{n}\right) \,.
\end{align*}
Here one can make $K_2$ large by choosing large $C$. 
Therefore, we conclude that: 
$$
\frac{1}{n^2} \sum_{i \neq j} h^{n, D}(Z_i, Z'_j) \le \frac{(4Cd\|\hat p_0\|^2_\infty \log{n})^2}{n} \ \text{ with probability } \ \ge 1 - K_1\exp\left(-K_2 \log{n}\right)  \,.
$$
Now, for the other part, we use the tail bound for the norm of a Gaussian random variable. From Example 2.12 of \cite{boucheron2003concentration}, we have for $t \ge d$, 
$$
\bbP(\|Z\|_2^2 \ge t) \le \exp\left(-\frac{t^2}{8}\right) \,.
$$
Therefore, 
\begin{align*}
    \bbE\left[\|Z\|\mathds{1}_{\|Z\| \ge \sqrt{Cd\log{n}}}\right] \le 2\sqrt{Cd\log{n}} \ \exp\left(-\frac{Cd\log{n}}{8}\right) \le \frac{2\sqrt{Cd\log{n}}}{n^{\frac{Cd}{8}}} \,. 
\end{align*}
\begin{align*}
    \bbE\left[\|Z\|\mathds{1}_{\|Z\| \ge \sqrt{Cd\log{n}}}\right] & \le 2\sqrt{Cd\log{n}} \ \exp\left(-\frac{Cd\log{n}}{8}\right) \\
    & = 2\exp\left(\frac{1}{2}\log{(Cd\log{n})} - \frac{Cd\log{n}}{8}\right) \\
    & \le 2\exp\left(- \frac{Cd\log{n}}{9}\right) \hspace{.1in} [\forall \ \text{large } n] \,.
\end{align*}
This immediately implies: 
\begin{align*}
    & \bbE\left[\left|\frac{1}{n^2} \sum_{i \neq j} (h - h^{n, D})(Z_i, Z_j)\right|\right] \\
    & \le \frac{4}{n^2}\sum_{i \neq j} \bbE\left[\|Z_i\|\|Z_j\|K(Z_i, Z_j)\left(1 - \mathds{1}_{\|Z_i\| \le \sqrt{Cd\log{n}}, \|Z_j\| \le \sqrt{Cd\log{n}}}\right)\right] \\
    & \le \frac{4\|\hat p_0\|_\infty^2}{n^2}\sum_{i \neq j} \left(\bbE\left[\|Z_i\|\|Z_j\|\mathds{1}_{\|Z_i\| >\sqrt{Cd\log{n}}}\right] + \bbE\left[\|Z_i\|\|Z_j\|\mathds{1}_{\|Z_j\| > \sqrt{Cd\log{n}}}\right]\right) \\
    & \le \frac{8\sqrt{d}\|\hat p_0\|_\infty^2}{n}\sum_{j=1}^n \bbE\left[\|Z_j\|\mathds{1}_{\|Z_j\| > \sqrt{Cd\log{n}}}\right] \\
    & \le (16\sqrt{d}\|\hat p_0\|_\infty^2)\exp\left(- \frac{Cd\log{n}}{9}\right) \,.
\end{align*}
Therefore, we have: 
$$
\bbP\left(\left|\frac{1}{n^2} \sum_{i \neq j} (h - h^{n, D})(Z_i, Z_j)\right| \ge  (16\sqrt{d}\|\hat p_0\|_\infty^2)\exp\left(- \frac{Cd\log{n}}{18}\right)\right) \le \exp\left(- \frac{Cd\log{n}}{18}\right) \,.
$$
Combining the upper bounds, we obtain, with probability $1 - K_3 \exp(-K_4 \log{n})$: 
\begin{align*}
    S_2 \le \frac{(4Cd\|\hat p_0\|^2_\infty \log{n})^2}{n} + (16\sqrt{d}\|\hat p_0\|_\infty^2)\exp\left(- \frac{Cd\log{n}}{18}\right) \le \frac{32(Cd\|\hat p_0\|^2_\infty \log{n})^2}{n}
\end{align*}
where the second inequality holds for all large $n$, as the first term starts to dominate the second term. Now going back to $S_1$, we have: 
\begin{align*}
    & \frac{1}{K^2} \sum_{i = 1}^K \int_\Omega \|f(Z_i; x) - \bbE_Z[f(Z; x)]\|^2 \ p_t(x) \ dx \\
    & \le \frac{2}{n^2}\sum_{i = 1}^n \int_\Omega \|f(Z_i; x)\|_2^2 \ p_t(x) \ dx + \frac{2}{n}\int_{\Omega} \|\bbE_Z[f(Z; x)]\|_2^2 \ p_t(x) \ dx \\
    & \le 
    \frac{2\|\hat p_0\|_\infty^2}{n}\ \frac{1}{n}\sum_{i = 1}^n \|Z_i\|_2^2 + \frac{2}{n}\int_{\Omega} \bbE_Z[\|f(Z; x)\|_2^2] \ p_t(x) \ dx \\
    & \le  \frac{2\|\hat p_0\|_\infty^2}{n}\ \frac{1}{n}\sum_{i = 1}^n (\|Z_i\|_2^2 - d) + \frac{2d\|\hat p_0\|_\infty^2}{n} + \frac{2d\|\hat p_0\|_\infty^2}{n} 
\end{align*}
Now, by Bernstein's inequality for centered sub-exponential random variables, we have: 
$$
\bbP\left(\frac{1}{\sqrt{n}}\sum_{i = 1}^n (\|Z_i\|_2^2 - d) \ge t\right) \le 2 \exp\left(-\min\left\{\frac{t^2}{B^2d^2}, \frac{t}{Bd}\right\}\right)
$$
for some universal constant $B$. As a consequence, we have: 
$$
\bbP\left(\frac{1}{n}\sum_{i = 1}^n (\|Z_i\|_2^2 - d) \ge \frac{CBd\log{n}}{\sqrt{n}}\right) \le 2 \exp\left(-\min\left\{C^2 \log^2{n}, C\log{n}\right\}\right) = 2\exp\left(-C\log{n}\right) \,.
$$
Therefore, we can conclude that with probability $\ge 1 - 2\exp(-C\log{n})$, we have: 
$$
S_1 \le \frac{2\|\hat p_0\|_\infty^2CBd\log{n}}{n^{3/2}} + \frac{4d\|\hat p_0\|_\infty^2}{n} \le \frac{5Cd\|\hat p_0\|_\infty^2}{n}\,.
$$
where the last inequality holds for all large $n$. Hence, combining the bounds on $S_1$ and $S_2$ we have with probability $\ge 1 - K_1 \exp(-K_2 \log{n})$: 
$$
S_1 + S_2 \le \frac{33(Cd\|\hat p_0\|^2_\infty \log{n})^2}{n} \,.
$$
Next, we turn to the upper bound for the difference between $\hat D^{\rm emp}$ and $\hat D$. For notational simplicity, define: 
$$
g_t(Z; x) = \hat p_0\left(\frac{x - \sigma_t Z}{m_t}\right) \,.
$$
Then we have: 
$$
\hat D^{\rm emp}(x, t) - \hat D(x, t) = (\bbP_n - \bbP)g_t(Z, x)
$$
Therefore, as for the numerator: 
\begin{align*}
    & \int_\Omega(\hat D(x, t) - \hat D^{\rm emp}(x, t))^2 \ p_t(x) dx \\
    & = \int_\Omega((\bbP_n - \bbP)g_t(Z, x))^2 \ p_t(x) dx \\
     & = \frac{1}{K^2} \sum_{i = 1}^K \int_\Omega (g_t(Z_i; x) - \bbE_Z[g_t(Z; x)])^2 \ dx  \\
     & \qquad \qquad + \frac{1}{n^2}\sum_{i \neq j}\int_\Omega (g_t(Z_i; x) - \bbE_Z[g_t(Z; x)])(g_t(Z_j; x) - \bbE_Z[g_t(Z; x)]) \ dx \\
    & := S_3 + S_4 \,.
\end{align*}
Using same argument as for the numerator, we can show that we can express $S_4$ as sum of degenerate $U$-statistics, i.e., 
$$
S_4 = \frac{1}{n^2} \sum_{i \neq j} h^D(Z_i, Z_j), 
$$
where 
\begin{align*}
h^D(Z_1, Z_2) & = g_t(Z_1, x)g_t(Z_2, x) - g_t(Z_1, x)\bbE_{Z_2}[g_t(Z_2, x)] \\
& \qquad - \bbE_{Z_1}(g_t(Z_1, x))g_t(Z_2, x) + \bbE_{Z_1}[g_t(Z_1, x)]\bbE_{Z_2}[g_t(Z_2, x)] \,.
\end{align*}
As $g_t$ is bounded by $\|\hat p_0\|_\infty$, it is immediate that $h^D$ is upper bounded by $4\|\hat p_0\|_\infty^2$. Another application of Corollary 3.4 of \cite{gine2000exponential} yields: 
\begin{align*}
    \bbP\left(\left|\frac{1}{n^2} \sum_{i \neq j} h^D(Z_i, Z_j)\right| \ge 4\|\hat p_0\|_\infty^2x\right) \le K\exp\left(-\frac{1}{K}\min\left\{n^2x^2, nx, nx^{2/3}, nx^{1/2}\right\}\right)
\end{align*}
for some universal constant $K$. Taking $x = (C\log{n})/n$ (for some large enough constant $C$), we have: 
\begin{align*}
      & \bbP\left(\left|S_4\right| \ge \frac{4C\|\hat p_0\|_\infty^2\log{n}}{n}\right) \\
      & =  \bbP\left(\left|\frac{1}{n^2} \sum_{i \neq j} h^D(Z_i, Z_j)\right| \ge \frac{4C\|\hat p_0\|_\infty^2\log{n}}{n}\right) \\
     &  \le K\exp\left(-\frac{1}{K}\min\left\{(C\log{n})^2, (C\log{n}), n^{1/3}(C\log{n})^{2/3}, \sqrt{Cn\log{n}}\right\}\right) \\
     &  \le  K\exp\left(-\frac{C\log{n}}{K}\right) \hspace{.1in} [\forall \ \text{large } n]\,.
\end{align*}
For $S_3$, we use the fact that $\|g\|_\infty \le \|\hat p_0\|_\infty$, which yields $S_3 \le 2\|\hat p_0\|_\infty^2/n$. Hence, we conclude with probability $\ge 1 - K\exp\left(-(C\log{n})/K\right)$: 
$$
\int_\Omega(\hat D(x, t) - \hat D^{\rm emp}(x, t))^2 \ p_t(x) dx \le \frac{2\|\hat p_0\|_\infty^2}{n} + \frac{4C\|\hat p_0\|_\infty^2\log{n}}{n} \le \frac{5C\|\hat p_0\|_\infty^2\log{n}}{n} \,,
$$
where the last inequality holds for all large $n$. Combining the bounds on numerator and denominator, we can finally conclude that with probability $\ge 1 - K\exp\left(-(C\log{n})/K\right)$: 
\begin{align*}
    &  \int_{\Omega} \left\|\hat{s}(x, t) - \tilde s(x, t)\right\|^2 p_t(x) dx \\
    & \le \frac{10Cd\|\hat p_0\|_\infty^3\|\hat p_0^{-1}\|_\infty^2\log{n}}{n\sigma^2_t} + \frac{66(Cd\|\hat p_0\|^2_\infty \|\hat p_0^{-1}\|_\infty\log{n})^2}{n\sigma^2_t} \\
    & \le \frac{KC^2 d^2 (\log{n})^2\|\hat p_0\|_\infty^4\|\hat p^{-1}_0\|_\infty^2}{n\sigma^2_t} \,.
\end{align*}
This completes the proof.

\end{document}